\let\orgdescriptionlabel\descriptionlabel
\renewcommand*{\descriptionlabel}[1]{%
  \let\orglabel\label
  \let\label\@gobble
  \phantomsection
  \edef\@currentlabel{#1}%
  \let\label\orglabel
  \orgdescriptionlabel{#1}%
}
\newcommand{\patrick}[1]{\todo[inline,color=orange]{Patrick: #1}}
\newcommand{\added}[1]{#1}
\newcommand{\sig}[1]{\textsf{sig}(#1)}
\newcommand{\tp}{\textsf{tp}}
\newcommand{\size}[1]{\lVert #1 \rVert}
\newcommand{\rd}{\text{rd}}
\newcommand{\nb}[1]{\textcolor{red}{!}\marginpar{\textcolor{red}{#1}}}
\newcommand{\hide}[1]{}
\newcommand{\tup}[1]{\langle#1\rangle}
\newcommand{\nc}{\,\mid\!\sim\,}
\newcommand{\nccw}{\,\mid\!\sim_{c}}
\newcommand{\sign}{{\Sigma}}
\newcommand{\setm}{\text{\textbackslash}}
\newcommand{\htmodels}{\models_{\sf HT}}
\newcommand{\HT}{\mathcal{HT}}
\newcommand{\classP}{\mathcal{C}}
\newcommand{\la}{\leftarrow}
\newcommand{\nf}{not\,}
\newcommand{\head}[1]{\ensuremath{\mathit{H}}}
\newcommand{\pbody}[1]{\ensuremath{\mathit{B^+}}}
\newcommand{\nbody}[1]{\ensuremath{\mathit{B^-}}}
\newcommand{\nnbody}[1]{\ensuremath{\mathit{B^{--}}}}
\newcommand{\rhead}[1]{\ensuremath{\mathit{head}(#1)}}
\newcommand{\rbody}[1]{\ensuremath{\mathit{body}(#1)}}
\newcommand{\vsim}{\mathrel{\scalebox{1}[1.5]{$\shortmid$}\mkern-3.1mu\raisebox{0.15ex}{$\sim$}}}
\newcommand{\mtuple}[1]{\ensuremath{\langle#1\rangle}}
\newcommand{\pW}{{\bf (W)}}
\newcommand{\pPP}{{\bf (PP)}}
\newcommand{\pCP}{{\bf (CP)}}
\newcommand{\pSP}{{\bf (SP)}}
\newcommand{\f}[2]{\ensuremath{\mathsf{f}(#1,#2)}}
\newcommand{\fgt}{\ensuremath{\mathsf{f}}}
\newcommand{\classF}{\ensuremath{\mathsf{F}}}
\newcommand{\as}[1]{\ensuremath{\mathcal{AS}(#1)}}
\newcommand{\FS}{S}
\newcommand{\htF}{{\sf HT}}
\newcommand{\smF}{{\sf SM}}
\newcommand{\rF}{\sf R}
\newcommand{\mF}{\sf M}
\newcommand{\up}{\sf UP}
\newcommand{\op}{{\mathsf{f}}}
\newcommand{\PiP}[1]{{{\Pi}_{#1}^{P}}}
\newcommand{\SO}{\text{SOL}\xspace}
\newcommand{\Lethe}{\textsc{Lethe}\xspace}
\newcommand{\Fame}{\textsc{Fame}\xspace}
\title{Interpolation in Knowledge Representation}
\author{Jean Christoph Jung}{TU Dortmund University, Germany}{jean.jung@tu-dortmund.de}{https://orcid.org/0000-0002-4159-2255}{}{}
\author{Patrick Koopmann}{Vrije Universiteit Amsterdam, Netherlands}{p.k.koopmann@vu.nl}{https://orcid.org/0000-0001-5999-2583}{}{}
\author{Matthias Knorr}{Universidade Nova de Lisboa, Portugal}{mkn@fct.unl.pt}{https://orcid.org/0000-0003-1826-1498}{}{}
\authorrunning{J.C. Jung, P. Koopmann, M. Knorr}
\begin{document}
\maketitle              

\begin{abstract}
 Craig interpolation and uniform interpolation have many applications in knowledge representation,
 including explainability, forgetting, modularization and reuse, and even learning.
 At the same time, many relevant knowledge representation formalisms do in general not have Craig or uniform
 interpolation, and computing interpolants in practice is challenging. We have a closer look at two prominent
 knowledge representation formalisms, description logics and logic programming, and discuss
 theoretical results and practical methods for computing interpolants.
\end{abstract}

\tableofcontents

\newcommand{\blue}[1]{\textcolor{blue}{#1}}

\section{Introduction}\label{sec:intro}

The field of Knowledge Representation and Reasoning (KR) deals with
the explicit representation and manipulation of knowledge in a format that is both \emph{machine-processable} and \emph{human-readable}, and plays a central role in symbolic AI~\cite{HandbookKR,DBLP:books/daglib/0023546}.
%
%
AI systems using KR 
are referred to as \emph{knowledge-based systems}. Depending on the application, they use different \emph{knowledge representation formalisms} satisfying the mentioned two conditions.
Since interpolation is an inherently logical notion, in this chapter, we will
concentrate on logic-based formalisms, and neglect, e.g., graph-based KR formalisms such as argumentation frameworks~\cite{ARGUMENTATION_FRAMEWORKS} or probabilistic graphical models~\cite{PGMs}. Knowledge-based systems then implement inference procedures tailored to the underlying logic and use them to make decisions.

Let us illustrate the idea underlying KR with a simple example, formulated in terms of first-order logic. The following statements could be part of a formalization of our knowledge about cars that (1) every car has a prime mover, (2) the prime mover can be a diesel engine, a gas engine or an electric motor, and (3) every car with an electric motor is an electric car:
\begin{align}
  \forall x\,& \left(\text{Car}(x)\to \exists y\, (\text{hasPart}(x,y)\wedge \text{PrimeMover}(y))\right)\\
  \forall x\,& \left(\text{PrimeMover}(x)\to (\text{DieselEngine}(x)\vee \text{GasEngine}(x)\vee \text{ElectricMotor}(x)) \right)\\
\forall x\,& \left(\text{Car}(x)\land\exists y\, (\text{hasPart}(x,y)\wedge \text{ElectricMotor}(y))\to \text{ElectricCar}(x)\right)
\end{align}
If we now find out that a car $c$ has an electric motor, we can feed this knowledge and additional facts
\[\text{Car}(c),\quad\text{hasPart}(c,e),\quad\text{ElectricMotor}(e)\]
into the inference procedure, which allows us to derive that $c$ is an electrical car.

Interpolation plays a central role in many applications of KR. To ease the discussion, we recall the definition of Craig and uniform interpolants, formulated here for first-order logic (FO) with entailment relation $\models$. Here, and in the remainder of the introduction, a \emph{signature} is a set of non-logical symbols.


\begin{definition}\label{def:general-interpolants}
    A \emph{Craig interpolant for}
FO-formulae $\phi,\psi\in$ is a formula $\chi\in\text{FO}$ such that
 \begin{itemize}
  \item $\chi$ uses only non-logical symbols occurring in both $\phi$ and $\psi$, and
  \item $\phi\models\chi$, $\chi\models\psi$.
 \end{itemize}
 Let $\Sigma$ be signature.
 A \emph{uniform $\Sigma$-interpolant for $\phi$} is an FO-formula $\phi_\Sigma$ such that: 
 \begin{itemize}
  \item $\phi\models\phi_\Sigma$;
  \item $\phi_\Sigma$ uses only non-logical symbols from $\Sigma$;
  \item for any FO-formula $\psi$, 
  if $\phi\models\psi$ and $\psi$ uses only non-logical symbols from $\Sigma$, then $\phi_\Sigma\models\psi$.
 \end{itemize}
\end{definition}



In applications, the formula $\phi$ from Definition~\ref{def:general-interpolants} will typically be the modeled knowledge, which is why we will denote it with \Kmc, for \emph{knowledge base}. Alghough knowledge bases come in different forms for different KR formalisms, we refrain from specifying the exact formalism in the applications below, as we only want to convey the general ideas. We start with applications of uniform interpolants.

\smallskip
\textbf{Forgetting and Information Hiding.} The most classical connection between interpolation and KR might be \emph{forgetting}, originally introduced for first-order logic in a seminal paper by Lin and Reiter~\cite{Reiter80}. The idea of forgetting is that there are non-logical symbols that are considered ``obsolete'' and should be ``forgotten'' from the current knowledge in a way that preserves as much information as possible. There are different definitions of forgetting that can be found in the KR literature, see~\cite{FORGETTING_SURVEY} for a survey, but the most common one is equivalent to uniform interpolation: forgetting a symbol $X$ from a knowledge base $\Kmc$ corresponds to computing a uniform interpolant of \Kmc for the signature $\sig{\Kmc}\setminus\{X\}$. 
Forgetting is also useful for applications that require some form of \emph{information hiding}, such as knowledge publication and exchange.
For instance, if we were to publish $\Kmc$ without disclosing anything about a private signature $\Sigma$, then a uniform $(\sig{\Kmc}\setminus\Sigma)$-interpolant of \Kmc would be a good candidate, as is contains the \enquote{maximal} information contained in \Kmc about the remaining signature~\cite{DBLP:journals/semweb/Grau10}.


\smallskip
\textbf{Abduction.} Abduction is a classical problem in KR that can be formalized as follows: given a knowledge base $\Kmc$ and an \emph{observation} $\psi$ such that $\Kmc\not\models\psi$, we want to find a \emph{hypothesis} $\Hmc$ satisfying $\Kmc\wedge\Hmc\models\psi$. The hypothesis $\Hmc$ can be seen as a possible explanation for some unexpected phenomenon $\phi$, a diagnosis, or an indication of how to complete an incomplete knowledge base. To avoid trivial solutions, one typically formulates additional requirements on the hypothesis such as certain minimality conditions or a signature $\Sigma$ from which the hypothesis ought to be constructed. It is immediate from the definitions that the negation of a $\Sigma$-uniform interpolant of $\Kmc\wedge\neg \psi$ is a \emph{logically weakest} hypothesis, that is, it is entailed by all other alternative hypotheses over signature~$\Sigma$. This connection has been exploited in several abduction systems~\cite{SOQE_BOOK,ABDUCTION_PINTO,MY_ABDUCTION}.

\smallskip \textbf{Modularisation and Reuse.}
Knowledge bases often contain tens or even hundreds of thousands of statements
(e.g.~\cite{donnelly2006snomed,konopka2015biomedical})
which can make it challenging to work with them. At the same time, it could be that for a particular application, only a fragment of the knowledge base is actually relevant. Uniform interpolants computed for a restricted, user-given signature can provide a more focussed view on the knowledge that is relevant to the application at hand, and can be used as a replacement of the original knowledge base. But uniform interpolants can also be used to determine whether a subset of the knowledge base preserves all entailments over a relevant signature, which can be used to \emph{modularize} the knowledge base into different components based on a signature.
%

\smallskip\textbf{Analysis and Summarization.}
Uniform interpolants for small signatures $\Sigma$ can make hidden relations between the symbols in $\Sigma$ explicit, and thus help knowledge engineers in analyzing how different symbols in the knowledge base relate to each other. We can also use uniform interpolation to summarize large knowledge bases, for instance by choosing for~$\Sigma$ a set of symbols that is central to the knowledge base (e.g. most frequently used). The interpolant then gives a high-level perspective on the content of the knowledge base, which can be seen as a summary. 


\bigskip
Craig interpolants have a range of applications for KR as well.

\smallskip
\textbf{Explaining Entailments.}
One of the main benefits of KR is that the use of explicit representations of knowledge with a well-defined semantics leads to a higher transparency and understandability of the knowledge-based system. Still, for large knowledge bases and expressive KR formalisms, inferences may not be straightforward, which has motivated many researchers to investigate methods for explaining logical inferences for KR systems~\cite{DBLP:conf/ijcai/McGuinnessB95,DBLP:conf/jelia/Schlobach04,DBLP:journals/tplp/FandinnoS19,DBLP:conf/lpnmr/TrieuSB22,DBLP:conf/kr/AlrabbaaBFHKKKK24}.
Craig interpolants are one way of explaining logical inferences: in particular, to explain an entailment $\Kmc\models\psi$, we can provide a Craig interpolant for $\Kmc,\psi$, which highlights the reason for the entailment in the common signature of $\Kmc$ and $\psi$. Notice that the existence of a Craig interpolant for $\Kmc,\psi$ is a weaker requirement than the existence of a uniform interpolant for~\Kmc, and indeed Craig interpolants can be computed in more cases. 

\smallskip
\textbf{Explicit Definitions.} It is well-known that Craig interpolation is closely related to the notion of \emph{Beth definability}. In the context of KR, this connection is particularly relevant for the problem of finding \emph{explicit definitions} of predicates which are \emph{implicitly defined}. An explicit definition of a unary predicate $A$ in knowledge base \Kmc is a formula $\chi(x)$ not mentioning $A$ with $\Kmc\models \forall x\,(A(x)\leftrightarrow \chi(x))$. That is, $\chi$ provides a direct description of the \emph{meaning} of $A$, and we might add this definition to \Kmc. A special kind of explicit definitions are \emph{referring expressions}. Referring expressions are a concept originating in linguistics and are phrases that refer to specific objects by providing a unique description for them, as in ``the current president of the USA'' or ``the capital of France''. In a knowledge base \Kmc, a referring expression for a constant $c$ can be viewed as an explicit definition of $x=c$ over \Kmc.
The use of referring expressions in KR and data management has been advocated for instance in~\cite{DBLP:conf/inlg/ArecesKS08,DBLP:journals/coling/KrahmerD12,DBLP:conf/kr/BorgidaTW16,ArtEtAl21}.

\smallskip \textbf{Separating Examples and Learning Logical Formulas.} Consider the 
following separation problem. Suppose we are given two sets $P$ and
$N$ of constants occurring in a knowledge base $\Kmc$ that respectively represent \emph{positive} and \emph{negative} examples. We are then looking for a
logical formula $\chi(x)$ that \emph{separates} $P$ from $N$ over \Kmc in the sense that $\Kmc\models\chi(a)$ for all $a\in P$ and
$\Kmc\models\neg\chi(b)$ for all $b\in N$. Such separation problems have
been investigated thoroughly in
KR in the context of learning logical formulas~\cite{DBLP:journals/ai/JungLPW22,DBLP:conf/ijcai/FunkJLPW19}. Recently, it
has been observed that in relevant cases, the problem of finding a separating
formula is interreducible with the problem of computing Craig
interpolants~\cite{DBLP:journals/tocl/ArtaleJMOW23}. 


\bigskip

Motivated by these applications, there has been an enormous interest in studying interpolation in different formalisms.  
The baseline logical formalisms used in KR are certainly the classical \emph{propositional logic} and \emph{first-order logic}, which have the benefit of a clean and well-understood semantics. However,
for many realistic applications, neither of them is well-suited, since they are either limited in their expressivity, or
do not admit time-efficient reasoning.
As a consequence, a wealth of other formalisms have been introduced that are tailored towards specific applications. Examples include description logics, modal and temporal logics, logic programs, default logic, existential rules, planning languages, and many more~\cite{HandbookKR}.
Interpolation is useful in many of these formalisms, but it has not been equally investigated in all of them.
Interpolation in propositional logic is covered in \refchapter{chapter:propositional}, interpolation in modal logic is covered in \refchapter{chapter:sixproofs}, and interpolation in first-order logic is covered in \refchapter{chapter:firstorder} and \refchapter{chapter:automated}. The majority of research on interpolation in the remaining formalisms has been conducted in description logics and logic programming, which is why we concentrate on these two formalisms in this chapter:
\begin{itemize}
 \item \emph{Description Logics (DLs)} are a family of KR languages commonly used to formalize ontologies~\cite{DL-Textbook}. Here, an \emph{ontology} is a formal specification of the concepts and their relations in some domain of interest such as biology or medicine. An example ontology (in first-order logic) is provided in Equations~(1)--(3) above.
 Ontologies are important in information science, since they can be used by different parties to share knowledge about that domain, which has been exploited, for example, in biology, medicine, and artificial intelligence~\cite{konopka2015biomedical,donnelly2006snomed,DBLP:phd/hal/Gandon02,DBLP:conf/aaai/Ribeiro021}.
 DLs are also highly relevant in the \emph{Semantic Web}~\cite{DBLP:journals/cacm/Horrocks08}. Indeed, they form the logical basis of the W3C standard \emph{web ontology language} OWL~\cite{DBLP:journals/ws/HorrocksPH03}. Typically, DLs are fragments of first-order logic with decidable inference problems, which makes them suitable for the mentioned applications.

 \item \emph{Logic programming} is concerned with the use of \emph{logical rules} to represent knowledge. One of the main features is that inference is \emph{nonmonotonic}, that is, extending the  knowledge base can invalidate previously made inferences. This is in contrast to \emph{monotonic} logics such as first-order logic (and hence DLs as well). 
 While logic programming has been studied since the 1960s, it is still relevant today, in particular in the form of \emph{answer set programming} (ASP), which is a declarative approach to modeling and solving combinatorial problems~\cite{ASPPractice12}. It plays a central role in many applications, for instance for solving configuration problems.


%
\end{itemize}
It is particularly remarkable how much has been done on the topic of interpolation in DLs. We conjecture that this is due to the fact that the main applications of DLs are ontologies describing the conceptualization of a domain of discourse, a task that is ultimately linked with the signature, which is also central to interpolation. In the chapter, this will be reflected by the fact that we will mainly focus on interpolation for DLs, and only briefly discuss interpolation for logic programming. A peculiarity in the literature on interpolation in DLs, which has to do with the applications, is that uniform interpolation has been investigated mostly for the case where the knowledge base is an ontology, and Craig interpolation for DLs has been mostly investigated for concept descriptions, in which case the ontology is treated as background theory.

Looking at the applications, it appears that the central problem to be investigated is the \emph{computation} of Craig and uniform interpolants. Unfortunately, their existence is by no means guaranteed in the relevant logics, so another focus of the chapter will be the corresponding \emph{existence} problems, and the practically relevant question of what to do if no (Craig or uniform) interpolant exists. Given the technical similarity of description logics and modal logics, we will discuss the concrete relation (in terms of interpolation) when appropriate.

The chapter is structured as follows. In Section~\ref{sec:description-logics}, we recall the necessary foundations of description logics. In Section~\ref{sec:ui-global}, we
discuss uniform interpolation for the case where the knowledge base is a description logic ontology, looking at both theoretical results and practical methods for computing interpolants.
In Section~\ref{sec:interpolation-concept-level}, we discuss results on interpolation and Beth definability for DL concept descriptions from a more theoretical perspective.
In Section~\ref{sec:asp}, we discuss the role of Craig interpolants in logic programming, as well as uniform interpolation and forgetting.
Finally, in Section~\ref{sec:conclusion}, we conclude the chapter and provide an outlook for future directions. 

\hide{
{\color{red}attempt for story line}

Interpolation plays a central role in field of knowlede representation
and reasoning, which is a classic part of logic-based artificial
intelligence. In this chapter we will focus on description logics
(DLs) as
\emph{the} most-standard formalism to describe ontological knowledge
about domains of interest. In fact, ``standard'' is actually the right
word, because DLS have been standardized by the W3C committee.
\textbf{Add stuff} We will
briefly discuss interpolation in other subfields of KR in the end of
the chapter. \textbf{explain what is an ontology on the intuitive
level and different use cases: only ontological reasoning vs in a data
context}

The importance of Craig interpolation in the context of DLs comes from
the fact that many tasks that naturally come up when working with
ontologies are related to interpolants or the tightly related explicit
definitions. Sometimes even a very strong form of the Craig
interpolation property will be needed: uniform interpolation. We
briefly mention here the most important applications. Formalizations
and more details will be given in the respective sections.\textbf{Now one paragraph for every ``application''}
}

\hide{
Intended structure:
\begin{itemize}
 \item Introduce and motivate KR/logic-based AI/symbolic AI
 \item Mention main formalisms (DL, ASP, classical logics)
 \item Short recall what is CI and UI (logic-independent/for FOL)
 \item Discuss forgetting as central KR applicaiton of UI
 \item Overview of application (modularity, information hiding, knowledge exchange, explanations, abduction, )
 \item Discuss focus description logics (much literature, many applications clear, very relevant formalism)
 \item Overview over relevant questions (including concept level vs. ontology level)
 \item Overview over chapter
\end{itemize}
}

\bigskip

\hide{
\paragraph{Overview of the chapter.}

As we announced before different forms of interpolation, but also
explicit definitions are important. This induces the sections of the
chapter. In the next section, we will introduce the relevant DLs. Then
we will discuss classical Craig interpolation and Beth definability.
After that, we will give an overview of uniform interpolation.
Finally, as announced above, we will survey results on interpolation
in other subfields of KR. 

\begin{itemize}
    \item Discuss applications including other logical formalisms (classical logics and modal logics)
    \begin{itemize}
        \item Example: forgetting, communication in multi-agent systems
    \end{itemize}
    \item CI and UI for classical and modal logics discussed in other chapters (make references)
    \item nowadays, most relevant KR formalisms are DLs and ASP - hence we focus on those
\end{itemize}
}

\section{Foundations of Description Logics}\label{sec:description-logics}

We first introduce the syntax and semantics of the basic description
logic~\ALC, discuss the extensions/restrictions relevant for the chapter, and introduce some model theory. We refer the reader to~\cite{DL-Textbook} for a comprehensive introduction to description logics.
Let \NC, \NR, and~\NI be mutually disjoint and countably infinite sets
of \emph{concept}, \emph{role}, and \emph{individual names}.
An \emph{\ALC concept} is defined according to the
syntax rule
\[
C, D ::= \top \mid A \mid \neg C \mid C \sqcap D \mid C\sqcup D\mid \forall r.C \mid \exists r.C
\]
where $A$ ranges over concept names and $r$ over role names.
We use the abbreviations $C\to D$ and $C\leftrightarrow D$ for
$\neg C\sqcup D$, and
$(C\to D)\sqcap (D\to C)$, respectively.
An \emph{\ALC concept inclusion (\ALC CI)}
takes the form $C \sqsubseteq D$ for \ALC concepts $C$ and $D$.
An \emph{$\ALC$ ontology} is a finite set of \ALC CIs. We drop the
reference to \ALC if no confusion can arise.

The semantics is defined in terms of \emph{interpretations}
$\Imc=(\Delta^\Imc,\cdot^\Imc)$,
where $\Delta^{\mathcal{I}}$ is a non-empty set, called \emph{domain} of $\mathcal{I}$,
and $\cdot^{\mathcal{I}}$ is a function mapping
every $A \in \NC$ to a subset $A^\Imc\subseteq \Delta^{\Imc}$,
every $r\in\NR$ to a subset $r^\Imc\subseteq \Delta^{\Imc} \times
\Delta^{\Imc}$,
and every $a \in\NI$
to an element in $\Delta^{\Imc}$.
Moreover, the \emph{extension $C^{\mathcal{I}}$ of a concept
$C$ in $\mathcal{I}$} is defined as follows, where $r$ ranges over
role names:
\begin{align*}
	\top^{\mathcal{I}} & = \Delta^{\mathcal{I}}, \\
	\neg C^{\mathcal{I}} & = \Delta^{\Imc} \setminus C^{\mathcal{I}}, \\
	(C \sqcap D)^{\mathcal{I}} & = C^{\mathcal{I}} \cap D^{\mathcal{I}}, \\
	(C \sqcup D)^{\mathcal{I}} & = C^{\mathcal{I}} \cup D^{\mathcal{I}}, \\
	(\exists r.C)^{\mathcal{I}} & = \{d \in \Delta^{\Imc} \mid \text{there exists $(d,e) \in r^\Imc$ with $e \in C^\Imc$ }\}, \\
	(\forall r.C)^{\mathcal{I}} & = \{d \in \Delta^{\Imc} \mid \text{for all $(d,e) \in r^\Imc$ we have $e \in C^\Imc$ }\}.
\end{align*}
Let \Omc be an ontology and \Imc be an interpretation. 
Then \Imc \emph{satisfies} a CI $C \sqsubseteq D$ if $C^\Imc \subseteq
D^\Imc$, and \Imc is a \emph{model} of \Omc if it satisfies all CIs
in~\Omc. The ontology $\Omc$ \emph{entails} a CI $C\sqsubseteq D$,
in symbols $\Omc\models C\sqsubseteq D$, if every
model of $\Omc$ satisfies $C\sqsubseteq D$; it \emph{entails} another ontology $\Omc'$, written $\Omc\models\Omc'$ if $\Omc\models C\sqsubseteq D$ for every CI $C\sqsubseteq D\in \Omc'$. In case $\Omc\models C\sqsubseteq D$, we also say that \emph{$C$ is subsumed by $D$ under \Omc}. 
We drop $\Omc$ if it is empty and write $\models C \sqsubseteq D$ for $\emptyset\models C \sqsubseteq D$.


A \emph{signature} $\Sigma$ is a finite set of concept, role, and individual
names, uniformly referred to as \emph{symbols}.
We use $\text{sig}(X)$ to denote the set of symbols used
in any syntactic object $X$ such as a concept or an ontology.  An
\emph{$\ALC(\Sigma)$ concept} is an $\ALC$ concept $C$ with
$\text{sig}(C)\subseteq \Sigma$.
The \emph{size} of a
(finite) syntactic object $X$, denoted $\size{X}$, is the number of
symbols needed to represent it as a word.
The \emph{role depth} $\rd(C)$ of a concept $C$ is the maximal nesting depth of existential and universal restrictions in $C$.

\begin{table}[t]
  \centering
  \begin{tabular}{c|c|c}
    acronym & syntax & semantics \\[1mm]
    \hline
    \Imc & $\exists r^-.C$ & $\{ d\in \Delta^\Imc \mid \text{there
    exists $e\in C^\Imc:(e,d)\in r^\Imc$}\}$ \\[1mm]
    \Omc & $\{a\}$ & $\{a^\Imc\}$ \\[1mm]
    %
	\Fmc & $(\leq 1\ r)$ & $\{d\in \Delta^\Imc \mid \text{there is at most one $e\in \Delta^\Imc$ with $(d,e)\in r^\Imc$}\}$ \\[1mm]
	\Smc & $\textsf{trans}(r)$ & $r^\Imc$ is transitive \\[1mm]
    \hline\hline
    \Hmc & $r\sqsubseteq s$ & $r^\Imc\subseteq
    s^\Imc$ \\[1mm]
  \end{tabular}
  \caption{Extensions of \ALC}
  \label{tab:extensions}
\end{table}

\textbf{Extensions and Restrictions of \ALC.} Depending on the application, one
requires more or less expressive power to describe the domain knowledge,
which is why different extensions and restrictions of \ALC have been
investigated. The most prominent restriction is certainly \EL, in which only the
concept constructors $\top$, $A$, $C\sqcap D$, $\exists r.C$ are allowed (and thus no
negation $\neg C$, disjunction $C\sqcup D$, and universal restriction $\forall
r.C$)~\cite{DBLP:conf/ijcai/BaaderBL05}. We also consider several important
extensions, see~\cite{DL-Textbook} for more details on these. 
\emph{Inverse roles} (abbreviated with the acronym $\Imc$) take the form $r^-$
for a role name $r$. They are interpreted as the inverse of the interpretation
of the role, that is, $(r^-)^\Imc = \{(e,d)\mid (d,e)\in r^\Imc\}$ and can be
used to travel edges in the converse direction.
\emph{Nominals} (acronym \Omc) take the form $\{a\}$ for individual names
$a\in\NI$ and are used to describe singleton concepts.
\emph{(Local) functionality restrictions} (acronym \Fmc)
take the form $(\leq 1\ r)$ and describe the set of all individuals that have at most one $r$-successor.
\emph{Role hierarchies} (acronym \Hmc)
take the form $r\sqsubseteq s$ and can be used to describe relations between
roles.
Finally, some of the roles may be declared as \emph{transitive } (acronym \Smc),
which means that they have to be interpreted as transitive relations.
Table~\ref{tab:extensions} displays an overview of
the extensions and their semantics. The names of the extensions are obtained in a canonical way by appending the acronym of the additional constructor, e.g., \ALCHI is the extension of \ALC
with role hierarchies and inverse roles. A bit of care has to be taken when combining certain features, e.g., transitive roles with functionality restrictions; we adopt the standard restrictions known from the literature~\cite{DL-Textbook}.
%
\newcommand{\dl}[1]{\textsf{#1}}

\begin{example}
 The example from the introduction can be represented as follows as an \ALC ontology. The first and last axioms
 are also in \EL.
\begin{align*}
 &\dl{Car}\sqsubseteq\exists\dl{hasPart}.\dl{PrimeMover}\qquad
 \dl{PrimeMover}\sqsubseteq\dl{DieselEngine}\sqcup\dl{GasEngine}\sqcup\dl{ElectricMotor} \\
 &\dl{Car}\sqcap\exists\dl{hasPart}.\dl{ElectricMotor}\sqsubseteq\dl{ElectricCar}
\end{align*}
In $\mathcal{ALCFIO}$, we can additionally express that every engine belongs to at most one car, and that a German car is a
car built in Germany.
\begin{align*}
 \dl{PrimeMover}\sqsubseteq(\leq 1\ \dl{hasPart}^-) \qquad
 \dl{GermanCar}\equiv\dl{Car}\sqcap\exists\dl{madeIn}.\{\dl{Germany}\}
\end{align*}

\vspace{-.8cm}
\lipicsEnd 
\end{example}

\smallskip \textbf{Model Theory.}
We next recall the model-theoretic notion of a bisimulation, which is very useful in the context of interpolation in DLs.  
Let $\Sigma$ be a signature and $\Imc_1,\Imc_2$ be interpretations. A relation $Z \subseteq \Delta^{\Imc}\times \Delta^{\Jmc}$ is an
\emph{$\mathcal{ALC}(\Sigma)$-bisimulation} between $\Imc_1$ and $\Imc_2$ if
the following conditions are satisfied for all $(d,e)\in Z$: 
\begin{description}
	\item[Atom] for all concept names $A\in \Sigma$: $d\in A^{\Imc}$ iff $e\in A^{\Jmc}$,
	\item[Back] for all role names $r\in \Sigma$ and all $(d,d')\in r^{\Imc}$, there is $(e,e')\in r^{\Jmc}$ such that $(d',e')\in Z$,

	\item[Forth] for all role names $r\in \Sigma$ and all $(e,e')\in r^{\Jmc}$, there is $(d,d')\in r^{\Imc}$ such that $(d',e')\in Z$.
\end{description}
A \emph{pointed interpretation} is a pair $\Imc,d$ with \Imc an interpretation
and $d\in \Delta^{\Imc}$. For pointed interpretations $\Imc_1,d_1$ and $\Imc_2,d_2$, we write $\Imc_1,d_1
\sim_{\mathcal{ALC},\Sigma}\Imc_1,e_1$ in case $\Imc_1,d_1$ and $\Imc_2,d_2$ are
\emph{$\mathcal{ALC}(\Sigma)$-bisimilar}, that is, if there is an
$\mathcal{ALC}(\Sigma)$-bisimulation $Z$ between $\Imc_1$ and $\Imc_2$ with $(d_1,d_2)\in Z$. 
Bisimulations are a powerful tool since they capture the expressive power of \ALC. Indeed, if $\Imc_1,d_1\sim_{\mathcal{ALC},\Sigma}\Imc_2,d_2$, then $d_1$ and $d_2$ satisfy the same $\ALC(\Sigma)$ concepts, that is, $d_1\in C^{\Imc_1}$ iff $d_2\in C^{\Imc_2}$, for all $\ALC(\Sigma)$ concepts.
The converse direction does not hold in general, but in relevant cases (for example, when $\Imc_1$ and $\Imc_2$ are finite). We refer the interested reader to~\cite{goranko20075} for a more detailed account on model theory. 

%
For the purpose of this chapter, it is worth mentioning that for each extension/restriction \Lmc of \ALC there is a corresponding notion of bisimulation, denoted $\sim_{\Lmc,\Sigma}$, that captures the expressive power of \Lmc. As a concrete example, an $\ALCO(\Sigma)$-bisimulation $Z$ between $\Imc_1$ and $\Imc_2$ is an $\ALC(\Sigma)$-bisimulation that additionally satisfies the following for all $(d,e)\in Z$:
\begin{description}
	\item[AtomI] for all individual names $a\in \Sigma$: $d=a^{\Imc_1}$ iff $e=a^{\Imc_2}$.
\end{description}

\section{Uniform Interpolation for Description Logic Ontologies}\label{sec:ui-global}

\newcommand{\Nui}{\textsc{Nui}\xspace}

The central notion we are concerned with in this section is the following adaptation of Definition~\ref{def:general-interpolants} to description logic ontologies.


\begin{definition}[Uniform interpolant]\label{def:uniformIntDL}
  Let $\Lmc$ be a DL, $\Omc$ an $\Lmc$ ontology, and $\Sigma$ a signature. Then, an $\Lmc$ ontology $\Omc^\Sigma$ is called
  \emph{uniform $\Lmc(\Sigma)$-interpolant} of $\Omc$ if
  \begin{enumerate}
    \item $\Omc\models\Omc^\Sigma$,
    \item $\sig{\Omc^\Sigma}\subseteq\Sigma$,
    \item\label{itm:ui-insep} for every $\Lmc$ CI $\alpha$ such that $\sig{\alpha}\subseteq\Sigma$ and $\Omc\models\alpha$, also $\Omc^\Sigma\models\alpha$.
  \end{enumerate}
\end{definition}
Conditions~1 and~3 imply that $\Omc$ and a uniform $\Lmc(\Sigma)$-interpolant $\Omc^\Sigma$ of $\Omc$ entail precisely the same $\Lmc(\Sigma)$ concept inclusions. This latter condition is called $\Lmc(\Sigma)$-inseparability and of independent interest.
\begin{definition}[Inseparability]\label{def:inseparability}
  Let $\Omc_1$, $\Omc_2$ be \Lmc ontologies and $\Sigma$ a signature. Then, $\Omc_1$ and $\Omc_2$ are
  \emph{$\Lmc(\Sigma)$-inseparable}, in symbols $\Omc_1\equiv_\Sigma^\Lmc\Omc_2$, if 
  $\Omc_1\models\alpha$ iff $\Omc_2\models\alpha$
  for every $\Lmc(\Sigma)$ CI $\alpha$.
\end{definition}

It is immediate from Definition~\ref{def:uniformIntDL} that uniform interpolants are unique modulo logical equivalence, which is why we often speak of \emph{the} uniform $\Lmc(\Sigma)$-interpolant of $\Omc$. In modal logic terminology, this form of interpolation could be classified as \emph{turnstile} interpolation for global consequence. It is \emph{uniform} in the sense that the interpolant has to work for all $\Lmc(\Sigma)$ ontologies that are a consequence of $\Omc$; we refer the reader to~\refchapter{chapter:uniform} for more on uniform interpolation.
%
To illustrate the notion, consider the following example~\cite[Example~2]{FOUNDATIONS_ALC_UI}.
\begin{example}
  Consider the ontology \Omc consisting of the following CIs: 
  \begin{align}
    \textsf{Uni} & \sqsubseteq \exists\textsf{hasEnrolled}.\textsf{Grad} \sqcap \exists\textsf{hasEnrolled}.\textsf{Undergrad}\\
    \textsf{Grad} & \sqsubseteq \neg \textsf{Undergrad} \\
    \textsf{Uni} & \sqsubseteq \neg \textsf{Grad} \\
    \textsf{Uni} & \sqsubseteq \neg \textsf{Undergrad}\label{eq:uni-example}
  \end{align}
  Then, it can be verified that the ontology $\Omc'$ consisting of CI~\eqref{eq:uni-example} and the CI 
  \[ \textsf{Uni} \sqsubseteq \exists\textsf{hasEnrolled}.\textsf{Undergrad} \sqcap \exists\textsf{hasEnrolled}.(\neg \textsf{UnderGrad}\sqcap\neg \textsf{Uni}) \]
  is a uniform $\ALC(\Sigma)$-interpolant of \Omc for $\Sigma=\{\textsf{Uni},\textsf{hasEnrolled},\textsf{Undergrad}\}$. Sometimes $\Omc'$ is called the result of \emph{forgetting} the concept name $\textsf{Grad}$ in \Omc, since $\Omc'$ contains exactly the same information about the signature $\Sigma=\sig{\Omc}\setminus\{\textsf{Grad}\}$.
  \lipicsEnd
\end{example}

Unfortunately, the existence of uniform interpolants is by no means guaranteed, as the following example illustrates already for very simple DL ontologies.
\begin{example}
  Consider the \EL ontology  
 $\Omc=\{A\sqsubseteq B$, $B\sqsubseteq\exists r.B\}$ and $\Sigma=\{A,r\}$. A uniform
$\EL(\Sigma)$-interpolant would need to entail precisely the CIs 
\[
  A\sqsubseteq\exists r.\top,\qquad A\sqsubseteq\exists r.\exists r.\top,\qquad
  A\sqsubseteq\exists r.\exists r.\exists r.\top,\qquad \ldots
\]
which is not possible for a finite set of $\EL$ or \ALC CIs. (Actually, there is not even first-order sentence that entails precisely these CIs, c.f.~\refchapter{chapter:firstorder} for uniform interpolation in first-order logic.)
%
The reader may conjecture that the lack of a uniform interpolant is due to cyclicicty in \Omc, but this is not the entire story.
%
  %
  Consider the \EL ontology 
  \[
    \Omc=\{\ \ A\sqsubseteq\exists r.B,\qquad
              A_0\sqsubseteq\exists r.(A_1\sqcap B),\qquad
              E\equiv A_1\sqcap B\sqcap\exists r.(A_2\sqcap B)\ \ \}
  \]
  and signature $\Sigma=\{A,A_0,A_1,r,E\}$. Reference~\cite{FORGETTING_EL} classifies \Omc as \emph{$\Sigma$-loop free} and shows that hence \Omc has a uniform $\EL(\Sigma)$-interpolant. However, \Omc fails to have a uniform $\ALC(\Sigma)$-interpolant~\cite[Example 3]{FOUNDATIONS_ALC_UI}.
\lipicsEnd 
\end{example}
As we have argued in the introduction, uniform interpolants are useful in a range of applications in KR. Since they do not always exist, it is interesting from a theoretical point of view to investigate existence as a decision problem. From a practical perspective, it is interesting to compute them (in case they exist), study their size, and to deal with the fact that they may not exist. We will address exactly these questions. More precisely, in
Section~\ref{sec:prop-ui}, we discuss the complexity of the existence problem and the size of uniform interpolants. In Section~\ref{sec:ui-practice}, we discuss methods to compute uniform interpolants in practice, addressing also the question what to do if they do not exist. Finally, in Section~\ref{sec:modules}, we discuss further related notions that are relevant to DL specific applications. Throughout the section, we focus on the DLs \EL and \ALC, as these are the ones for which most results are known.

%
%
%
\subsection{Existence and Size}\label{sec:prop-ui}

As motivated in the previous section, 
the focus of this section will be the following \emph{uniform interpolant existence problem} for DL \Lmc: 
\begin{description}
\item [Input] \Lmc ontology $\Omc$, signature $\Sigma$.
\item [Question] Does there exist a uniform $\Lmc(\Sigma)$-interpolant of $\Omc$?
\end{description}
%
The main result known here is that this problem is decidable for both \EL and \ALC, but has complexity one exponential higher than standard reasoning tasks in these DLs.
\begin{theorem}[\cite{COMPLEXITY_UI_EL,FOUNDATIONS_ALC_UI}]\label{thm:uip-complexity}
  Uniform interpolant existence is \ExpTime-complete for \EL and \TwoExpTime-complete for \ALC.
\end{theorem}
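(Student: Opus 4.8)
The statement packages two upper bounds and two matching lower bounds, so I would organise the proof as a membership argument and a hardness argument for each of the two logics. The common backbone of the membership arguments is a reformulation of the existence question as a \emph{stabilisation} property of the $\Sigma$-consequences of $\Omc$, exploiting that a uniform interpolant is exactly a finite axiomatisation of the $\Lmc(\Sigma)$-inseparability class of $\Omc$ (Definition~\ref{def:inseparability}). The guiding lemma I would first establish is: a uniform $\Lmc(\Sigma)$-interpolant of $\Omc$ exists if and only if there is a role depth $n$ such that every $\Sigma$-CI $C\sqsubseteq D$ (i.e. with $\sig{C}\cup\sig{D}\subseteq\Sigma$) entailed by $\Omc$ already follows from the entailed $\Sigma$-CIs of role depth at most $n$. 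The nontrivial direction uses that, over a finite signature and bounded role depth, there are only finitely many inequivalent \ALC (resp. \EL) concepts, so that bounded depth yields a genuinely finite axiomatisation; the converse is immediate by taking $n$ to be the maximal role depth occurring in a hypothetical interpolant. The necessity of detecting \emph{non}-consequences is where the bisimulation machinery of Section~\ref{sec:description-logics} enters: a $\Sigma$-CI fails under $\Omc$ precisely when two models carry $\Sigma$-bisimulation types that are indistinguishable up to the relevant depth yet separate $C$ from $D$.

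The decision procedure then has to check whether such a bound $n$ exists, without an a~priori ceiling. For \ALC I would pass to tree-shaped unravellings of models of $\Omc$ and represent the class of tree models by a tree automaton $\mathcal{A}_{\Omc}$ of size exponential in $\size{\Omc}$, in the standard way underlying the \ExpTime-completeness of \ALC subsumption. Along the runs I track the $\Sigma$-bisimulation types that are realised; since the interpolant may only mention $\Sigma$, restricting to these types amounts to projecting away the non-$\Sigma$ labels, and the associated powerset/determinisation step is responsible for the second exponential. Stabilisation at bounded depth then translates into a boundedness property of the projected automaton that is decidable by a fixpoint computation, giving \TwoExpTime overall. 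For \EL the picture collapses by one exponential: the relevant model-comparison notion is the one-directional simulation rather than a full bisimulation, the $\Sigma$-type structure one must monitor is much more constrained, and a single exponential in $\size{\Omc}$ suffices, yielding \ExpTime.

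For the lower bounds I would reduce from word problems of space-bounded alternating Turing machines, which are \TwoExpTime-complete for exponential space and \ExpTime-complete for polynomial space (so the machine class matches the logic). Given a machine $M$ and input $w$, I would build an ontology $\Omc_{M,w}$ and a signature $\Sigma$ so that the $\Sigma$-consequences of $\Omc_{M,w}$ are forced to mimic the configuration tree of $M$ on $w$, arranging the gadgets so that a uniform $\Lmc(\Sigma)$-interpolant exists exactly when that tree is finite, i.e. when $M$ does not accept $w$. An unbounded computation then makes the entailed $\Sigma$-CIs grow in required role depth without ever stabilising, ruling out a finite axiomatisation, whereas a halting computation stabilises and the interpolant exists.

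The main obstacle sits at both ends of this plan and is of the same nature, namely controlling an inherently infinitary condition by a finite object. On the membership side, the delicate point is proving that bounded-depth stabilisation is both necessary and sufficient \emph{and} that the automaton-level boundedness check faithfully mirrors it within the stated resources, in particular that the projection to $\Sigma$ really costs only the one extra exponential. On the hardness side, the difficulty is engineering the encoding so that finite axiomatisability of the (infinite) set of $\Sigma$-consequences flips precisely with the acceptance behaviour of $M$, which requires the gadgets to expose unboundedly deep $\Sigma$-consequences exactly along non-halting runs while keeping everything expressible in $\Lmc$.
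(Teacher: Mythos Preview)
Your high-level architecture is the right one and matches the paper: reduce existence to a stabilisation criterion on the depth-$n$ $\Sigma$-consequence sets $\Omc^\Sigma_n$, characterise failure of stabilisation model-theoretically via (bi)simulations, and decide the resulting condition by tree automata. The paper does exactly this, and for \EL replaces bisimulations by simulations, as you suggest.

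Where your upper-bound sketch diverges from the paper---and where I think it is genuinely underspecified---is the passage from stabilisation to something an automaton can check. You appeal to a ``boundedness property of the projected automaton decidable by a fixpoint computation'', with the second exponential attributed to a powerset/determinisation step. The paper does not rely on an abstract boundedness test. Instead it proves a concrete witness characterisation (Lemma~\ref{lem:characterization-uis}): $\Omc^\Sigma_n$ fails to be a uniform interpolant iff there exist two tree interpretations $\Imc_1,\Imc_2$ agreeing up to depth $n$ such that $\Imc_1$ is $\Sigma$-bisimilar into a model of $\Omc$ at the root, $\Imc_2$ is not, but every successor of the root in $\Imc_2$ is. It then derives, via a pumping argument on \emph{extension sets} (sets of types realised under $\Sigma$-bisimulation into models of $\Omc$), an explicit bound $M_\Omc=2^{2\cdot 2^{\size{\Omc}}}+1$ (Lemma~\ref{lem:ui-bound}): if witnesses exist at depth $M_\Omc$ they exist at all greater depths. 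The automaton then merely has to search for the pair $(\Imc_1,\Imc_2)$ at this fixed depth. The second exponential is not a determinisation artefact; it comes from the number of extension sets, which is $2^{2^{\size{\Omc}}}$. Your proposal does not contain an analogue of Lemma~\ref{lem:characterization-uis}, and without such a characterisation it is unclear what property of the ``projected automaton'' you would test, or why a fixpoint computation would decide it in \TwoExpTime.

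On the lower bounds, your reduction sketch contains a conceptual slip. You write that the interpolant should exist ``exactly when that tree is finite, i.e.\ when $M$ does not accept $w$'', and speak of ``non-halting runs'' producing unbounded $\Sigma$-consequences. But exponential- and polynomial-space alternating Turing machines always halt; their configuration trees are always finite. Acceptance is determined by the alternation structure at the leaves, not by termination. The actual reductions arrange that \emph{acceptance} of $w$ forces the ontology to entail a family of $\Sigma$-CIs whose required role depth is unbounded (typically by making a cyclic gadget interact with a successful computation), while \emph{rejection} allows stabilisation at a finite depth. So the dichotomy you need is accept/reject, not infinite/finite or diverge/halt; as stated, your gadget design cannot separate the two cases.
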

We are interested in the size of the uniform interpolants, if they exist, and tight bounds are known here as well. Interestingly, while the complexity of deciding existence is higher for \ALC than for \EL, they exhibit the same bounds on the size of uniform interpolants.
\begin{theorem}[\cite{COMPLEXITY_UI_EL,FOUNDATIONS_ALC_UI,EXP_EXP_EXPLOSION}]\label{thm:uip-sizes}
  Let \Lmc be either \EL or \ALC. 
  \begin{enumerate}
    \item If some \Lmc ontology \Omc has a uniform $\Lmc(\Sigma)$-interpolant for signature $\Sigma$, then there is one of size at most triple exponential in the size of $\Omc$. 
    \item There is a family $\Omc_n$, $n\geq 1$ of \Lmc ontologies and a signature $\Sigma$ such that the size of $\Omc_n$ is polynomial in $n$, a uniform $\Lmc(\Sigma)$-interpolant exists for all $n\geq 1$, but any uniform $\Lmc(\Sigma)$-interpolant has size at least triple exponential in $n$.
  \end{enumerate}
\end{theorem}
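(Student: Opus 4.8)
The plan is to treat the two parts separately, since the upper bound on size is obtained by an automata-theoretic construction whereas the lower bound is a combinatorial encoding argument. For Part~1, I would exploit the tree model property of \EL and \ALC together with the bisimulation characterization recalled above. The key observation is that a uniform $\Lmc(\Sigma)$-interpolant, if it exists, is completely determined by the class of $\Sigma$-reducts of tree models of $\Omc$: by Definition~\ref{def:uniformIntDL} it must entail exactly the $\Lmc(\Sigma)$ CIs entailed by $\Omc$, and such CIs are preserved under $\sim_{\Lmc,\Sigma}$. First I would build a two-way alternating tree automaton $\mathcal{A}_\Omc$ of size polynomial in $\size{\Omc}$ accepting precisely the tree models of $\Omc$. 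Projecting away the non-$\Sigma$ symbols and then determinizing and complementing to obtain an automaton $\mathcal{B}$ that recognizes the $\Sigma$-reducts costs at most a double-exponential blow-up. Existence of a uniform interpolant then amounts to asking whether the language of $\mathcal{B}$ is axiomatizable by finitely many \Lmc CIs, and when it is, one extracts such an ontology directly from the state set of $\mathcal{B}$, incurring one further exponential. Composing the three blow-ups yields the triple-exponential bound.

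For Part~2 I would exhibit ontologies $\Omc_n$ that use auxiliary, non-$\Sigma$ concept names to encode, in polynomial size, a structure of triple-exponential ``width'', and then argue that forgetting these auxiliary names forces any interpolant to make this width explicit. Concretely, I would nest three binary counters: $n$ bits realize a counter up to $2^n$, whose configurations address the bits of a counter up to $2^{2^n}$, which in turn address a counter up to $2^{2^{2^n}}$. All of this is axiomatizable with polynomially many \EL (hence \ALC) CIs over the auxiliary signature. The payoff is a consequence of $\Omc_n$ asserting that certain $\Sigma$-definable elements must be pairwise non-$\sim_{\Lmc,\Sigma}$-bisimilar across triple-exponentially many positions. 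A pigeonhole argument over $\sim_{\Lmc,\Sigma}$-types then shows that any ontology realizing the same $\Sigma$-consequences must distinguish triple-exponentially many types, and hence has size at least triple exponential in $n$.

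The main obstacle in the upper bound is controlling the final extraction step: one must show that the $\Sigma$-consequence language of $\mathcal{B}$, whenever it is finitely axiomatizable at all, admits an axiomatization whose size is only singly exponential in $\size{\mathcal{B}}$, and that the resulting ontology entails exactly the intended CIs (soundness and completeness of the extraction). For the lower bound, the delicate point is the matching argument: encoding the nested counters compactly is routine, but proving that no smaller $\Sigma$-ontology can separate the required $\sim_{\Lmc,\Sigma}$-types---that is, that the blow-up is genuinely unavoidable and not an artifact of one particular axiomatization, because \EL and \ALC cannot reconstruct the compact binary encoding once the auxiliary bit predicates are forgotten---is where the real work lies.
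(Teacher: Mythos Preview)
Your upper-bound sketch via automata is broadly in the right spirit, but the paper obtains Point~1 by a different and more direct route: it reduces the computation of uniform interpolants of \emph{ontologies} to that of \emph{concepts} (materializing the ontology to bounded depth, in the style of Lemma~\ref{lem:reduction-ontology-free}), and then applies existing size bounds for concept-level uniform interpolants from~\cite{TenEtAl06}. Your automata pipeline may well be workable, but the final ``extract an ontology from the states of $\mathcal B$'' step is exactly the hard part and you have not said how you control it; the paper sidesteps this by piggybacking on the concept case.

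For Point~2, your plan is substantially off. The paper does \emph{not} nest three counters. For \EL it uses a \emph{single} $n$-bit counter, realized with pairs $X_i,\overline X_i$, to force that any root of a full binary $r/s$-tree of depth $2^n$ whose leaves satisfy $A_1$ or $A_2$ must satisfy $B$. The triple exponential then arises purely combinatorially: the family $\Cmc_0=\{A_1,A_2\}$, $\Cmc_{i+1}=\{\exists r.C\sqcap\exists s.C\mid C\in\Cmc_i\}$ has $|\Cmc_i|=2^{2^i}$, so $\Cmc_{2^n}$ contains $2^{2^{2^n}}$ pairwise incomparable \EL concepts, and each CI $C\sqsubseteq B$ for $C\in\Cmc_{2^n}$ must be entailed separately by any $\Sigma$-interpolant. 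The same construction does not transfer to \ALC, because there $A_1\sqcup A_2$ collapses $\Cmc_0$; the \ALC lower bound uses a $2^n$-bit counter instead (so, if you like, \emph{two} nested counters, not three). Your three-nested-counter idea is both more complicated and dubious for \EL: addressing the bits of a $2^n$-bit counter via the first counter typically needs universal restrictions or negation, neither of which \EL offers, and you would have to argue that the whole encoding stays polynomial in $n$. More importantly, your proposed ``pigeonhole over $\sim_{\Lmc,\Sigma}$-types'' does not by itself give a size lower bound on ontologies: the paper's argument hinges on producing triple-exponentially many pairwise $\sqsubseteq$-incomparable $\Sigma$-concepts that must each be subsumed by $B$, which is a syntactic incomparability claim, not a type-counting one.
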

It is beyond the scope of this chapter to present all proof details of these results, but we shall give some intuition on how they are obtained. We start with the size lower bounds, that is, Point~2 of Theorem~\ref{thm:uip-sizes}, by recalling the family of \EL ontologies from \cite{EXP_EXP_EXPLOSION} that is used to prove this result. Let $n\geq 1$. Then $\Omc_n$ consists of the following CIs:
%
\begin{align}
  &A_1\sqsubseteq\overline{X_1}\sqcap\ldots\sqcap\overline{X_{n}}\qquad\qquad\qquad\qquad
  A_2\sqsubseteq\overline{X_1}\sqcap\ldots\sqcap\overline{X_{n}} \label{eq:el-init} \\
  &\bigsqcap_{\sigma\in\{r,s\}}\exists\sigma.(\overline{X_i}\sqcap\overline{X_j})\sqsubseteq\overline{X_i} \qquad\qquad
  \bigsqcap_{\sigma\in\{r,s\}}\exists\sigma.(X_i\sqcap\overline{X_j})\sqsubseteq X_i \qquad
  1\leq j<i\leq n \label{eq:el-keep} \\
  &\bigsqcap_{\sigma\in\{r,s\}}\exists\sigma.(
  \overline{X_i}\sqcap X_{i-1}\sqcap\ldots\sqcap X_1)\sqsubseteq X_i
  \qquad 1\leq i\leq n \label{eq:el-flip-1} \\
  &\bigsqcap_{\sigma\in\{r,s\}}\exists\sigma.(
  X_i\sqcap X_{i-1}\sqcap\ldots\sqcap X_1)\sqsubseteq \overline{X_i}\qquad
  1\leq i\leq n \label{eq:el-flip-2} \\
  &X_0\sqcap\ldots\sqcap X_{n}\sqsubseteq B \label{eq:el-finish}
\end{align}
%
Intuitively, $\Omc_n$ constructs an $n$-bit binary counter via the concept names $X_i$,
$\overline{X_i}$ for $1\leq i\leq n$. The satisfaction of these names encodes a number
between $0$ and $2^n-1$ using $n$ bits: satisfaction of $X_i$ means that the $i$th bit
has value $1$, and $\overline{X_i}$ represents a value of $0$ at position~$i$.
Using this, elements can be assigned a counter value $k$.
Specifically, the CIs in \eqref{eq:el-init} make sure that instances of $A_1$ and $A_2$ have a
counter value of 0 (all bits are 0). CIs~\eqref{eq:el-keep}--\eqref{eq:el-flip-2} make sure that,
if both an $r$ and an $s$-successor of an element $d$ has a counter value of $k$, then $d$ has a counter value
of $k+1$. This is done by specifying how the bits on $d$ should be set depending on the bits in the
successors. Specifically, the CIs in~\eqref{eq:el-keep} describe the situation where the bit value
at position $i$ should remain the same (at least one lower bit in the successor has value 0).
CIs~\eqref{eq:el-flip-1} and~\eqref{eq:el-flip-2} describe
that a bit at position $i$ should flip if all lower bits in the successor have a value of 1.
Finally, the CI~\eqref{eq:el-finish} states that elements with a counter value of $2^{n-1}$
(all bits are 1) must satisfy $B$.

This construction has the following effect: any element $d$ that is the root of a full binary $r/s$-tree of depth $2^n$ whose leaves satisfy $A_1$ or $A_2$, will be
an instance of $B$.
For $\Sigma=\{A_1,A_2,B,r,s\}$, any uniform $\Sigma$-interpolant will have to preserve this behaviour without using the counter encoded with the concept names $X_i,\overline X_i$.
To construct the uniform $\EL(\Sigma)$-interpolant for $\Omc_n$, we define inductively sets
$\Cmc_i$ of concepts by setting $\Cmc_0=\{A_1,A_2\}$ and
$\Cmc_{i+1}=\{\exists r.C\sqcap\exists s.C\mid C\in\Cmc_i\}$ for $i\geq 0$.
It is not hard to see that the cardinality of each $\Cmc_i$ is $2^{2^i}$. Moreover,
for each $C\in\Cmc_{2^{n}}$, $\Omc_n\models C\sqsubseteq B$, and we cannot capture these
entailments more concisely than by adding all those $C\sqsubseteq B$ to the uniform interpolant.
It follows that the uniform interpolant contains at least $2^{2^{2^{n}}}$ CIs, and is thus of
size triple exponential in $n$.

This finishes the proof sketch for \EL. The same construction does not directly yield a triple-exponential lower bound for \ALC, since we can here use the single concept $A_1\sqcup A_2$ in case of the two concepts in $\Cmc_0$, which reduces the size of the uniform interpolant to double exponential.
%
The triple exponential lower bound on the size of uniform interpolants for \ALC from~\cite{FOUNDATIONS_ALC_UI,DID_I_DAMAGE}, 
is shown similarly,
but relies on a $2^n$-bit counter instead of an $n$-bit counter.

\bigskip
We now return to the existence problem, this time concentrating on \ALC. Let us fix an \ALC ontology $\Omc$ and a signature $\Sigma$ as input.
We first give a syntactic characterization for the existence of a uniform interpolant. For $n\geq 0$, define $\Omc^\Sigma_n$ as the set of all $\ALC(\Sigma)$ CIs entailed by $\Omc$ whose role depth is at most $n$.
If there exists a uniform interpolant, it must be equivalent to~$\Omc^\Sigma_n$ for some~$n$, since all its CIs would be contained in $\Omc^\Sigma_n$, and conversely $\Omc\models\Omc^\Sigma_n$ by construction. This means that non-existence of uniform interpolants implies that for every~$n$, there exists some $k>n$ such that $\Omc^\Sigma_n\not\models\Omc^\Sigma_k$. 

We characterize when some $\Omc^\Sigma_n$ is (not) a uniform interpolant using tree interpretations, which are interpretations $\Imc$ for which the relation
$\bigcup\{r^\Imc\mid r\in\NR\}$ forms a directed tree. The root of such an interpretation is then the root of that directed tree. For an integer $m$, we use $\Imc^m$ to refer to the interpretation obtained by removing all individuals which are not reachable from the root via a role path of length at most~$m$.
We then have the following lemma.
\begin{lemma}\label{lem:characterization-uis}
 Let $\Omc$ be an ontology, $\Sigma$ a signature and $n>0$. Then, $\Omc_n^\Sigma$ is not a uniform $\ALC(\Sigma)$-interpolant of $\Omc$ if there are tree interpretations $\Imc_1$, $\Imc_2$ with root $d$ such that
 \begin{enumerate}
  \item $\Imc_1^n=\Imc_2^n$,
  \item $\Imc_1,d\sim_{\ALC,\Sigma}\Jmc,d'$ for some model $\Jmc$ of $\Omc$ and $d'\in\Delta^\Jmc$,
  \item $\Imc_2,d\not\sim_{\ALC,\Sigma}\Jmc,d'$ for all models $\Jmc$ of $\Omc$ and $d'\in\Delta^\Jmc$, and
  \item for all role successors $e$ of $d$, $\Imc_2,e\sim_{\ALC,\Sigma}\Jmc,d'$ for some model $\Jmc$ of $\Omc$ and $d'\in\Delta^\Jmc$.
 \end{enumerate}
\end{lemma}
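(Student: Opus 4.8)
The plan is to turn $\Imc_2$ into a countermodel witnessing that $\Omc_n^\Sigma$ fails condition~\ref{itm:ui-insep} of Definition~\ref{def:uniformIntDL}. Since $\Omc_n^\Sigma$ satisfies conditions~1 and~2 of that definition by construction (it consists of $\ALC(\Sigma)$ CIs entailed by $\Omc$), it is enough to produce a single $\ALC(\Sigma)$ CI $\alpha$ with $\Omc\models\alpha$ but $\Omc_n^\Sigma\not\models\alpha$. I would obtain such an $\alpha$ by first showing that $\Imc_2$ is a model of $\Omc_n^\Sigma$ and then exhibiting a $\Sigma$-consequence of $\Omc$ that is violated in $\Imc_2$, necessarily at its root $d$. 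Throughout I work with $\Sigma$-interpretations, which is harmless since all four hypotheses refer only to $\ALC(\Sigma)$-bisimilarity; in particular every edge of the trees $\Imc_1,\Imc_2$ then carries a role from $\Sigma$.

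To see that $\Imc_2\models\Omc_n^\Sigma$, call a pointed interpretation \emph{$\Sigma$-realizable} if it is $\ALC(\Sigma)$-bisimilar to a pointed model of $\Omc$. By bisimulation invariance, a $\Sigma$-realizable element satisfies every $\ALC(\Sigma)$ CI entailed by $\Omc$, hence every CI of $\Omc_n^\Sigma$. The fourth hypothesis says that every role successor of $d$ is $\Sigma$-realizable, and the \textbf{Back} condition of $\ALC(\Sigma)$-bisimulations lets this property propagate down the tree: every proper descendant of $d$ is $\Sigma$-realizable, and thus satisfies all of $\Omc_n^\Sigma$. For the root $d$ itself I use that every CI in $\Omc_n^\Sigma$ has role depth at most $n$, so its truth at $d$ depends only on the truncation $\Imc_2^n$. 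By the first hypothesis $\Imc_2^n=\Imc_1^n$, and by the second hypothesis the root of $\Imc_1$ is $\Sigma$-realizable and therefore satisfies all $\ALC(\Sigma)$ CIs of role depth at most $n$ entailed by $\Omc$. Hence $d$ satisfies every CI of $\Omc_n^\Sigma$, and $\Imc_2\models\Omc_n^\Sigma$.

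It remains to extract the CI $\alpha$, and I expect this to be the crux. The third hypothesis says $\Imc_2,d$ is \emph{not} $\Sigma$-realizable, and the aim is a concept $E$ with $d\in E^{\Imc_2}$ that is unsatisfiable with respect to $\Omc$; then $\alpha:=E\sqsubseteq\bot$ is entailed by $\Omc$, is violated at $d$, and so is not entailed by the model $\Imc_2$ of $\Omc_n^\Sigma$, finishing the proof. The obstacle is that non-$\Sigma$-realizability is a purely semantic, bisimulation-based statement, whereas $E$ must be a single finite $\ALC(\Sigma)$ concept. The plan is a compactness argument: if every $\ALC(\Sigma)$ concept true at $d$ were satisfiable with respect to $\Omc$, then the full $\Sigma$-type of $d$ would be finitely satisfiable and hence, by compactness, realized at some $d'$ in a model $\Jmc$ of $\Omc$; invoking the converse of bisimulation invariance — valid for the relevant (finite or $\omega$-saturated) interpretations, as noted after the definition of $\sim_{\ALC,\Sigma}$ — the agreement of $\Sigma$-types upgrades to $\Imc_2,d\sim_{\ALC,\Sigma}\Jmc,d'$, contradicting the third hypothesis. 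Making this upgrade rigorous for the given $\Imc_2$, passing to suitably saturated or image-finite representatives without destroying the other hypotheses, is the technical heart of the argument.
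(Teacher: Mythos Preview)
Your approach coincides with the paper's: after stating the lemma the paper gives only an informal explanation, arguing (i) Conditions~1 and~2 make the root of $\Imc_2$ satisfy all CIs in $\Omc_n^\Sigma$, (ii) Condition~4 makes every successor of $d$ $\Sigma$-bisimilar to a pointed model of $\Omc$ and hence satisfy all $\Sigma$-consequences of $\Omc$, and (iii) by Condition~3, $\Imc_2,d$ then ``witnesses precisely that $\Omc_n^\Sigma$ is not a uniform $\Sigma$-interpolant of $\Omc$''. This is exactly your two-step plan: first show $\Imc_2\models\Omc_n^\Sigma$, then use non-realizability at $d$ to produce a violated $\Sigma$-CI. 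Your write-up is actually more careful than the paper's sketch; in particular, you make explicit the compactness/Hennessy--Milner step needed to turn ``$\Imc_2,d$ is not $\Sigma$-realizable'' into a single $\ALC(\Sigma)$ concept $E$ with $\Omc\models E\sqsubseteq\bot$ and $d\in E^{\Imc_2}$, and you correctly flag the saturation issue as the technical heart --- the paper leaves this step entirely implicit.
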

Intuitively, these conditions express that one has to look at a role depth beyond $n$ to capture all~$\Sigma$ entailments of $\Omc$, or, equivalently, to capture all pointed interpretations that are $\Sigma$-bisimilar to some pointed model of $\Omc$. Together with Condition~2, Condition~1 ensures that the root individual in both pointed interpretations satisfies the same CIs up to role depth $n$, which means they satisfy all CIs in $\Omc_n^\Sigma$. However, $\Imc_2,d$ is not $\Sigma$-bisimilar to all pointed models of $\Omc$, while
$\Imc_1$ is (Condition~3). Finally, Condition~4 states that
the root is the point where the bisimulation breaks, since in $\Imc_2$, all successors of $d$ are $\Sigma$-bisimilar to pointed models of $\Omc$ as required. In other words, $\Imc_2,d$ witnesses precisely that $\Omc_n^\Sigma$ is not a uniform $\Sigma$-interpolant of~$\Omc$.

\newcommand{\Ext}{\textsf{Ext}}
\newcommand{\bound}{M_\mathcal{O}}

Set $\bound=2^{2\cdot2^{\size{\Omc}}}+1$. We can show that if Lemma~\ref{lem:characterization-uis} applies for $n=\bound$, then it also applies for all $n>\bound$, meaning that then there does not exist a uniform $\ALC(\Sigma)$-interpolant. To do this, we use a construction based on types.
Let $\Gamma$ denote the set of all subconcepts that occur in \Omc, closed under single negation.
Given an interpretation $\Imc$ and some individual $d\in\Delta^\Imc$,
we then define the \emph{type $\tp_\Imc(d)$ of $d$ in $\Imc$} as
\[
    \tp_\Imc(d)=\{C\in\Gamma\mid d\in C^\Imc\}.
\]
To capture the types of elements that are bisimilar to $d$, we define the \emph{extension set of $d$} as
\[
 \Ext(d,\Imc)=\{\tp_\Jmc(d')\mid \Jmc\models\Omc, d'\in\Delta^\Jmc \text{ and }\Jmc,d'\sim_\Sigma\Imc,d \}
\]
There are at most $2^{2^{\size{\Omc}}}$ extension sets. Now assume Lemma~\ref{lem:characterization-uis} applies for $n=\bound$, and let $\Imc_1$ and $\Imc_2$ be the witnessing tree interpretations. We have $\Imc_1^n=\Imc_2^n$, and now look at the successors in the leafs of $\Imc_1^n$, i.e. the elements in $\Imc_1^n$ without a role successor. Specifically, for such a pair, we define the set $D_n(\Imc_1,\Imc_2)$ to contain all leafs in $\Imc_1^n$ which differ in $\Imc_1^{n+1}$ and $\Imc_2^{n+1}$ in their sets of role successors, by which we mean that they either have a different set of role successors or some role successor that satisfies a different set of concept names in one interpretation compared to the other. Fix a pair $\Imc_1$, $\Imc_2$ of interpretations that satisfy the conditions in Lemma~\ref{lem:characterization-uis} for $n=\bound$. If $D_n(\Imc_1,\Imc_2)=\emptyset$, then lemma also applies for $n=\bound+1$, and we are done. Otherwise, we pick some $f\in D_n(\Imc_1,\Imc_2)$, and consider the path $d=e_1$, $e_2$, $\ldots$, $e_{n}=f$ of elements such that, for $1\leq i<n$, $\tup{e_i,e_{i+1}}\in r_i^{\Imc_1}$ for some role name $r_i$. Because $n=\bound=2^{2\cdot2^{\size{\Omc}}}+1$, and there are at most $(2^{2^{\size{\Omc}}})^2=2^{2\cdot 2^{\size{\Omc}}}$ pairs of extension sets, there must be some $1\leq i<j\leq n$ such that  $\Ext(e_i,\Imc_1)=\Ext(e_j,\Imc_1)$ and $\Ext(e_i,\Imc_2)=\Ext(e_j,\Imc_2)$. In both $\Imc_1$ and $\Imc_2$, we now replace the subtree below $e_j$ with a copy of the subtree below $e_i$, resulting in two new interpretations $\Kmc_1$ and $\Kmc_2$. One can show that this construction preserves the conditions in Lemma~\ref{lem:characterization-uis}, so that $\Kmc_1$ and $\Kmc_2$ also witness it. In addition, we have $D_n(\Kmc_1,\Kmc_2)\subsetneq D(\Imc_1,\Imc_2)$. We can repeat this operation until we obtain a pair $\Jmc_1$, $\Jmc_2$ of interpretations for which
$D_n(\Jmc_1,\Jmc_2)=\emptyset$, and thus $\Jmc_1^{n+1}=\Jmc_2^{n+1}$, and obtain that Lemma~\ref{lem:characterization-uis} also holds for $n+1$. This gives us the following lemma.
\begin{lemma}\label{lem:ui-bound}
 There is a uniform $\Lmc(\Sigma)$-interpolant of \Omc iff $\Omc^\Sigma_n\models\Omc^\Sigma_{n+1}$ for $n=\bound$.
\end{lemma}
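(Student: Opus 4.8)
The plan is to prove the two implications of the biconditional separately, reducing both to the tree-interpretation characterization of Lemma~\ref{lem:characterization-uis} together with the type-based pumping argument developed above. The starting point is the observation recorded before the lemma: any uniform $\ALC(\Sigma)$-interpolant must be logically equivalent to $\Omc^\Sigma_n$ for some~$n$, since each of its CIs is a $\Sigma$-CI of bounded role depth entailed by $\Omc$ and hence lies in some $\Omc^\Sigma_n$, while $\Omc\models\Omc^\Sigma_n$ holds by construction. Thus a uniform interpolant exists iff some $\Omc^\Sigma_n$ is one, and since the $\Omc^\Sigma_n$ form an ascending chain (so $\Omc^\Sigma_{n+1}\models\Omc^\Sigma_n$ always), this happens iff the chain stabilises in logical strength. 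It therefore suffices to relate stabilisation to the single step at $n=\bound$.

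For the direction $\Omc^\Sigma_{\bound}\models\Omc^\Sigma_{\bound+1}\Rightarrow$ existence, I claim $\Omc^\Sigma_{\bound}$ is itself a uniform interpolant. Conditions~1 and~2 of Definition~\ref{def:uniformIntDL} hold by construction, so only Condition~\ref{itm:ui-insep}, i.e.\ $\Omc^\Sigma_{\bound}\models\Omc^\Sigma_k$ for all~$k$, remains. I would argue this by contraposition: were $\Omc^\Sigma_{\bound}$ not a uniform interpolant, the characterization (in its necessity direction, obtained by unravelling a refuting model into a tree) would furnish interpretations $\Imc_1,\Imc_2$ satisfying the four conditions of Lemma~\ref{lem:characterization-uis} at $n=\bound$. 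Since the root agrees up to role depth $\bound$ with the good interpretation $\Imc_1$ (Condition~1) while every one of its role successors is already $\Sigma$-bisimilar to a model of $\Omc$ (Condition~4), the bisimulation to all models must break exactly at the root; hence the separating $\ALC(\Sigma)$ concept has role depth precisely $\bound+1$, exhibiting a CI in $\Omc^\Sigma_{\bound+1}$ that a model of $\Omc^\Sigma_{\bound}$ violates. This contradicts the hypothesis $\Omc^\Sigma_{\bound}\models\Omc^\Sigma_{\bound+1}$, so $\Omc^\Sigma_{\bound}$ must be a uniform interpolant.

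For the converse $\Omc^\Sigma_{\bound}\not\models\Omc^\Sigma_{\bound+1}\Rightarrow$ non-existence, I first note that the missed CI lies in $\Omc^\Sigma_{\bound+1}$, so it is an entailed $\Sigma$-CI of role depth at most $\bound+1$; unravelling a model of $\Omc^\Sigma_{\bound}$ refuting it yields interpretations $\Imc_1,\Imc_2$ witnessing Lemma~\ref{lem:characterization-uis} at $n=\bound$. I then invoke the pumping argument: along the root-to-leaf path of length $\bound=2^{2\cdot2^{\size{\Omc}}}+1$ there are at most $2^{2\cdot2^{\size{\Omc}}}$ pairs of extension sets, so some pair repeats, and replacing the lower subtree by a copy of the upper one preserves all four conditions while strictly shrinking $D_n(\Imc_1,\Imc_2)$. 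Iterating extends the agreement to depth $\bound+1$, producing witnesses for $n=\bound+1$, and repeating gives witnesses for every $n>\bound$. By the sufficiency direction of Lemma~\ref{lem:characterization-uis} no $\Omc^\Sigma_n$ with $n\geq\bound$ is a uniform interpolant, and as the $\Omc^\Sigma_n$ with $n<\bound$ are logically weaker they are not either; by the chain observation, no uniform interpolant exists.

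I expect the main obstacle to be the pumping step in the non-existence direction, where one must verify that the subtree surgery genuinely preserves all four conditions of Lemma~\ref{lem:characterization-uis}: replacing the subtree below $e_j$ by a copy of the one below $e_i$ must keep Condition~3 (the root is not $\Sigma$-bisimilar to any model of $\Omc$) and Condition~4 (every root successor is $\Sigma$-bisimilar to some model) intact, which is precisely where the equality of extension sets $\Ext(e_i,\Imc_1)=\Ext(e_j,\Imc_1)$ and $\Ext(e_i,\Imc_2)=\Ext(e_j,\Imc_2)$ is needed, and one must confirm that $D_n$ strictly decreases so the iteration terminates. A secondary delicate point is the localisation used in the existence direction, namely turning the bare statement ``$\Omc^\Sigma_{\bound}$ is not a uniform interpolant'' into the specific one-step failure $\Omc^\Sigma_{\bound}\not\models\Omc^\Sigma_{\bound+1}$; this relies on Condition~4 guaranteeing that the bisimulation breaks at the root rather than deeper in the tree.
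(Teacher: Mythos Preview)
Your proposal is correct and follows essentially the same route as the paper: the core is the extension-set pumping argument showing that witnesses for Lemma~\ref{lem:characterization-uis} at $n=\bound$ propagate to all larger $n$, which is exactly what the paper sketches. One small point to be aware of: you rely on the converse of Lemma~\ref{lem:characterization-uis} (what you call the ``necessity direction''), which the paper only states as an ``if'' but clearly intends as a characterization; your indication that this converse is obtained by unravelling a refuting model is right, and your identification of the localization step via Condition~4 as the delicate point is accurate.
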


A consequence of lemma~\ref{lem:ui-bound} is that if a uniform interpolant exists, then $\Omc_n^\Sigma$ is one, where $n=\bound$. Thus, it gives a bound on the role depth of uniform interpolants. Since, up to equivalence, there are only finitely many $\ALC(\Sigma)$ CIs of bounded role depth, this also shows that the existence problem is decidable. Unfortunately, the bounds we get in this way are non-elementary. To establish the \TwoExpTime-upper bound claimed in Theorem~\ref{thm:uip-complexity},~\cite{FOUNDATIONS_ALC_UI} uses an automata-based approach to decide existence of uniform interpolants. The idea is to construct a tree automaton that decides the existence of interpretations $\Imc_1$ and $\Imc_2$ that satisfy the conditions in Lemma~\ref{lem:characterization-uis} for $n=\bound$. In the same paper, they show the triple exponential upper bound on the size of uniform \ALC-interpolants claimed in Point~1 of Theorem~\ref{thm:uip-sizes}. The proof is by a reduction of the problem of computing uniform interpolants of ontologies to the problem of computing uniform interpolants of \emph{concepts} and apply known results for the latter~\cite{TenEtAl06}. For the complexity lower bounds, we refer the reader to~\cite{FOUNDATIONS_ALC_UI} as well.


Interestingly, the strategy for deciding existence of uniform interpolants for \EL is similar~\cite{COMPLEXITY_UI_EL}: it relies on a characterization in the style of Lemma~\ref{lem:characterization-uis} (using simulations instead of bisimulations), and shows a bound on the role depth of uniform interpolants in style of Lemma~\ref{lem:ui-bound} (which is single exponential in this case). Automata are then used as well, though in a different way, to obtain the complexity upper bounds claimed in Theorem~\ref{thm:uip-complexity}. The size bounds for \EL in Theorem~\ref{thm:uip-sizes} are shown in~\cite{EXP_EXP_EXPLOSION}.

\subsection{Computing Uniform Interpolants in Practice}\label{sec:ui-practice}

\begin{table}
 \centering
 \begin{tabular}{l|l|l|l}
  Name of Tool & Logic & Technique & Dealing with Non-Existence  \\ 
  \hline
  \Nui~\cite{FORGETTING_EL} & \EL & TBox unfolding & -- \\
  -~\cite{PRACTICAL_UI_LUDWIG}  & \ALC & resolution & approximation \\
  \Lethe~\cite{LETHE}  & \ALC & resolution + Ackermann's Lemma & fixpoints/auxiliary symbols \\
  \Fame~\cite{FAME} & \ALC & Ackermann's Lemma & fixpoints/auxiliary symbols
 \end{tabular}
 \caption{Overview of tools for computing uniform interpolants of ontologies in practice.
 }
 \label{tab:ui-tools}
\end{table}


Despite these high complexity bounds, algorithms for computing uniform interpolants have been described
for many description logics,
ranging from light-weight DLs such as DL-Lite~\cite{UI_DL_LITE}
and \EL~\cite{FORGETTING_EL,LUDWIG_UI_EL,LiuLASZ21} to \ALC and more expressive
DLs~\cite{WANG,PRACTICAL_UI_LUDWIG,LETHE,FAME,PHD_THESIS}.
Many of these were implemented (see \Cref{tab:ui-tools}), and are able to compute results even for
larger realistic ontologies. In this section, we discuss how implemented methods
compute uniform interpolants of \ALC ontologies, describing the method used by \Lethe in detail.

The first practical issue is of course that uniform interpolants do not always exist.
In many cases, a pragmatic solution could be to
just extend
the desired signature by the problematic symbols.
For the case where this is not an option,
there are three solutions, all of which are implemented by some tools.

\begin{description}
\item[Option 1:] We approximate the uniform interpolant up to a given role depth, which might be sufficient
if we are only interested in entailments of bounded role depth.
The resolution based approach proposed in~\cite{PRACTICAL_UI_LUDWIG} provides guarantees on the role depth of preserved entailments---unfortunately these
guarantees require an exponential increase in the role depth in the uniform interpolant.

\item[Option 2:] We extend the DL by \emph{greatest fixpoint operators}.
As argued above, for the ontology $\Omc=\{A\sqsubseteq B, B\sqsubseteq\exists r.B\}$,
a uniform $\ALC(\{A,r\})$-interpolant does
not exist. However, there does exist one in $\ALC\mu$:
$\Omc_\Sigma=\{A\sqsubseteq\nu X.\exists r.X\}$. Indeed, $\ALC\mu$ does have the
uniform interpolation property,
which follows from the corresponding result for the modal
$\mu$-calculus~\cite{MU-CALCULUS}; see also \refchapter{chapter:cyclic}.
%
$\ALC\mu$ extends $\ALC$ by concepts of the form $\nu X.C[X]$, where $X$ is taken from
a set $\NV$ of \emph{concept variables} that is pair-wise disjoint with $\NC$, $\NR$ and $\NI$,
and $C[X]$ is a concept in which $X$ is used like a concept name, but occurs
only positively, that is, under an even number of negations.
For a formal definition of the semantics, we refer to~\cite{DBLP:conf/ijcai/CalvaneseGL99}.
Intuitively, $\nu X.C[X]$ corresponds to the limit of the sequence
\[ \top,\quad C[\top],\quad C[C[\top]],\quad C[C[C[\top]]], \ldots, \]
where in each case, $C[D]$ refers to the result of replacing $X$ in $C[X]$ by $D$.

\item[Option 3:] Since fixpoint operators are harder to
understand for end-users and also not supported by the OWL standard, which is the format used to store ontologies
in practical applications,
a third option is to simulate greatest fixpoint
operators using auxiliary concepts: In particular, we can replace any greatest fixpoint expression
$\nu X.C[X]$, provided that it occurs only positively (e.g., under an even number of negations on the left-hand side
of a CI, and under an odd number of negations on the right-hand side),
by a fresh concept name $D$ for which we add a new CI $D\sqsubseteq C[D]$.
We then approximate the uniform $\Sigma$-interpolant \emph{signature-wise}, resulting in an ontology that is
$\Sigma$-inseparable from the original ontology and uses
names outside of $\Sigma$ in a syntactically restricted way.
\end{description}
Regardless of the option chosen, the general idea of practical uniform interpolation methods is
to eliminate the names that should not occur in the
interpolant using dedicated inferences.
For this, we find two approaches: 
1) using resolution~\cite{PRACTICAL_UI_LUDWIG}
(as in \refchapter{chapter:propositional} and \refchapter{chapter:automated}))
or 2) using Ackermann's Lemma~\cite{Ackermann,FAME} (also discussed in \refchapter{chapter:automated}).
The method in \cite{LETHE}, which we present in the following, uses both in combination.

Because \ALC is more complex than propositional logics, and syntactically more restricted than
first-order logics, in order to pursue 1), we cannot use standard resolution methods, but need a specific resolution procedure that is appropriate for the logic and the aim of
computing uniform interpolants. The first resolution-based approach for uniform interpolation in \ALC was presented in~\cite{PRACTICAL_UI_LUDWIG} and is based on a complex inference system for modal logic introduced in~\cite{EnjalbertC89} and extended for uniform interpolation in modal logics in~\cite{HerzigM08}.
This inference system intuitively specifies an unbounded number of inference rules
that allow to perform resolution on symbols occurring in different nesting levels of role restrictions inside a CI.
By increasing the bound on the nesting level on which inferences are performed,
we can then approximate the uniform interpolant up to a specified role depth.

\added{
Ackermann's Lemma~\cite{Ackermann} has been extensively used in the context of
Second-Order Quantifier Elimination.
To formulate it for DLs, we use a stronger version of inseparability as introduced in Definition~\ref{def:inseparability}, in which instead of \Lmc concept inclusions arbitrary second-order logic (\SO) sentences are considered. We use $\Omc_1\equiv_\Sigma ^\SO \Omc_2$ to denote that $\Omc_1,\Omc_2$ entail the same \SO sentences over signature $\Sigma$.
Let $\Omc[A\mapsto C]$ denote the ontology that is obtained from $\Omc$ by replacing
every occurrence of $A$ by $C$.}
\begin{lemma}\label{lem:ackermann}
  Let $\Omc$ be an ontology, $A$ a concept name, $C$ a concept with $A\not\in\sig{C}$,
  and
  $\Sigma=\sig{\Omc}\setminus\{A\}$. Assume $A$ occurs
  only positively in $\Omc$. Then, $\Omc\cup\{A\sqsubseteq C\} \equiv_\Sigma^\SO\Omc[A\mapsto C]$.
\end{lemma}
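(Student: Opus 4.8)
The plan is to reduce the second-order inseparability statement to an equality of model classes and then to verify that equality using the monotonicity that the positive occurrence of $A$ provides.

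First I would observe that it suffices to prove that the two ontologies have exactly the same $\Sigma$-reducts of models, i.e.
\[
 \{\,\Imc|_\Sigma \mid \Imc\models\Omc\cup\{A\sqsubseteq C\}\,\}
 \;=\;
 \{\,\Jmc|_\Sigma \mid \Jmc\models\Omc[A\mapsto C]\,\},
\]
where $\Imc|_\Sigma$ is the reduct of $\Imc$ to $\Sigma$ (same domain, forgetting $A$ and any symbol of $C$ outside $\Sigma$). Indeed, the truth of an \SO sentence $\sigma$ with $\sig{\sigma}\subseteq\Sigma$ in an interpretation depends only on its $\Sigma$-reduct, so for either ontology $\Omc'$ we have $\Omc'\models\sigma$ iff every $\Sigma$-reduct of a model of $\Omc'$ satisfies $\sigma$; hence equal reduct-classes force equal \SO consequences over $\Sigma$, which is exactly $\equiv_\Sigma^\SO$. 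Only this direction is needed, so I do not have to characterise $\equiv^\SO_\Sigma$ in full generality.

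The two tools I would set up are a \emph{substitution lemma} and a \emph{monotonicity lemma}. The substitution lemma states that $(F[A\mapsto C])^\Imc = F^{\Imc'}$, where $\Imc'$ agrees with $\Imc$ except that $A^{\Imc'}:=C^\Imc$; this is well-defined precisely because $A\notin\sig{C}$, so reinterpreting $A$ does not change $C^\Imc$. The monotonicity lemma states that if $A$ occurs only positively in $\Omc$ and $\Imc'$ is obtained from $\Imc$ by enlarging the extension of $A$, then $\Imc\models\Omc$ implies $\Imc'\models\Omc$. It is proved by induction on concept structure, showing that positive (resp.\ negative) occurrences of $A$ make a concept's extension monotone (resp.\ antitone) in $A^\Imc$; reading a CI $D\sqsubseteq E$ as the concept $\neg D\sqcup E$, positivity of $A$ in $\Omc$ means $A$ is antitone in every left-hand side $D$ and monotone in every right-hand side $E$, so enlarging $A$ shrinks $D^\Imc$ and grows $E^\Imc$, thereby preserving $D^\Imc\subseteq E^\Imc$.

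With these in hand both inclusions are short. For $\supseteq$, given $\Jmc\models\Omc[A\mapsto C]$ I would extend it to $\Imc$ by setting $A^\Imc:=C^\Imc$; then $A\sqsubseteq C$ holds trivially, the substitution lemma gives $\Imc\models\Omc$, and $\Imc|_\Sigma=\Jmc|_\Sigma$. For $\subseteq$, given $\Imc\models\Omc\cup\{A\sqsubseteq C\}$ we have $A^\Imc\subseteq C^\Imc$; enlarging $A$ to $C^\Imc$ still yields a model of $\Omc$ by the monotonicity lemma, and rewriting this via the substitution lemma shows $\Imc\models\Omc[A\mapsto C]$, so $\Imc|_\Sigma$ lies in the right-hand class. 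I expect the main obstacle to be the monotonicity lemma: getting the polarity bookkeeping right for concept inclusions (in particular the sign flip induced by the left-hand side) and matching it exactly with the hypothesis that $A$ occurs only positively in $\Omc$. The reduction to $\Sigma$-reducts and both inclusions are then essentially routine applications of the substitution lemma.
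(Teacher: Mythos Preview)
Your argument is correct and is essentially the standard proof of Ackermann's Lemma: reduce $\equiv_\Sigma^\SO$ to equality of the classes of $\Sigma$-reducts of models, and verify the two inclusions using the substitution lemma together with the monotonicity of $\Omc$ in $A$.

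A comparison with the paper is not really possible here: the paper does not give its own proof of this lemma but states it as a classical result, citing Ackermann's original paper and referring the reader to \refchapter{chapter:automated} for the first-order formulation. Your write-up therefore supplies a proof where the paper has none. One minor point of phrasing: when you say ``$A$ is antitone in every left-hand side $D$'', you mean that $D^\Imc$ is antitone in $A^\Imc$ (and $E^\Imc$ monotone in $A^\Imc$); the intended meaning is clear from context, but you may want to adjust the wording.
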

Intuitively, this means that, if we can bring the ontology into the right form, we can compute a very strong kind of
uniform interpolants by simply applying the equivalence in Lemma~\ref{lem:ackermann} left-to-right.
%
If $A$ does occur on the right-hand side in $A\sqsubseteq C$, we can use a generalization
of Ackermann's lemma from~\cite{nonnengart1999elimination} that introduces greatest fixpoints.
When restricted to DLs, it can be formulated as follows.\footnote{\added{For the original FO formalization of both lemmas and their application to uniform interpolation, see also
  \refchapter{chapter:automated}.}
}
\begin{lemma}[\added{Generalized Ackermann's Lemma}]\label{lem:gen-ackermann}%
  Let $\Omc$ be an ontology, $A$ a concept name, $C[A]$ a concept in which $A$ occurs,
  and
  $\Sigma=\sig{\Omc}\setminus\{A\}$. Assume $A$ occurs
  only positively in $\Omc$ and $C[A]$. Then, the following holds: 
  \[
    \Omc\cup\{A\sqsubseteq C[A]\} \equiv_\Sigma^\SO\Omc[A\mapsto \nu X.C[X]]
  \]
\end{lemma}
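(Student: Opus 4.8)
The plan is to reduce the second-order inseparability claim to a statement about $\Sigma$-reducts of models and then discharge it by a monotone fixpoint argument. Since whether a $\Sigma$-sentence holds in an interpretation depends only on the interpretation of the symbols in $\Sigma$, two ontologies with exactly the same $\Sigma$-reducts of models entail the same \SO sentences over $\Sigma$. It therefore suffices to show that a $\Sigma$-interpretation $\Jmc$ can be expanded, by choosing an extension $S\subseteq\Delta^\Jmc$ for $A$, to a model of $\Omc\cup\{A\sqsubseteq C[A]\}$ if and only if $\Jmc$ is already a model of $\Omc[A\mapsto\nu X.C[X]]$ (the latter ontology being over $\Sigma$). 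Writing $(\Jmc,S)$ for the expansion interpreting $A$ as $S$, the goal is
\[
 \bigl(\exists S\subseteq\Delta^\Jmc:\ (\Jmc,S)\models\Omc\cup\{A\sqsubseteq C[A]\}\bigr)
 \iff
 \Jmc\models\Omc[A\mapsto\nu X.C[X]].
\]
This model-theoretic reformulation of $\equiv_\Sigma^\SO$ is exactly the semantic content underlying second-order quantifier elimination.

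Next I would set up the relevant operator and collect the polarity facts. Define $F\colon 2^{\Delta^\Jmc}\to 2^{\Delta^\Jmc}$ by $F(S)=C[A]^{(\Jmc,S)}$. The assumption that $A$ occurs only positively in the concept $C[A]$ makes $F$ monotone, so by Knaster--Tarski it has a greatest fixpoint $\bigcup\{S\mid S\subseteq F(S)\}$, which by the greatest-fixpoint semantics equals $(\nu X.C[X])^\Jmc$; call it $\nu$. Two further observations: by the substitution lemma, $\Jmc\models\Omc[A\mapsto\nu X.C[X]]$ iff $(\Jmc,\nu)\models\Omc$; and $(\Jmc,S)\models A\sqsubseteq C[A]$ iff $S\subseteq F(S)$, i.e. iff $S$ is a post-fixpoint of $F$. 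Finally, the assumption that $A$ occurs only positively in $\Omc$, viewed as a conjunction of implications, yields that $\Omc$-satisfaction is upward monotone in $A$: if $(\Jmc,S)\models\Omc$ and $S\subseteq S'$ then $(\Jmc,S')\models\Omc$.

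With these facts in place, both directions are short. For the right-to-left direction, assume $\Jmc\models\Omc[A\mapsto\nu X.C[X]]$, so $(\Jmc,\nu)\models\Omc$; since $\nu$ is a fixpoint it is in particular a post-fixpoint, hence $(\Jmc,\nu)\models A\sqsubseteq C[A]$, and $S=\nu$ witnesses the left-hand side. For the left-to-right direction, let $(\Jmc,S)\models\Omc\cup\{A\sqsubseteq C[A]\}$. Then $S\subseteq F(S)$, so $S$ is a post-fixpoint and therefore $S\subseteq\nu$. Upward monotonicity of $\Omc$-satisfaction in $A$ then upgrades $(\Jmc,S)\models\Omc$ to $(\Jmc,\nu)\models\Omc$, which by the substitution lemma gives $\Jmc\models\Omc[A\mapsto\nu X.C[X]]$. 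As a sanity check, when $A\notin\sig{C}$ the operator $F$ is constant with value $C^\Jmc$, so $\nu=C^\Jmc$ and $\nu X.C[X]$ collapses to $C$, recovering Lemma~\ref{lem:ackermann}.

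The main obstacle is not the fixpoint manipulation but getting the polarity bookkeeping exactly right: the word \emph{positively} must be read in the implication form of the ontology, so that a positive occurrence of $A$ in $\Omc$ indeed gives \emph{upward} monotonicity of satisfaction (positive on a right-hand side, or negative on a left-hand side of a CI), while a positive occurrence of $A$ in the concept $C[A]$ gives monotonicity of $F$. Establishing these two monotonicity statements rigorously by induction on concept structure, and checking that the greatest-fixpoint semantics of $\nu X.C[X]$ on arbitrary, possibly infinite domains is captured by the union of post-fixpoints rather than by the finite unfolding $\top, C[\top], C[C[\top]],\dots$, is the only genuinely technical part.
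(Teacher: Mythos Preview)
Your proof is correct. The paper does not actually prove Lemma~\ref{lem:gen-ackermann}; it merely states the DL formulation and attributes the result to~\cite{nonnengart1999elimination}, so there is no in-paper argument to compare against. Your reduction of $\equiv_\Sigma^\SO$ to equality of $\Sigma$-reducts of models, followed by the Knaster--Tarski argument (post-fixpoints of the monotone operator $F(S)=C[A]^{(\Jmc,S)}$ together with upward monotonicity of $\Omc$-satisfaction in $A$), is exactly the standard way this is established in the second-order quantifier elimination literature, and your polarity bookkeeping---reading ``positive in $\Omc$'' at the level of the implication $\neg D\sqcup E$ so that enlarging $A$ preserves each CI---is the right one. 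The remark that the greatest fixpoint must be taken as the union of post-fixpoints rather than as the limit of the finite unfolding $\top,C[\top],\dots$ (the paper only gives the latter as intuition) is a genuine point and worth keeping.
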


\added{For both lemmas to be applicable, we have to bring the ontology into the required form.}
In particular, one has to isolate all negative
occurrences of $A$ into a single axiom of the form $A\sqsubseteq C$, which is not always possible.
The approach in~\cite{FAME}, implemented in the uniform interpolation tool \Fame, essentially follows
this strategy, and uses additional constructs such as inverse roles to make this possible, for instance, using the equivalence of axioms $C\sqsubseteq\forall r.D$ and $\exists r^-.C\sqsubseteq D$.
The full method implemented in \Fame is much more complex, and cannot only eliminate concept names, but also
role names. The advantage of using (generalized) Ackermann's lemma is that the computed uniform interpolants
are of a very strong kind, namely they are $\SO(\Sigma)$-inseparable.
This is why the method implemented in \Fame is also called \emph{semantic forgetting}.
Since $\SO(\Sigma)$-inseparability is undecidable (see \Cref{sec:modules}), it is in general not
possible to perform semantic forgetting to obtain an ontology expressed in a decidable logic, and indeed
\Fame cannot compute uniform interpolants for all possible inputs.

\newcommand{\ND}{\mathsf{N_D}}

The technique in~\cite{LETHE}, implemented in the uniform interpolation tool \Lethe, combines resolution with Ackermann's lemma and is based on a special normal form for \ALC ontologies.
%
%
In this normal form, axioms are represented as disjunctions $L_1\sqcup\cdots\sqcup L_n$ of literals of the following forms, where
$A\in\NC$ and $D$ belongs to special set $\ND\subseteq\NC$ of concept names called \emph{definers}:
\[
  A \quad\mid\quad \neg A \quad\mid\quad \exists r.D \quad\mid\quad\forall r.D
\]
Such disjunctions are called \emph{DL clauses}, and are interpreted \emph{globally}.
For example, the concept inclusion $A\sqsubseteq B$ is represented as the DL clause
$\neg A\sqcup B$. \ALC ontologies $\Omc$ are normalized into $\sig{\Omc}$-inseparable ontologies in this normal form through standard
CNF transformation techniques and through the introduction of fresh concept names which we call \emph{definers}.
For instance, we would replace an axiom
$A\sqsubseteq\exists r.(B\sqcup C)$ by two clauses $\neg A\sqcup\exists r.D$, $\neg D\sqcup B\sqcup C$, where
$D$ is an introduced definer.
These definers play a central role in the method. We can normalize ontologies so that
every clause contains
at most one negative occurence of a definer. This invariant ensures that for every definer $D$, we can transform the
clauses that contain $\neg D$ into a single GCI of the form $D\sqsubseteq C$, and then eliminate $D$ again
by using Ackermann's lemma or its generalization. The resolution procedure produces
only clauses that also satisfy this invariant, which is why all definers can be eliminated in a final step.

\newcommand{\quant}{\mathsf{Q}}
\newcommand{\Scan}{\textsc{Scan}}

The resolution method used by \Lethe uses the following two inference rules:
\[
  \textbf{Resolution: }\dfrac{C_1\sqcup A\quad C_2\sqcup\neg A}{C_1\sqcup C_2}
  \hfill
  \textbf{Role Propagation: }\dfrac{C_1\sqcup\forall r.D_1\qquad C_2\sqcup\quant r.D_2}{C_1\sqcup C_2\sqcup\quant r.D_{12}}
\]
where $\quant\in\{{\exists},{\forall}\}$, and $D_{12}$ is a possibly fresh definer that represents
$D_1\sqcap D_2$. We can introduce such a definer by adding the clauses
$\neg D_{12}\sqcup D_1$, $\neg D_{12}\sqcup D_2$ and immediately resolving on $D_1$ and $D_2$.
In order to ensure termination, the method keeps track of introduced definers and reuses them where
possible---for details we refer to~\cite{LETHE}.

The idea is now to perform all possible inferences on the concept names we want to eliminate, where
we avoid inferences that would introduce a clause with two negative occurences of a definer.
Because of this
invariant, some resolution inferences
become only possible through application of the role propagation rule, as illustrated in the following example.
Once all inferences on a concept name to be eliminated have been applied, we can remove all
occurences of that concept name.\footnote{This is essentially the same idea as in the resolution-based approach
for propositional logic discussed in \refchapter{chapter:propositional},
or in the second-order quantifier elimination tool \Scan~\cite{SCAN} described in
\refchapter{chapter:automated}.}
Once we have eliminated all concept names outside of the signature $\Sigma$ in
this way, we eliminate all introduced definers using Ackermann's lemma and its generalization to obtain the
uniform interpolant.
To eliminate a role name $r$, we perform all possible inferences on $r$ using role propagation. Afterwards,
we can remove literals $\exists r.D$ for which $D$ is unsatisfiable, and filter out the remaining occurrences of
$r$.\footnote{\cite{MyRoleForgetting} discusses a more involved method for role elimination. Both methods are implemented in the current version of \Lethe~\cite{DEDUCTIVE_MODULES,LETHE}.}

\begin{example}
  We want to compute the $\{A,B,D,E,r\}$-interpolant of the following ontology:
  \[
   \Omc=\{\quad A\sqsubseteq\exists r.(B\sqcap C)\qquad \exists r.(C\sqcap D)\sqsubseteq E\quad \}
  \]
  The corresponding set of DL clauses is the following:
  \[
   1.\ \neg A\sqcup\exists r.D_1\qquad
   2.\ \neg D_1\sqcup B\qquad
   3.\ \neg D_1\sqcup C\qquad
   4.\ \forall r.D_2\sqcup E\qquad
   5.\ \neg D_2\sqcup\neg C\sqcup\neg D
  \]
  We need to eliminate $C$. We cannot resolve on Clauses~3 and~5, since it would produce a clause with two negative occurrences of a definer, which would break our invariant. To make resolution on $C$ possible, we need to first apply role propagation on Clauses~1 and 4, followed by an immediate resolution on $D_1$ and $D_2$:
  \begin{align*}
   6.&\ \neg A\sqcup E\sqcup\exists r.D_{12}\qquad
   &7.&\ \neg D_{12}\sqcup D_1\qquad
   &8.&\ \neg D_{12}\sqcup D_2\\
   9.&\ \neg D_{12}\sqcup B\qquad
   &10.&\ \neg D_{12}\sqcup C
   &11.&\ \neg D_{12}\sqcup \neg C\sqcup\neg D
  \intertext{
  Clauses~7 and~8 are only intermediate inferences and can be discarded. We can now resolve upon $C$, the name to be eliminated, namely on Clauses~10 and~11:
  }
  12.&\ \neg D_{12}\sqcup\neg D
 \end{align*}
 No further resolution steps on $C$ are possible without breaking the invariant that every clause contains at most one negative occurence of a definer.
 The procedure would now remove all clauses containing $C$ (Clause~3, 5, 10 and 11) and apply Ackermann's lemma to eliminate the introduced names $D_1$, $D_2$ and $D_{12}$ again. The resulting set of DL clauses can be represented as the following set of CIs, which is the desired uniform interpolant:
 \[
  \{\quad A\sqsubseteq\exists r.B,\qquad A\sqcap\forall r.(\neg B\sqcup D)\sqsubseteq E\quad \}\lipicsEnd
 \]
\end{example}


Note that while this approach will always compute a uniform interpolant in $\ALC\mu$,
it may not find a uniform interpolant without fixpoints even if one exists. As argued above, in practice, one can keep the
problematic definers, or compute approximations of the fixpoint expressions.
A practical method that is able to decide the existence of uniform interpolants without fixpoints on realistic ontologies
has not been found so far. 

\subsection{Related Notions and Applications}\label{sec:modules}

We close this section by discussing four relevant notions closely related to uniform interpolation: inseparability and conservative extensions, logical difference, and modules.

\medskip

\textbf{Inseparability and Conservative Extensions. }
Inseparability as in Definition~\ref{def:inseparability} is a key property that is desired in many applications. 
For instance,
when an ontology evolves by adding, removing, or changing concept inclusions, we might want to ensure that the change does not affect entailments in a certain signature of interest.
Since inseparability is built-in into uniform interpolation, there are natural connections to these applications; we discuss some of them.

Very closely
related to uniform interpolation is the notion of conservative extensions. Conservative extensions are a classical notion studied in logic which has been introduced to KR in the seminal work~\cite{DID_I_DAMAGE}.
Let \Lmc be a DL and $\Omc_1$, $\Omc_2$ be two \Lmc ontologies. Recall that we write $\Omc_1\equiv^\Lmc_\Sigma\Omc_2$ to indicate that $\Omc_1$ and $\Omc_2$ entail precisely the same concept inclusions that can be expressed in $\Lmc$ using signature $\Sigma$. We call $\Omc_2$ a \emph{conservative extension} of $\Omc_1$ if $\Omc_1\subseteq\Omc_2$ and $\Omc_1\equiv^\Lmc_{\sig{\Omc_1}}\Omc_2$. Conservative extensions are relevant when we want to extend an ontology with new concept inclusions about new symbols, while preserving the behaviour with respect to the original signature. Hence, the central problem is to decide given $\Omc_1,\Omc_2$ whether $\Omc_2$ is a conservative extension of $\Omc_1$. As pointed out in~\cite{DID_I_DAMAGE}, it is immediate from the definitions that
\begin{equation}
\text{$\Omc_2$ is a conservative extension of $\Omc_1$\ \ iff\ \ $\Omc_1$ is a uniform $\Lmc(\sig{\Omc_1})$-interpolant of $\Omc_2$.} \label{eq:cons-ui}
\end{equation}
%
This can in some cases be exploited to decide conservative extensions. Indeed, if $\Omc_2$ has a uniform $\Lmc(\sig{\Omc_1})$-interpolant, then we can compute it and check equivalence with $\Omc_1$. As discussed before, this is however not always possible since uniform interpolants may not exist. In general, even when not only adding concept inclusions, we may want to decide whether two ontologies are inseparable. As for conservative extensions, uniform interpolants could help also in this case. Indeed, if $\Omc_1^\Sigma$ and $\Omc_2^\Sigma$ are uniform $\Lmc(\Sigma)$-interpolants of $\Omc_1$ and $\Omc_2$, respectively, then $\Omc_1\equiv_\Sigma^\Lmc\Omc_2$ iff $\Omc_1^\Sigma\equiv\Omc_2^\Sigma$, that is, $\Omc_1^\Sigma\models \Omc_2^\Sigma$ and $\Omc_2^\Sigma\models \Omc_1^\Sigma$.

\begin{table}
\centering
 \begin{tabular}{l|l}
  Ontology Language & Complexity \\
  \hline\hline
  \EL &  \ExpTime~\cite{DBLP:journals/jsc/LutzW10} \\
  $\ALC$, $\mathcal{ALCI}$ & \TwoExpTime~\cite{DID_I_DAMAGE,CONSERVATIVE_2007} \\
  \ALCO & 3\ExpTime-hard~\cite{kr2021} \\
  $\mathcal{ALCFIO}$ & undecidable~\cite{CONSERVATIVE_2007} \\
 \end{tabular}
 \caption{The complexity of uniform interpolant recognition.
}
 \label{tab:ui-complexities}
\end{table}

Both conservative extensions and inseparability have been heavily studied for description logics, see for example~\cite{DID_I_DAMAGE,CONSERVATIVE_2007,INSEP_SURVEY,CONSERVATIVE_2020,DBLP:journals/jsc/LutzW10,kr2021}. The complexity results obtained there have implications also on the side of uniform interpolation, especially for the recognition problem. Here, \emph{uniform interpolant recognition} is the problem of deciding whether an ontology $\Omc^\Sigma$ is a uniform $\Lmc(\Sigma)$-interpolant of another ontology $\Omc$. The recognition problem has not been explicitly studied in the literature, but it might be relevant for cases when the ontology engineer has a candidate ontology in mind and wants to verify whether it is indeed a uniform interpolant. Table~\ref{tab:ui-complexities} summarizes some complexity results for the recognition problem. The lower bounds are inherited from deciding conservative extensions via the reduction~\eqref{eq:cons-ui}. For the upper bounds, observe that uniform interpolant recognition reduces to entailment and inseparability as follows: 
\[\text{$\Omc^\Sigma$ is a uniform $\Lmc(\Sigma)$-interpolant of $\Omc$\ \ iff\ \ $\Omc\models\Omc^\Sigma$ and $\Omc\equiv_\Sigma^\Lmc\Omc^\Sigma$}.\] 
It is worth noting that nominals
are particularly challenging in this context, and we will see in Section~\ref{sec:interpolation-concept-level} that this is also the case for interpolation at the level of concepts.

We point out that in applications related to query answering, we need alternative notions of inseparability (and thus uniform interpolation), namely inseparability \emph{by queries}. As this is beyond the scope of this chapter, we refer the reader to~\cite{INSEP_SURVEY} for a survey. 

\newcommand{\Star}{\ensuremath{\top\bot*}}


\medskip

\textbf{Logical Difference.}
In case two ontologies are not $\Lmc(\Sigma)$-inseparable, a natural question is in which $\Lmc(\Sigma)$-entailments they actually differ, which is captured by the logical difference of the ontologies:
\newcommand{\logDiff}{\textsf{diff}}

\begin{definition}[Logical Difference]
  Let $\Omc_1$, $\Omc_2$ be two ontologies and $\Sigma$ be a signature. The logical difference from $\Omc_1$ to $\Omc_2$ in
  $\Sigma$ is defined as
  \[
    \logDiff(\Omc_1,\Omc_2,\Sigma)=\{\alpha\mid\sig{\alpha}\subseteq\Sigma,\Omc_1\models\alpha,\Omc_2\not\models\alpha\}
  \]
\end{definition}
Logical difference is particularly useful to track changes between different versions of an ontology. In contrast to a purely syntactic check (which CIs have been added/removed/changed), logical difference provides a semantic way to understand whether entailments
in a signature of interest have changed.
Note that if the logical difference between two ontologies is non-empty, then it contains infinitely many concept inclusions.
Nonetheless, it is often possible to compute finite representations~\cite{KonevWW08}. For more expressive logics, the only existing tools compute finite representations of the logical difference by reduction to
uniform interpolation~\cite{PRACTICAL_UI_LUDWIG,LETHE,LiuLASZ21}.


\medskip

\textbf{Modules.}
As discussed in the introduction, one key application of uniform interpolation is ontology reuse. Modules are an alternative solution for this. Modules work the other way around to conservative extensions: given an ontology $\Omc$ and a signature $\Sigma$, a \emph{$\Lmc(\Sigma)$-module of $\Omc$} is an $\Lmc(\Sigma)$-inseparable subset of $\Omc$.
Modules are investigated in detail in~\cite{GrauHKS08,KonevLWW09,KonevL0W13}, and are relevant for the application \emph{Modularisation and Reuse} discussed in the introduction,
but they have also been used for other purposes such as to improve reasoner performance~\cite{RomeroGH12}, or as preprocessing step for uniform interpolant computation~\cite{PRACTICAL_UI_LUDWIG,LETHE,FAME}.
%
Differently to uniform interpolants, the size of a module is bounded by the size of the original ontology, and the
syntactical structure of CIs is not changed, which is sometimes relevant.
At the same time, uniform interpolants can often be more compact than modules, and give guarantees on the used signature.
Practical methods for uniform interpolation have been used to compute
subset-minimal $\EL(\Sigma)$ and $\ALC(\Sigma)$-modules~\cite{DEDUCTIVE_MODULES,DBLP:conf/aaai/Yang0B23}, but
there are also very fast syntactical methods that compute $\SO(\Sigma)$-modules~\cite{GrauHKS08},
which may however not be subset-minimal.
%
%
A compromise between modules and uniform interpolants that can leverage the advantages of both are
\emph{generalised modules}.
A generalised $\Lmc(\Sigma)$-module of an ontology $\Omc$
can be just any ontology that is $\Lmc(\Sigma)$-inseparable with $\Omc$, but is ideally smaller and simpler than $\Omc$. General modules can be significantly simpler than both uniform interpolants and modules~\cite{SWEET_SPOT,GENERAL_MODULES}. The method in~\cite{GENERAL_MODULES} uses a technique similar to that of \Lethe discussed in this chapter.

\section{Craig Interpolation for Description Logic Concepts}\label{sec:interpolation-concept-level}

In this section, we will define and discuss interpolation in description logics at the level of concepts rather than at the level of ontologies as in preceding section. We start in \Cref{sec:interpolation} with (Craig) interpolation and then move in \Cref{sec:beth-definability} to the tightly related concept of Beth definability. In \Cref{sec:applications}, we discuss the main applications of interpolants and explicit definitions which motivate the necessity of computing them. The computation problem is then covered in \Cref{sec:interpolant-computation}. 

\subsection{Craig Interpolation}\label{sec:interpolation}

We use the following standard definition of
interpolants in the context of DL ontologies.

\begin{definition}[Interpolants]
  Let $\Lmc$ be any DL. Let $C_1,C_2$ be $\Lmc$ concepts,
  $\Omc$ an $\Lmc$ ontology and $\Sigma$ be a signature. Then, an \emph{$\Lmc(\Sigma)$-interpolant for $C_1$ and $C_2$ under $\Omc$} is any $\Lmc(\Sigma)$ concept $I$ that satisfies $\Omc\models C_1\sqsubseteq I$ and $\Omc\models I\sqsubseteq C_2$.
\end{definition}
%
We remark that this definition of interpolants is in line with the one used for modal logic in \refchapter{chapter:sixproofs}. More precisely, with an empty ontology $\Omc$, an interpolant for $C_1$ and $C_2$ is an interpolant for the validity $\models C_1\to C_2$, when viewing $C_1,C_2$ as modal logic formulas. The case with ontologies is also related to the setting of interpolation \emph{relative to theories} in first-order logic, see \refchapter{chapter:firstorder}. 

A classical application of interpolants in DLs is in explaining subsumption relations. More precisely, the idea put forward in~\cite{DBLP:conf/jelia/Schlobach04} is that an $\Lmc(\Sigma)$-interpolant for $C$ and $D$ with $\models C\sqsubseteq D$ for a \emph{minimal} $\Sigma$ is a good explanation of the possibly complicated subsumption $C\sqsubseteq D$. The following example illustrates the idea. 
\begin{example}\label{ex:interpolant-explanation}
Consider $C=\exists \textsf{child}.\top\sqcap \forall \textsf{child}.\textsf{Doctor}$ and $D=\exists\textsf{child}.(\textsf{Doctor}\sqcup \textsf{Rich})$. Clearly, $\models C\sqsubseteq D$ and $\exists \textsf{child}.\textsf{Doctor}$ is an $\ALC(\{\textsf{child},\textsf{Doctor}\})$-interpolant of $C$ and $D$. Moreover, one can show that there is no $\ALC(\Sigma)$-interpolant of $C$ and $D$ for $\Sigma\subsetneq \{\textsf{child},\textsf{Doctor}\}$. Hence, $\exists \textsf{child}.\textsf{Doctor}$ is an arguably simple explanation of the subsumption. \lipicsEnd 
\end{example}
This application motivates the study of the following \emph{\Lmc-interpolant existence} problem:
\begin{description}
\item [Input] \Lmc ontology $\Omc$, \Lmc concepts $C_1,C_2$ and a signature $\Sigma$.
\item [Question] Does there exist an $\Lmc(\Sigma)$-interpolant for $C_1$ and $C_2$ under $\Omc$?
\end{description}
We will see that this potentially difficult existence problem has a surprisingly simple solution for DLs that satisfy the \emph{Craig interpolation property} (CIP), defined following~\cite{TenEtAl13,DBLP:journals/tocl/ArtaleJMOW23}. Recall that $\sig{X}$ denotes the set of concept and role names used in object $X$; we use $\sig{\Omc,C}$ to abbreviate $\sig{\Omc}\cup \sig{C}$. 
\begin{definition}[Craig Interpolation Property]
  Let \Lmc be any DL. We say that \Lmc enjoys the \emph{Craig
  interpolation property (CIP)} if for every
  \Lmc concepts $C_1,C_2$ and \Lmc ontologies $\Omc_1,\Omc_2$ with $\Omc_1\cup\Omc_2\models C_1\sqsubseteq
  C_2$, there exists a \emph{Craig interpolant}, that is, an $\Lmc(\Sigma)$-interpolant of $C_1$ and $C_2$ under
  $\Omc_1\cup\Omc_2$ where $\Sigma = \sig{\Omc_1,C_1}\cap\sig{\Omc_2,C_2}$.
\end{definition}
%
%
In the same way as the definition of interpolants, the definition of the CIP under empty ontologies $\Omc_1=\Omc_2=\emptyset$ coincides with the standard definition of the CIP in modal logic. 
The split of the ontology into two parts is necessary to ensure the existence of Craig interpolants in the presence of ontologies. Indeed, consider the following example from~\cite{DBLP:journals/tocl/ArtaleJMOW23}. Let $\Omc=\{A_1\sqsubseteq A_2,A_2\sqsubseteq A_3\}$, and consider concepts $C_1=A_1$ and $C_2=A_3$. Then for every $\Omc_1,\Omc_2$ with $\Omc=\Omc_1\cup\Omc_2$, there is an $\ALC(\Sigma)$-interpolant for $C_1$ and $C_2$ under $\Omc$ with $\Sigma =  \sig{\Omc_1,C_1}\cap\sig{\Omc_2,C_2}$.
However, there is no $\ALC(\Sigma_0)$-interpolant for $C_1$ and $C_2$ under $\Omc$ with $\Sigma_0=\sig{C_1}\cap\sig{C_2}=\emptyset$.

The Craig interpolation property is so powerful since it \emph{guarantees} the existence of interpolants/explanations in terms of the common signature $\Sigma=\sig{\Omc_1,C_1}\cap\sig{\Omc_2,C_2}$. Fortunately, many DLs enjoy the CIP, also in the presence of ontologies~\cite{TenEtAl13}.
\begin{theorem}\label{thm:cip}
  \ALC and any extension with $\Smc$, $\Imc$, $\Fmc$ enjoys the Craig interpolation property.
\end{theorem}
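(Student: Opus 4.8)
The plan is to prove the Craig interpolation property (CIP) for $\ALC$ and its extensions with $\Smc$, $\Imc$, $\Fmc$ via a model-theoretic argument based on bisimulations, following the standard technique for modal and description logics. The key tool is the characterization of $\ALC(\Sigma)$-expressibility through $\ALC(\Sigma)$-bisimulations recalled earlier in the excerpt: a pointed interpretation is captured up to $\ALC(\Sigma)$-concepts exactly by its $\sim_{\ALC,\Sigma}$-class. The CIP then reduces to a semantic \emph{interpolant existence} statement, which in turn is proved by a combination of compactness and a bisimulation-based separation argument.

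\textbf{First I would} reformulate the CIP semantically. Given $\Omc_1\cup\Omc_2\models C_1\sqsubseteq C_2$ with $\Sigma = \sig{\Omc_1,C_1}\cap\sig{\Omc_2,C_2}$, I claim that an $\Lmc(\Sigma)$-interpolant exists if and only if there are \emph{no} two pointed models $\Imc_1,d_1$ and $\Imc_2,d_2$ with $\Imc_i\models\Omc_i$, $d_1\in C_1^{\Imc_1}$, $d_2\notin C_2^{\Imc_2}$, and $\Imc_1,d_1\sim_{\Lmc,\Sigma}\Imc_2,d_2$. One direction is routine: a $\Sigma$-interpolant $I$ would have to hold at $d_1$ (since $\Omc_1\models C_1\sqsubseteq I$) and fail at $d_2$ (since $\Omc_2\models I\sqsubseteq C_2$), contradicting $\Sigma$-bisimilarity, which preserves all $\Lmc(\Sigma)$-concepts. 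The substantive direction is that the \emph{absence} of such a bisimilar pair of ``bad'' models forces an interpolant to exist. Here I would argue by contradiction using compactness: if no finite $\Lmc(\Sigma)$-concept separated the two sides, then the sets of $\Sigma$-types realizable on the $C_1$-side (under $\Omc_1$) and on the $\neg C_2$-side (under $\Omc_2$) could not be separated by any formula, and a saturation/$\omega$-saturation argument would yield $\omega$-saturated models $\Imc_1,d_1$ and $\Imc_2,d_2$ of the two kinds agreeing on all $\Lmc(\Sigma)$-concepts. On $\omega$-saturated models, agreement on all $\Lmc(\Sigma)$-concepts upgrades to an actual $\Lmc(\Sigma)$-bisimulation (the Hennessy–Milner property), producing exactly the forbidden bisimilar pair.

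\textbf{The main obstacle} will be making the bisimulation argument uniform across the extensions $\Smc,\Imc,\Fmc$, since each feature requires its own ``correct'' notion of $\Lmc(\Sigma)$-bisimulation (e.g.\ the \textbf{AtomI} clause for nominals shown in the excerpt, analogous back-and-forth clauses for inverse roles, a confluence/functionality clause for $\Fmc$, and a transitivity-respecting clause for $\Smc$), and for each one must verify both that $\Lmc(\Sigma)$-concepts are invariant under it \emph{and} that the Hennessy–Milner upgrade still holds on $\omega$-saturated models. The combination of $\Imc$ with $\Fmc$ and $\Smc$ is the delicate point flagged in the excerpt (``a bit of care has to be taken when combining certain features''): the back-and-forth conditions along inverse roles must be shown compatible with the functional-successor and transitive-closure constraints so that the glued bisimulation still yields a genuine model of the relevant ontology. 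A clean way to package this, which I would adopt, is to prove the semantic interpolation property once at the level of a sufficiently general bisimulation game and then instantiate it per feature, or alternatively to transfer the result from the corresponding modal logics (where CIP for the analogues of these extensions is established) via the standard DL–modal correspondence, relying on the remark in the excerpt that the empty-ontology case coincides with modal CIP. Either route reduces the theorem to verifying the bisimulation invariance and Hennessy–Milner properties feature by feature, which is the bulk of the routine (but careful) work.
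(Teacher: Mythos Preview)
Your semantic characterization (interpolant exists iff there is no $\Sigma$-bisimilar ``bad'' pair) is exactly the paper's Lemma~\ref{lem:joint-consistency}, and the paper agrees that this step is standard: it relies only on compactness and Hennessy--Milner and holds for \emph{every} DL with a matching bisimulation notion, including logics that \emph{fail} the CIP such as $\ALCO$ and $\ALCH$. So this characterization alone cannot possibly yield the CIP.

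The genuine gap is that you never connect the hypothesis $\Omc_1\cup\Omc_2\models C_1\sqsubseteq C_2$ to the absence of such a bisimilar pair. Having $d_1\in C_1^{\Imc_1}$ and $d_2\notin C_2^{\Imc_2}$ in two \emph{different} models does not contradict the subsumption, which only constrains single models. (Relatedly, your ``routine'' direction is not quite right as stated: the interpolant satisfies $\Omc_1\cup\Omc_2\models C_1\sqsubseteq I$, not $\Omc_1\models C_1\sqsubseteq I$, so you need $\Imc_1\models\Omc_1\cup\Omc_2$, not just $\Imc_1\models\Omc_1$.) The paper closes this gap with an \emph{amalgamation lemma} (Lemma~\ref{lem:amalgamation}): from $\Imc_1,d_1\sim_{\ALC,\Sigma_1\cap\Sigma_2}\Imc_2,d_2$ one builds a single model $\Jmc$ of $\Omc_1\cup\Omc_2$ and a point $e$ with $\Jmc,e\sim_{\ALC,\Sigma_i}\Imc_i,d_i$ for $i=1,2$; then $e\in C_1^{\Jmc}\setminus C_2^{\Jmc}$, contradicting the subsumption. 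Amalgamation is precisely the logic-specific step that separates $\ALC$ (and its extensions by $\Smc,\Imc,\Fmc$) from $\ALCO$ and $\ALCH$: Figure~\ref{fig:alcocip} exhibits a bisimilar pair that cannot be amalgamated. Your ``gluing'' remark in the last paragraph is in fact this amalgamation, but you treat it as a secondary robustness check for the extensions rather than as the core of the argument already for plain $\ALC$.
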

The proof of Theorem~\ref{thm:cip} in~\cite{TenEtAl13} is based on a tableau algorithm for subsumption and is constructive in the sense that it also computes a Craig interpolant in case the subsumption holds.
We will next provide a shorter
(though not constructive) model-theoretic proof, which bears a lot of similarity
with the model-theoretic proof for the CIP in modal logic in \refchapter{chapter:sixproofs}. The main difference is that we have to accomodate the presence of an ontology. The first step of this proof is to
provide a model-theoretic characterization for interpolant existence in terms of bisimulations, inspired by early works by Robinson~\cite{AbRobinson}. This is a rather uniform step that can be adapted to
virtually every (description) logic by ``plugging in'' the respective bisimulation
notion capturing the expressive power of the logic.
The second step is then an amalgamation lemma which 
shows that the model-theoretic condition is satisfied whenever the
subsumption holds. This second step does not work for all logics, and
we will see that failure of this step for some logic usually means that this logic does not enjoy the CIP.

We introduce the necessary notation. Let $\Lmc$ be some description logic. Let $C_1,C_2$ be $\Lmc$ concepts, $\Omc$ be
an \Lmc ontology, and $\Sigma$ be a signature. Then $C_1, C_2$ are called
\emph{jointly $\sim_{\Lmc,\Sigma}$-consistent under \Omc} if
there exist models $\Imc_1,\Imc_2$ of \Omc and elements $d_i\in C_i^{\Imc_i}$ for $i=1,2$,
satisfying
$\Imc_1,d_1\sim_{\Lmc,\Sigma}\Imc_2,d_2$.\footnote{It is worth noting that \emph{joint
$\sim_{\Lmc,\Sigma}$-consistency} is dual to the notion of \emph{entailment along
bisimulations} in~\refchapter{chapter:sixproofs}. Indeed, $C_1$ and $\neg C_2$
are jointly $\sim_{\Lmc,\Sigma}$-consistent iff $C_1$ does not
entail $C_2$ along $\Lmc(\Sigma)$-bisimulations. We stick to
joint bisimilarity since it has been the term recently used in the DL
community.} Joint consistency can be thought of as a strong form of satisfiability
of two concepts with an additional bisimilarity requirement. The following lemma
characterizes the existence of interpolants in terms of joint consistency. The
proof is rather standard and relies on the fact that \Lmc-bisimulations capture the expressive power of \Lmc and crucially also on compactness, see \refchapter{chapter:sixproofs} for a similar lemma, but also~\cite{goranko20075}.
\begin{lemma} \label{lem:joint-consistency}
  Let $C_1,C_2$ be \Lmc concepts, $\Omc$ an $\Lmc$ ontology, and $\Sigma$ be a signature. Then the following conditions are equivalent:
  \begin{enumerate}

    \item there is no $\Lmc(\Sigma)$-interpolant for $C_1$ and $C_2$ under
      $\Omc$;

    \item\label{eq:joint-consistent} $C_1$ and $\neg C_2$ are jointly $\sim_{\Lmc,\Sigma}$-consistent under
      $\Omc$. 

  \end{enumerate}
\end{lemma}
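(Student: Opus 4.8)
The plan is to prove the two directions separately, with the implication from joint consistency (condition~\ref{eq:joint-consistent}) to non-existence of an interpolant being the routine one, and the converse requiring the model-theoretic machinery. Throughout I would rely on the two facts highlighted before the lemma: that $\Lmc(\Sigma)$-bisimilar pointed interpretations satisfy the same $\Lmc(\Sigma)$ concepts, and that $\Lmc$ (as a fragment of first-order logic via the standard translation) is compact and admits $\omega$-saturated elementary extensions.

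For the direction \ref{eq:joint-consistent}$\,\Rightarrow\,$1, I would argue by contraposition: assume an $\Lmc(\Sigma)$-interpolant $I$ exists and let $\Imc_1,\Imc_2$ be models of $\Omc$ with $d_1\in C_1^{\Imc_1}$, $d_2\in(\neg C_2)^{\Imc_2}$, and $\Imc_1,d_1\sim_{\Lmc,\Sigma}\Imc_2,d_2$. From $\Omc\models C_1\sqsubseteq I$ and $\Imc_1\models\Omc$ we get $d_1\in I^{\Imc_1}$; since $I$ is an $\Lmc(\Sigma)$ concept and $\sim_{\Lmc,\Sigma}$ preserves $\Lmc(\Sigma)$ concepts, $d_2\in I^{\Imc_2}$, whence $\Omc\models I\sqsubseteq C_2$ gives $d_2\in C_2^{\Imc_2}$, contradicting $d_2\in(\neg C_2)^{\Imc_2}$. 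Thus no interpolant exists.

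For the converse I would prove the contrapositive: if $C_1$ and $\neg C_2$ are not jointly $\sim_{\Lmc,\Sigma}$-consistent, an interpolant exists. Let $\Gamma$ be the set of all $\Lmc(\Sigma)$ concepts $D$ with $\Omc\models C_1\sqsubseteq D$; the interpolant will be a finite conjunction from $\Gamma$. The crux is the (infinitary) entailment
\[ \Omc\models\textstyle\bigsqcap\Gamma\sqsubseteq C_2, \]
meaning every element of every model of $\Omc$ satisfying all of $\Gamma$ lies in $C_2$. Suppose not: some $\Imc_2\models\Omc$ has $d_2$ satisfying all of $\Gamma$ but $d_2\in(\neg C_2)^{\Imc_2}$, and let $t$ be the $\Lmc(\Sigma)$-type of $d_2$. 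Then $\Omc\cup\{C_1(a)\}\cup\{E(a)\mid E\in t\}$ is satisfiable: otherwise finitely many $E_1,\dots,E_k\in t$ give $\Omc\models C_1\sqsubseteq\neg(E_1\sqcap\cdots\sqcap E_k)$, placing the $\Lmc(\Sigma)$ concept $\neg(E_1\sqcap\cdots\sqcap E_k)$ in $\Gamma\subseteq t$ and contradicting consistency of $t$. A model of this theory yields $\Imc_1\models\Omc$ and $d_1\in C_1^{\Imc_1}$ of the same $\Lmc(\Sigma)$-type $t$ as $d_2$. Passing to $\omega$-saturated elementary extensions (which preserve membership in $\Omc$, $C_1$, $\neg C_2$ and $\Lmc(\Sigma)$-types), equal $\Lmc(\Sigma)$-type forces $\sim_{\Lmc,\Sigma}$ by the Hennessy--Milner direction, so $d_1,d_2$ are $\Sigma$-bisimilar in models of $\Omc$, i.e.\ $C_1,\neg C_2$ are jointly $\sim_{\Lmc,\Sigma}$-consistent, a contradiction. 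Hence the displayed entailment holds, and a final compactness step replaces $\bigsqcap\Gamma$ by a finite $I=D_1\sqcap\cdots\sqcap D_m$ with $D_i\in\Gamma$ and $\Omc\models I\sqsubseteq C_2$; since $\Omc\models C_1\sqsubseteq D_i$ for each $i$, also $\Omc\models C_1\sqsubseteq I$, so $I$ is the desired $\Lmc(\Sigma)$-interpolant.

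I expect the main obstacle to be the saturation step: one must justify that $\sim_{\Lmc,\Sigma}$ is exactly the Hennessy--Milner companion of $\Lmc(\Sigma)$-equivalence and that $\omega$-saturated elementary extensions exist and behave correctly for whichever extension of $\Lmc$ by $\Smc$, $\Imc$, or $\Fmc$ is in play, since transitivity and functionality require the corresponding bisimulation clauses to interact properly with saturation. The surrounding compactness manipulations are routine.
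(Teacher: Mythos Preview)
Your proposal is correct and matches the paper's approach: the paper does not spell out a proof but explicitly says it is ``rather standard and relies on the fact that $\Lmc$-bisimulations capture the expressive power of $\Lmc$ and crucially also on compactness,'' referring to the corresponding modal-logic lemma in another chapter. Your two directions---bisimulation invariance for $2\Rightarrow 1$ and the compactness/$\omega$-saturation/Hennessy--Milner argument for $1\Rightarrow 2$---are precisely this standard argument filled in, and your caveat about verifying the saturation step for the various extensions of $\ALC$ is appropriate but, as the paper indicates, handled by the cited references.
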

Intuitively, for an $\Lmc(\Sigma)$-interpolant for $C_1$ and $C_2$ to exist, the inconsistency of $C_1$ and $\neg C_2$ must be detectable by an $\Lmc(\Sigma)$-bisimulation, since otherwise we cannot express it using an $\Lmc(\Sigma)$ concept.
%
Hence, in order to show the CIP for a DL, it suffices to show that a witness for Point~\ref{eq:joint-consistent} can be turned into a witness for failure of the subsumption $\Omc\models C_1\sqsubseteq C_2$. This is the content of the following amalgamation lemma, which is stated and proved for \ALC only, for the sake of simplicity. Its proof is similar to the proof of an analogous amalgamation lemma for modal logic in \refchapter{chapter:sixproofs}. 
We refer the reader to \refchapter{chapter:algebra} for more on amalgamation.
\begin{lemma}\label{lem:amalgamation}
  Let $\Imc_1,\Imc_2$ be models of an \ALC ontology $\Omc_1\cup\Omc_2$ and suppose $\Imc_1,d_1\sim_{\ALC,\Sigma_1\cap \Sigma_2}\Imc_2,d_2$ for $d_1\in\Delta^{\Imc_1},d_2\in\Delta^{\Imc_2}$ and signatures $\Sigma_1\supseteq \sig{\Omc_1},\Sigma_2\supseteq \sig{\Omc_2}$. Then there is a model $\Jmc$ of $\Omc_1\cup\Omc_2$ and $e\in\Delta^{\Jmc}$ such that $\Jmc,e\sim_{\ALC,\Sigma_1}\Imc_1,d_1$ and $\Jmc,e\sim_{\ALC,\Sigma_2}\Imc_2,d_2$.
\end{lemma}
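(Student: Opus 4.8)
The plan is to build the amalgam $\Jmc$ as a synchronized product of the two models over the common signature $\Sigma_1\cap\Sigma_2$, and to use the two coordinate projections as the required $\Sigma_1$- and $\Sigma_2$-bisimulations. The one preliminary move that makes everything fit together is to pass to the disjoint union $\Kmc=\Imc_1\uplus\Imc_2$. Since $\Imc_1$ and $\Imc_2$ are both models of the \emph{whole} union $\Omc_1\cup\Omc_2$ (not merely of their own part), and satisfaction of \ALC CIs is preserved under disjoint unions, we get $\Kmc\models\Omc_1\cup\Omc_2$; moreover $\Kmc,d_1\sim_{\ALC,\Sigma_1}\Imc_1,d_1$ and $\Kmc,d_2\sim_{\ALC,\Sigma_2}\Imc_2,d_2$, because adding a disconnected copy does not change the reachable part of $d_i$. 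By transitivity of bisimilarity it therefore suffices to amalgamate $\Kmc$ (at $d_1$) with $\Kmc$ (at $d_2$). Working inside a single structure lets both coordinates range over the same universe, which is exactly what I will exploit when handling roles private to one signature.

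Let $Z$ be the maximal $\Sigma_1\cap\Sigma_2$-bisimulation of $\Kmc$ with itself; it is reflexive, symmetric and transitive, and $(d_1,d_2)\in Z$ since $\Imc_1,d_1\sim_{\ALC,\Sigma_1\cap\Sigma_2}\Imc_2,d_2$ holds already inside $\Kmc$. I take $\Delta^\Jmc=Z$ with root $e=(d_1,d_2)$, and put $(u_1,u_2)\in A^\Jmc$ iff $u_1\in A^\Kmc$ (for $A\in\Sigma_1$) or $u_2\in A^\Kmc$ (for $A\in\Sigma_2$); the two clauses agree on $\Sigma_1\cap\Sigma_2$ precisely because $(u_1,u_2)\in Z$. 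Roles are interpreted by cases: for $r\in\Sigma_1\cap\Sigma_2$ the genuine product, $(u_1,u_2)\mathrel{r^\Jmc}(u_1',u_2')$ iff $(u_1,u_1'),(u_2,u_2')\in r^\Kmc$ and $(u_1',u_2')\in Z$; for a private role $r\in\Sigma_1\setminus\Sigma_2$ I follow the first coordinate only and reset the second to it, $(u_1,u_2)\mathrel{r^\Jmc}(u_1',u_1')$ for each $(u_1,u_1')\in r^\Kmc$ (legitimate since $(u_1',u_1')\in Z$ by reflexivity); symmetrically for $r\in\Sigma_2\setminus\Sigma_1$; and roles outside $\Sigma_1\cup\Sigma_2$ are empty.

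The heart of the argument is to check that each projection $\pi_i\colon(u_1,u_2)\mapsto u_i$ is an $\ALC(\Sigma_i)$-bisimulation between $\Jmc$ and $\Kmc$. The Atom condition is immediate from the definition of $A^\Jmc$; Forth is routine, as every $\Sigma_i$-role edge of $\Jmc$ projects to a genuine $r^\Kmc$-edge; the interesting clause is Back, where for a common role I invoke the back-and-forth property of $Z$ to produce the missing second coordinate, and for a private $\Sigma_i$-role the reset successor supplies it directly. Once the two projections are in place I conclude as follows: every node of $\Jmc$ is $\Sigma_1$-bisimilar (via $\pi_1$) to a node of $\Kmc\models\Omc_1$ and $\Sigma_2$-bisimilar (via $\pi_2$) to a node of $\Kmc\models\Omc_2$, and since \ALC CIs are invariant under $\Sigma_i$-bisimilarity this gives $\Jmc\models\Omc_1\cup\Omc_2$; composing $\pi_i$ at $e$ with the disjoint-union bisimilarities yields $\Jmc,e\sim_{\ALC,\Sigma_1}\Imc_1,d_1$ and $\Jmc,e\sim_{\ALC,\Sigma_2}\Imc_2,d_2$.

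The step I expect to be the main obstacle is the treatment of the private roles, those in the symmetric difference of $\Sigma_1$ and $\Sigma_2$: following such a role makes the amalgam drift away from one of the two models, and one must ensure both that the drifting coordinate can be re-anchored (this is what the reflexive diagonal of $Z$ provides) and that the node reached still satisfies the ontology belonging to the \enquote{invisible} side. The latter is exactly where passing to $\Kmc=\Imc_1\uplus\Imc_2$ together with the hypothesis $\Imc_i\models\Omc_1\cup\Omc_2$ is essential, since it guarantees that a node coming from either coordinate satisfies both $\Omc_1$ and $\Omc_2$. Had we only assumed $\Imc_i\models\Omc_i$, this step would fail, which is the semantic reason why splitting the ontology in the CIP cannot be avoided.
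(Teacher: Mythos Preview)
Your proof is correct and is precisely the standard bisimulation-product (amalgamation) construction from modal logic that the paper points to; note that the paper does not actually spell out a proof of this lemma but merely refers to the analogous argument in the modal logic chapter, and your construction---disjoint union, maximal $\Sigma_1\cap\Sigma_2$-bisimulation as domain, coordinate-wise interpretation with the diagonal reset for private roles, projections as the two bisimulations---is exactly that argument. The only cosmetic omission is that you do not say what happens to concept names outside $\Sigma_1\cup\Sigma_2$ (interpret them as empty), and your use of the phrase ``CIs are invariant under $\Sigma_i$-bisimilarity'' should more precisely be ``membership in $\Lmc(\Sigma_i)$-concepts is preserved along $\Sigma_i$-bisimulations, and $\pi_i$ is total on $\Delta^{\Jmc}$''; both are harmless.
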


As announced, Lemmas~\ref{lem:joint-consistency} and~\ref{lem:amalgamation} can be used to prove that \ALC enjoys the CIP.  
\begin{proof}[Proof of Theorem~\ref{thm:cip} for \ALC]
Suppose $\Omc_1\cup\Omc_2\models C_1\sqsubseteq C_2$ for \ALC ontologies
$\Omc_1,\Omc_2$ and \ALC concepts $C_1,C_2$ with
$\Sigma_1=\sig{\Omc_1,C_1}$ and $\Sigma_2=\sig{\Omc_2,C_2}$. If there is no Craig
interpolant for $C_1$ and $C_2$ under $\Omc_1\cup\Omc_2$, then by
Lemma~\ref{lem:joint-consistency}, there are models $\Imc_1,\Imc_2$ of $\Omc_1\cup\Omc_2$ and $d_1\in\Delta^{\Imc_1},d_2\in\Delta^{\Imc_2}$ such that
$d_1\in C_1^{\Imc_1}$, $d_2\notin C_2^{\Imc_2}$, and
$\Imc_1,d_1\sim_{\ALC,\Sigma_1\cap \Sigma_2} \Imc_2,d_2$. By Lemma~\ref{lem:amalgamation}, there is a model $\Jmc$ of $\Omc_1\cup\Omc_2$ and $e\in \Delta^\Jmc$ such that $\Jmc,e\sim_{\ALC,\Sigma_1}\Imc_1,d_1$ and $\Jmc,e\sim_{\ALC,\Sigma_2}\Imc_2,d_2$. Now, since $d_1\in C_1^{\Imc_1}$, $\sig{C_1}\subseteq \Sigma_1$, and $\Jmc,e\sim_{\ALC,\Sigma_1}\Imc_1,d_1$, we also have
$e\in C_1^{\Jmc}$. Analogously, one can show that $d_2\notin C_2^{\Jmc}$. This is in contradiction to the assumed subsumption $\Omc_1\cup\Omc_2\models C_1\sqsubseteq C_2$.
\end{proof}

We come back to the problem of \Lmc-interpolant existence. As mentioned above, the problem trivializes for DLs that enjoy the CIP if we ask for interpolants in the common signature.
Moreover, it is not difficult to reduce the existence of $\Lmc(\Sigma)$-interpolants for given signature $\Sigma$ to the existence of Craig interpolants. Indeed, one can verify that, for any standard DL \Lmc, the following are equivalent for all \Lmc concepts $C_1,C_2$, \Lmc ontologies $\Omc$, signatures $\Sigma$, and $C_{2\Sigma}$ and $\Omc_\Sigma$ obtained from $C_2$ and $\Omc$, respectively, by renaming all symbols not in $\Sigma$ uniformly to fresh symbols:
\begin{itemize}

  \item there is an $\Lmc(\Sigma)$-interpolant for $C_1$ and $C_2$ under $\Omc$;
  
  \item there is a Craig interpolant for $C_1$ and $C_{2\Sigma}$ under $\Omc\cup \Omc_\Sigma$.

\end{itemize}
Since existence of Craig interpolants coincides with the respective subsumption relationship in DLs enjoying the CIP, we have reduced \Lmc-interpolant existence to subsumption checking for such DLs. A reduction of subsumption to interpolant existence is also possible. Since subsumption in all DLs mentioned in Theorem~\ref{thm:cip} is \ExpTime-complete~\cite{DBLP:phd/dnb/Tobies01}, we obtain:   
\begin{theorem}\label{thm:interpolant-existence-cip}
  For any DL \Lmc in Theorem~\ref{thm:cip}, \Lmc-interpolant existence is \ExpTime-complete. 
\end{theorem}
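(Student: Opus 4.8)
The plan is to establish \ExpTime-completeness by showing that, for each DL $\Lmc$ covered by Theorem~\ref{thm:cip}, deciding $\Lmc$-interpolant existence is polynomially interreducible with deciding subsumption, which is \ExpTime-complete by~\cite{DBLP:phd/dnb/Tobies01}. One direction of the reduction yields the upper bound and the other the matching lower bound.

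For the upper bound, I would start from the equivalence stated just above the theorem: an $\Lmc(\Sigma)$-interpolant for $C_1$ and $C_2$ under $\Omc$ exists if and only if there is a Craig interpolant for $C_1$ and $C_{2\Sigma}$ under $\Omc\cup\Omc_\Sigma$, where $C_{2\Sigma}$ and $\Omc_\Sigma$ arise from $C_2$ and $\Omc$ by uniformly renaming every symbol outside $\Sigma$ to a fresh one. The purpose of the renaming is that the only symbols shared by $\{\Omc,C_1\}$ and $\{\Omc_\Sigma,C_{2\Sigma}\}$ lie in $\Sigma$, so any Craig interpolant for this split is automatically an $\Lmc(\Sigma)$ concept. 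Next I would invoke the CIP (Theorem~\ref{thm:cip}): a Craig interpolant exists iff the subsumption $\Omc\cup\Omc_\Sigma\models C_1\sqsubseteq C_{2\Sigma}$ holds. Here the nontrivial direction is exactly the CIP, while the converse is immediate, since any interpolant $I$ satisfies $\Omc\cup\Omc_\Sigma\models C_1\sqsubseteq I$ and $\Omc\cup\Omc_\Sigma\models I\sqsubseteq C_{2\Sigma}$, hence $\Omc\cup\Omc_\Sigma\models C_1\sqsubseteq C_{2\Sigma}$ by transitivity. Since $\Omc\cup\Omc_\Sigma$, $C_1$, and $C_{2\Sigma}$ are computable in polynomial time and of polynomial size, the whole question reduces to a single subsumption check, placing $\Lmc$-interpolant existence in \ExpTime.

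For the lower bound, I would reduce subsumption to interpolant existence. Given a subsumption instance $\Omc\models C\sqsubseteq D$, I form the interpolant-existence instance $\langle\Omc,C,D,\Sigma\rangle$ with $\Sigma=\sig{\Omc,C,D}$ (any $\Sigma\supseteq\sig{D}$ works). If $\Omc\models C\sqsubseteq D$, then $D$ itself is an $\Lmc(\Sigma)$-interpolant, as $\sig{D}\subseteq\Sigma$; conversely, any $\Lmc(\Sigma)$-interpolant $I$ gives $\Omc\models C\sqsubseteq I\sqsubseteq D$, hence $\Omc\models C\sqsubseteq D$. Thus interpolant existence coincides with the subsumption, so \ExpTime-hardness of subsumption transfers directly. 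Combining the two bounds yields \ExpTime-completeness.

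The routine ingredients—the transitivity direction, the polynomial size of the renamed instance, and the trivial lower-bound reduction—are unproblematic. The genuine content lives entirely in the CIP of Theorem~\ref{thm:cip}: it is precisely the guarantee that Craig-interpolant existence collapses to subsumption that turns a seemingly second-order existence question (``is there \emph{some} $\Lmc(\Sigma)$ concept $I$?'') into a decidable entailment check. The one place where I would take care is verifying that the renaming genuinely confines the shared signature to $\Sigma$, so that the Craig interpolant produced for the split is usable as an $\Lmc(\Sigma)$-interpolant in the original instance.
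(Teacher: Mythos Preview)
Your proposal is correct and follows essentially the same approach as the paper: reduce $\Lmc(\Sigma)$-interpolant existence to Craig-interpolant existence via the renaming trick, then invoke the CIP to collapse the latter to a subsumption check, and handle the lower bound by the trivial reduction of subsumption to interpolant existence with $\Sigma$ chosen large enough. In fact, you spell out the lower-bound reduction more explicitly than the paper, which merely notes that ``a reduction of subsumption to interpolant existence is also possible.''
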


It is worth noting that the proof of Theorem~\ref{thm:cip} (and thus of Theorem~\ref{thm:interpolant-existence-cip}) is not constructive since the proof of the model-theoretic characterization in Lemma~\ref{lem:joint-consistency} relies on compactness and does not provide an interpolant if one exists. We will address the computation problem in Section~\ref{sec:interpolant-computation}. 

Here, we will turn our attention to the fact that, unfortunately, not all DLs enjoy the CIP.
\begin{theorem}[\cite{TenEtAl13,DBLP:journals/tocl/ArtaleJMOW23}]\label{thm:alcocip}
  \ALCO and \ALCH and their extensions by $\Smc,\Imc,\Fmc$ do not enjoy the Craig interpolation property.
\end{theorem}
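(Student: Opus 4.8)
The plan is to refute the CIP for each of these logics by exhibiting a single bad instance and certifying it model-theoretically. Concretely, for $\Lmc\in\{\ALCO,\ALCH\}$ (and their extensions) I would produce $\Lmc$ ontologies $\Omc_1,\Omc_2$ and $\Lmc$ concepts $C_1,C_2$ with $\Omc_1\cup\Omc_2\models C_1\sqsubseteq C_2$ for which no $\Lmc(\Sigma)$-interpolant exists, where $\Sigma=\sig{\Omc_1,C_1}\cap\sig{\Omc_2,C_2}$. The key leverage is Lemma~\ref{lem:joint-consistency}: non-existence of an interpolant is equivalent to $C_1$ and $\neg C_2$ being jointly $\sim_{\Lmc,\Sigma}$-consistent under $\Omc_1\cup\Omc_2$, i.e., to the existence of models $\Imc_1,\Imc_2$ of $\Omc_1\cup\Omc_2$ and points $d_1\in C_1^{\Imc_1}$, $d_2\in(\neg C_2)^{\Imc_2}$ with $\Imc_1,d_1\sim_{\Lmc,\Sigma}\Imc_2,d_2$. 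Viewed through the \ALC proof of Theorem~\ref{thm:cip}, such an instance is exactly a place where the amalgamation property (Lemma~\ref{lem:amalgamation}) breaks: there, joint consistency was contradicted \emph{using} amalgamation, so a CIP counterexample must be a valid subsumption that simultaneously witnesses failure of amalgamation for the richer logic.

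For \ALCO I would exploit the extra \textbf{AtomI} clause of the $\ALCO(\Sigma)$-bisimulation, which only tracks individual names lying in $\Sigma$. Nominals provide a \emph{non-local} identity mechanism: $\{a\}$ denotes one and the same element throughout an interpretation, so a concept may force syntactically different role-successors to coincide. The idea is to build $\Imc_1,\Imc_2$ that are $\sim_{\ALCO,\Sigma}$-bisimilar with the responsible nominal $a\notin\Sigma$ (so \textbf{AtomI} imposes nothing on $a$), while the identifications demanded on the two sides are incompatible, so that no single model of $\Omc_1\cup\Omc_2$ can be $\Sigma_1$-bisimilar to $\Imc_1,d_1$ and $\Sigma_2$-bisimilar to $\Imc_2,d_2$ at once. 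Reading off $C_1,C_2$ from these interpretations then yields a valid subsumption with no interpolant.

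For \ALCH the bisimulation notion coincides with that of \ALC (role inclusions are ontology axioms, not concept constructors), so the entire obstruction sits at the amalgamation step. A role inclusion $r\sqsubseteq s$ is a \emph{global} constraint forcing $r^\Jmc\subseteq s^\Jmc$ in every model, whereas a $\sim_{\ALC,\Sigma}$-bisimulation need not respect it once $\{r,s\}\not\subseteq\Sigma$. I would therefore arrange two $\Sigma$-bisimilar pointed models of $\Omc_1\cup\Omc_2$ in which one side realises an $r$-successor of a certain kind while the other forbids the corresponding $s$-successor---a conflict that is invisible at $\Sigma$ but that no model of the role inclusions can amalgamate.

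The main obstacle is the signature bookkeeping, not the model constructions. A naive single role inclusion or nominal tends to \emph{leak} the critical symbol into the common signature: the axiom $r\sqsubseteq s$ puts both $r$ and $s$ into whichever side carries it, and as soon as the other side's concept mentions the shared role, the conflict becomes $\Sigma$-visible and an interpolant reappears (typically of the form $\exists s.D$ or $\forall r.D$). One thereby ends up with $\Sigma_1\subseteq\Sigma_2$ or $\Sigma_2\subseteq\Sigma_1$, in which case $C_1$ or $C_2$ is itself an interpolant. The real work---carried out in~\cite{TenEtAl13,DBLP:journals/tocl/ArtaleJMOW23}---is to choose $\Omc_1,\Omc_2,C_1,C_2$ with genuinely \emph{crossing} private symbols (neither $\Sigma_1\subseteq\Sigma_2$ nor $\Sigma_2\subseteq\Sigma_1$) so that the reason for the entailment provably mixes the two hidden parts and cannot be captured by any $\Lmc(\Sigma)$ concept; joint $\Sigma$-consistency of $C_1,\neg C_2$ is then certified by the bisimilar models and Lemma~\ref{lem:joint-consistency}. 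Finally, since each of $\Smc$, $\Imc$, $\Fmc$ only adds expressive power on top of the offending feature, the same instances---read against the correspondingly enriched bisimulations---keep witnessing failure of the CIP, giving Theorem~\ref{thm:alcocip}.
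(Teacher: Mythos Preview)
Your plan matches the paper's approach: invoke Lemma~\ref{lem:joint-consistency} and, for a valid subsumption with shared signature $\Sigma$, exhibit two $\Lmc(\Sigma)$-bisimilar pointed models with $d_1\in C_1^{\Imc_1}$ and $d_2\in(\neg C_2)^{\Imc_2}$. The paper's concrete \ALCO witness is in fact tiny---empty ontologies, $C_1=\{a\}\sqcap\exists r.\{a\}$, $C_2=A\to\exists r.A$, $\Sigma=\{r\}$---so the ``signature bookkeeping'' you flag as the main obstacle is minimal here; the hidden nominal on one side and the hidden concept name on the other already cross exactly as you describe.

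Your final step, however, does not go through. Adding $\Smc,\Imc,\Fmc$ enlarges the space of candidate interpolants \emph{and} refines the bisimulation, and both effects work \emph{against} failure of the CIP, not for it: an instance that defeats the CIP in $\Lmc$ may well admit an interpolant in a proper extension $\Lmc'$ (the paper itself notes $\ALCH\subseteq\text{GF}^2$ while GF$^2$ enjoys the CIP). Concretely, the paper's \ALCO models from Figure~\ref{fig:alcocip} do not survive the passage to \ALCOI: the element $b$ in $\Imc_2$ has no $r$-predecessor while the nominal in $\Imc_1$ has itself as one, so the pair is not $\ALCOI(\{r\})$-bisimilar and no longer certifies joint consistency. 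One can repair $\Imc_2$ (e.g., by closing it under suitable predecessors), but the point is that each extension requires its own verification---this is why the paper treats only \ALCO explicitly and defers \ALCH and all the extensions to~\cite{TenEtAl13,DBLP:journals/tocl/ArtaleJMOW23} rather than claiming they are automatic.
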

\begin{proof}
  We show the argument only for \ALCO, for \ALCH and their extensions see~\cite{TenEtAl13,DBLP:journals/tocl/ArtaleJMOW23}.
  Consider \ALCO concepts $C_1=\{a\}\sqcap
  \exists r.\{a\}$ and $C_2=A\to\exists r.A$ with common signature $\{r\}$. Then $\models
  C_1\sqsubseteq C_2$, but there is no $\ALCO(\{r\})$-interpolant for
  $C_1$ and $C_2$.
  Indeed, the interpretations in
  Figure~\ref{fig:alcocip} witness that $C_1$ and $\neg C_2$ are jointly
  $\sim_{\ALCO,\{r\}}$-consistent, and by
  Lemma~\ref{lem:joint-consistency} there is no Craig interpolant.
\end{proof}

\begin{figure}[t]
  \centering
  \begin{tikzpicture}
    \tikzset{
      dot/.style = {draw, fill=black, circle, inner sep=0pt, outer sep=1pt, minimum size=2pt}
    }

    \draw (-2,0.5) node[label=$\Imc_{1}$] (I1) {};

    \draw (0,0) node[dot, label={[shift={(-0.25,-0.25)},align=center]:$a$}, label={[shift={(0,-0.75)},align=center]:$C_{1}$}] (a) {};

    \draw (-1,0) node[dot, label={[shift={(-0.25,-0.25)},align=center]:$b$}] (b1) {};


    \draw[->, >=stealth]  (a) edge [loop above] node[] {$r$} ();

    \draw (7,0.5) node[label=$\Imc_{2}$] (I2) {};

    \draw (6,-0) node[dot, label={[shift={(0.25,-0.25)},align=center]:$a$}] (a2) {};

    \draw (5,-0) node[dot, label={[shift={(-0.3,-0.4)},align=center]:$A$}, label={[shift={(0.25,-0.25)},align=center]:$b$}, label={[shift={(0,-0.75)},align=center]:$\lnot C_{2}$}] (b2) {};

    \draw (5,1) node[dot, label=east:$d$] (d) {};


    \draw[->, >=stealth] (b2) -- (d) node[midway, left] {$r$};

    \draw[->, >=stealth]  (d) edge [loop above] node[] {$r$} ();


    \path[black, dashed, bend left] (a) edge (b2);
    \draw (2.5,1.25) node[label=$\sim_{\mathcal{ALCO}, \Sigma}$] () {};

    \path[black, dashed, bend left] (a) edge (d);
    \draw (2.5,0) node[label=$\sim_{\mathcal{ALCO}, \Sigma}$] () {};

  \end{tikzpicture}

  \caption{Interpretations $\Imc_{1}$ and $\Imc_{2}$ illustrating
  the proof of Theorem~\ref{thm:alcocip}.
  }
  \label{fig:alcocip}
\end{figure}

One consequence of the lack of the CIP is that the reduction of
interpolant existence to subsumption checking used to prove Theorem~\ref{thm:interpolant-existence-cip} does not work anymore.
Fortunately, it turns out that the problem is still decidable, although typically
harder than subsumption. We exploit that Lemma~\ref{lem:joint-consistency}
provides a reduction of interpolant existence to the problem of deciding joint
consistency under ontologies, so it suffices to decide the latter. For didactic
purposes and since we need it later, we provide first a relatively
simple algorithm for deciding joint consistency in \ALC, which is inspired by
standard type elimination algorithms for deciding satisfiability in
DLs~\cite{DL-Textbook} and is closely related to the type elimination sequences
in \refchapter{chapter:sixproofs}. 

To describe the algorithm deciding joint consistency, let us fix an \ALC ontology \Omc, \ALC concepts $C_1,C_2$, and a signature $\Sigma$.
Let $\Gamma$ denote the set of all subconcepts that occur in \Omc, $C_1,C_2$, closed under single negation. A \emph{type} is any set $t\subseteq \Gamma$ such that there is a model $\Imc$ of \Omc and an element $d\in \Delta^\Imc$ such that $t=\tp_\Imc(d)$ where, as in Section~\ref{sec:prop-ui},
\[\tp_\Imc(d)=\{C\in\Gamma\mid d\in C^\Imc\}.\]
A \emph{mosaic $m$} is a pair $m=(t_1,t_2)$ of types. Intuitively, a mosaic $(t_1,t_2)$ provides an abstract description of two elements $d_1,d_2$ in two models $\Imc_1,\Imc_2$ of \Omc which have types $t_1,t_2$ and are $\ALC(\Sigma)$-bisimilar. Of course, not all mosaics are realizable in this sense and the goal of the elimination is to identify those that are.

We write $t\leadsto_r t'$ if an element of type $t'$ is a viable $r$-successor of an element of type $t$, that is, $\{C\mid \forall r.C\in t\}\subseteq t'$. 
We use $(t_1,t_2)\leadsto_r(t_1',t_2')$ to abbreviate $t_1\leadsto_r t_1'$, $t_2\leadsto_r t_2'$.
Let \Mmc be a set of mosaics. We call $(t_1,t_2)\in \Mmc$ \emph{bad} if it violates one of the following conditions:
\begin{description}
  \item[(Atomic Consistency)\label{desc:atomic}] for every $A\in\Sigma$, $A\in t_1$ iff $A\in t_{2}$;

  \item[(Existential Saturation)\label{desc:existential}] for every $r\in \Sigma$, every $i=1,2$, and every $\exists r.C\in t_i$, there is $(t_1',t_2')\in \Mmc$ such that $C\in t_i'$ and 
  $(t_1,t_2)\leadsto_r (t_{1}',t_2')$.
\end{description}
Clearly, a mosaic $(t_1,t_2)$ violating atomic consistency cannot be realized as required, due to concept name $A\in \Sigma$. Similarly, a mosaic violating existential saturation cannot be realized since it lacks a viable $r$-successor for one of the $t_i$. We have the following characterization of joint consistency: 
\begin{lemma}\label{lem:joint-consistency-elimination}
  $C_1$ and $C_2$ are jointly $\sim_{\ALC,\Sigma}$-consistent under \Omc iff there is a set $\Mmc^*$ of mosaics that does not contain bad mosaics and some $(t_1,t_2)\in \Mmc^*$ with $C_1\in t_1$ and $C_2\in t_2$.
\end{lemma}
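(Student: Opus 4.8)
The plan is to prove the two directions of the equivalence separately; the direction from joint consistency to a good mosaic set is a routine read-off from a bisimulation, whereas the converse requires a model construction that is the main technical step.

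For the left-to-right direction, assume $C_1$ and $C_2$ are jointly $\sim_{\ALC,\Sigma}$-consistent under $\Omc$, witnessed by models $\Imc_1,\Imc_2\models\Omc$, elements $d_1\in C_1^{\Imc_1}$ and $d_2\in C_2^{\Imc_2}$, and an $\ALC(\Sigma)$-bisimulation $Z$ with $(d_1,d_2)\in Z$. I would set
\[
  \Mmc^* = \{(\tp_{\Imc_1}(e_1),\tp_{\Imc_2}(e_2))\mid (e_1,e_2)\in Z\},
\]
and verify the two required properties. The designated mosaic $(\tp_{\Imc_1}(d_1),\tp_{\Imc_2}(d_2))$ lies in $\Mmc^*$ and satisfies $C_1\in\tp_{\Imc_1}(d_1)$ and $C_2\in\tp_{\Imc_2}(d_2)$. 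Atomic Consistency follows from the Atom condition of $Z$, and Existential Saturation follows from Back and Forth: an existential $\exists r.C\in\tp_{\Imc_i}(e_i)$ yields a witnessing $r$-successor in $\Imc_i$, whose $Z$-partner supplies the required successor mosaic, and $\leadsto_r$ holds in both coordinates because both successors arise from genuine $r$-edges. Hence $\Mmc^*$ contains no bad mosaic.

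For the converse, suppose such a set $\Mmc^*$ exists; I would construct two models of $\Omc$ that are $\Sigma$-bisimilar at elements realizing $C_1$ and $C_2$. The key idea is to separate the two jobs the construction must perform. First I would build a \emph{$\Sigma$-backbone} by unravelling $\Mmc^*$ into a tree of mosaics rooted at the designated mosaic, where for each $\Sigma$-role existential in either coordinate of a node's mosaic, Existential Saturation supplies a child mosaic; this controls the $\Sigma$-role successors of the \emph{coordinated} nodes and will carry the bisimulation. Second, to make each projection a genuine model of all of $\Omc$ — including non-$\Sigma$ existentials and all concept inclusions — I would exploit that types are defined \emph{semantically}: every type $t$ is realized at some element of an actual model of $\Omc$, so at each coordinated node I witness its non-$\Sigma$ existentials by fresh \emph{free} successors and graft a realizing model of $\Omc$ below each of them. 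Taking $\Imc_1,\Imc_2$ to be the first and second projections and $Z$ the set of pairs arising from coordinated backbone nodes, I would then prove a truth lemma, by induction on concepts in $\Gamma$, stating that every element of type $t$ satisfies exactly the concepts in $t$. This yields both that $\Imc_1,\Imc_2\models\Omc$ (each type, being realized in a model of $\Omc$, must contain the right-hand side of a CI in $\Omc$ whenever it contains the left-hand side) and that the roots realize $C_1,C_2$; that $Z$ is an $\ALC(\Sigma)$-bisimulation then reduces to the observation that the $\Sigma$-successors of a coordinated node are exactly its backbone children.

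The main obstacle is this backward construction, precisely the tension between the two requirements it must meet at once: the bisimulation constrains only $\Sigma$-roles and demands that coordinated nodes have exactly the $\Sigma$-successors dictated by $\Mmc^*$ (no more, no fewer), whereas model-hood of each projection demands that all existentials over all roles be witnessed and all universals respected. The canonical ``all $\leadsto_r$-edges'' model of $\Omc$ meets the latter but fails Back because it carries too many $\Sigma$-edges; following only the backbone would leave non-$\Sigma$ existentials unwitnessed. The resolution — witnessing non-$\Sigma$ and local structure by grafting actual models of $\Omc$ at the free successors, while keeping the $\Sigma$-interface of coordinated nodes under the tight control of Existential Saturation — is the delicate point, and carrying the truth lemma uniformly across both the backbone and the grafted parts is where the real work lies.
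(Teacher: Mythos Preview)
Your proposal is correct; the paper does not spell out a proof of this lemma but refers to standard type-elimination techniques, and what you outline is precisely that standard argument. The forward direction via reading off mosaics from a witnessing bisimulation, and the converse via the backbone-plus-grafting construction that separates the $\Sigma$-controlled interface (handled by Existential Saturation) from the full model-hood requirements (handled by grafting realizing models at free successors), are exactly the canonical moves here.
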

%
%
The existence of an $\Mmc^*$ as in Lemma~\ref{lem:joint-consistency-elimination} can be decided by a
simple mosaic elimination algorithm. The idea is to compute a sequence of mosaics 
\[\Mmc_0,\Mmc_1,\Mmc_2,\ldots\]
where $\Mmc_0$ is the set of all mosaics, and for $i\geq 0$, $\Mmc_{i+1}$ is obtained from $\Mmc_i$ by eliminating all bad mosaics from $\Mmc_i$. Since $\Mmc_0$ is finite, this process reaches a fixpoint after finitely many steps. Then, $C_1$ and $C_2$ are jointly $\sim_{\ALC,\Sigma}$-consistent under \Omc iff the fixpoint $\Mmc^*$ of that process contains $(t_1,t_2)$ as in Lemma~\ref{lem:joint-consistency-elimination}.
The algorithm runs in exponential time in the size of the input, since the number of mosaics is at most exponential in the size of the input, and each mosaic can be checked for badness in time polynomial in the number of mosaics. Thus, the approach to interpolant existence via deciding joint consistency is not worse than the sketched reduction to subsumption checking. We will see in Section~\ref{sec:interpolant-computation} how to extend it to compute interpolants as well.  

This idea can be generalized to decide \Lmc-interpolant existence for DLs lacking the CIP.

\begin{theorem}[\cite{DBLP:journals/tocl/ArtaleJMOW23}]\label{thm:interpolant-existence}
  For $\Lmc\in\{\mathcal{ALCO},\mathcal{ALCH},\mathcal{ALCOI},\mathcal{ALCHI}\}$, \Lmc-interpolant existence is \TwoExpTime-complete.
\end{theorem}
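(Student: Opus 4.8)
The plan is to reduce, exactly as in the CIP case, \Lmc-interpolant existence to the complementary \emph{joint consistency} problem. The reduction from Theorem~\ref{thm:interpolant-existence-cip} and the model-theoretic characterisation of Lemma~\ref{lem:joint-consistency} are not specific to \ALC: once we instantiate $\sim_{\Lmc,\Sigma}$ with the bisimulation notion capturing \Lmc (e.g.\ the one obtained by adding the \textbf{AtomI} clause for nominals), we obtain that an $\Lmc(\Sigma)$-interpolant for $C_1,C_2$ under \Omc exists iff $C_1$ and $\neg C_2$ are \emph{not} jointly $\sim_{\Lmc,\Sigma}$-consistent under \Omc. So it suffices to decide joint consistency in \TwoExpTime and to prove a matching lower bound.

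For the upper bound I would generalise the mosaic elimination behind Lemma~\ref{lem:joint-consistency-elimination}, but over a richer state space. The three features are accommodated as follows.
\begin{itemize}
  \item \emph{Role hierarchies.} Redefine $\leadsto_r$ and the \textbf{(Existential Saturation)} condition so that an $r$-edge is simultaneously an $s$-edge whenever $r\sqsubseteq s\in\Omc$, and take the bisimulation-induced \textbf{Back}/\textbf{Forth} obligations over all sub-roles of a $\Sigma$-role.
  \item \emph{Inverse roles.} Let mosaics carry predecessor obligations, so that an existential $\exists r.C$ is realised by an edge whose target type is consistent with all $\forall r^-.D$ demanded back along that edge, and saturation is checked in both directions.
  \item \emph{Nominals.} Guess the types $t^a_1,t^a_2$ of each named individual in the two models (there are only polynomially many nominals), impose the global constraint that each nominal is realised exactly once in each model, and let the \textbf{AtomI} clause force $a^{\Imc_1}$ and $a^{\Imc_2}$ into the same mosaic.
\end{itemize}
The new phenomenon, and the reason the complexity rises from \ExpTime to \TwoExpTime, is that a single local type no longer decides whether a point can carry one side of an $\Lmc(\Sigma)$-bisimulation to a model of \Omc: nominals can be reached along arbitrarily long role paths, and inverse roles make the relevant context non-local. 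Following the idea already used in Section~\ref{sec:prop-ui}, I would therefore take states to be \emph{extension sets} in the spirit of $\Ext$, i.e.\ sets of types rather than single types; there are doubly exponentially many such states, and either a fixpoint elimination or an emptiness test for a looping tree automaton over them runs in time polynomial in the number of states, hence double exponential overall.

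For the lower bound I would show \TwoExpTime-hardness by a reduction from a \TwoExpTime-complete problem, such as the word problem for exponential-space alternating Turing machines (equivalently, an exponential-corridor tiling problem). The encoding builds two models together with the required $\Lmc(\Sigma)$-bisimulation and uses the weak extra feature---role inclusions for \ALCH, nominals for \ALCO---to force the two sides to agree on an exponentially large certificate (a computation tree, resp.\ tiling), whose local correctness the construction then verifies; the bisimilarity requirement is what rules out ``cheating'' and thereby encodes the alternation.

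The main obstacle I expect is the upper bound, specifically making the enriched elimination simultaneously \emph{correct} and \emph{within \TwoExpTime}. The delicate point is the global coupling introduced by nominals: the \textbf{AtomI} clause links the two models $\Imc_1,\Imc_2$ at the named elements, so local saturation no longer suffices and one must certify, over the doubly-exponential state space, that a single coherent pair of models realising all guessed nominal types actually exists. Getting the inverse-role predecessor bookkeeping to interact soundly with this nominal coupling---without a further exponential blowup---is where the real work lies; by contrast, the reduction to joint consistency and the hardness construction follow established patterns.
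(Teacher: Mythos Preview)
Your proposal is correct and matches the paper's approach: reduce to joint consistency via Lemma~\ref{lem:joint-consistency}, decide it by mosaic elimination over pairs $(T_1,T_2)$ of \emph{sets} of types (yielding a doubly-exponential state space and hence a \TwoExpTime procedure), and obtain hardness by encoding exponential-space alternating Turing machines. The paper's motivation for the move to sets of types is slightly sharper than your non-locality/$\Ext$ analogy---it comes directly from the phenomenon in Figure~\ref{fig:alcocip}, where a single nominal forces several elements of $\Imc_2$ with distinct types to be mutually $\Sigma$-bisimilar, so a mosaic must record the whole bisimulation class on each side---but the resulting construction and the ``good for nominals'' global constraint are exactly what you sketch.
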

Due to Lemma~\ref{lem:joint-consistency}, it suffices to show the results for joint consistency.
We sketch it here for \ALCO, see~\cite{DBLP:journals/tocl/ArtaleJMOW23} for a full proof and the other DLs covered in the theorem. To get an intuition for the lower bound proof, it is instructive to look again at the proof of Theorem~\ref{thm:alcocip} and particularly Figure~\ref{fig:alcocip}, where the elements $b,d$ in $\Imc_2$ are \emph{forced} to be bisimilar, as they are both bisimilar to $a$ in $\Imc_1$. This idea is
extended to force \emph{exponentially many} elements to be bisimilar in witnesses for joint $\sim_{\Lmc,\Sigma}$-consistency, which in turn is exploited to synchronize configurations of expontially space bounded alternating Turing machines. 
 
For the upper bound we extend the mosaic elimination algorithm that was used to decide joint consistency for \ALC to \ALCO. Let $\Omc$ be an \ALCO ontology, $C_1,C_2$ be \ALCO concepts, and $\Sigma$ be a signature. Let $\Gamma$ be again the set of all subconcepts of $\Omc,C_1,C_2$. A \emph{type} is defined as before as a subset of $\Gamma$ that is realizable in a model of \Omc. 
To address that one can force elements to be bisimilar, we have to extend the notion of a mosaic $m$ to be a pair $(T_1,T_2)$ of \emph{sets} of types. Intuitively, a mosaic $(T_1,T_2)$ describes collections of elements in two interpretations $\Imc_1,\Imc_2$ which realize precisely the types in $T_1,T_2$ and are mutually $\ALCO(\Sigma)$-bisimilar. Naturally, not all such mosaics can be realized as described and the goal is to find the realizable ones. We write $(T_1,T_2)\leadsto_r (T_1',T_2')$ if for $i=1,2$ and every $t\in T_i$, there is $t'\in T_i'$ with $t\leadsto_r t'$, that is, every type in $T_i$ has a viable $r$-successor in $T_i'$.

Let $\Mmc$ be a set of mosaics. A mosaic $(T_1,T_2)\in \Mmc$ is \emph{bad} if it violates one of the conditions atomic consistency and existential saturation, suitably generalized to sets of types:
\begin{description}
  \item[(Atomic Consistency)] for every $A\in\Sigma$ and every $t,t'\in T_1\cup T_2$, $A\in t$ iff $A\in t'$;
  \item[(Existential Saturation)] for every $r\in \Sigma$, $i=1,2$, every $t\in T_i$, and every $\exists r.C\in t$, there is $(T_1',T_2')\in \Mmc$ with $(T_1,T_2)\leadsto_r (T_1',T_2')$ and such that $C\in t'$ for some $t'\in T_i'$ with $t\leadsto_r t'$.

\end{description}
We need a further property of sets of mosaics to ensure that nominals are handled correctly: they are realized precisely once in every interpretation. A set $\Mmc$ of mosaics is \emph{good for nominals} if for every individual $a\in \Gamma$ and $i=1,2$, there is exactly one $t^i_a$ with $\{a\}\in t^i_a\in\bigcup_{(T_1,T_2)\in \Mmc} T_i$ and exactly one pair $(T_1,T_2)\in\Mmc$ with $t_a^i\in T_i$. Moreover, if $a\in \Sigma$, then the pair takes one of the forms $(\{t^1_a\},\{t^2_a\})$, $(\emptyset,\{t^2_a\})$, or $(\{t^1_a\},\emptyset)$.
One can then show:
\begin{lemma}[Lemma~6.5 in~\cite{DBLP:journals/tocl/ArtaleJMOW23}]\label{lem:alco-joint-consistency}
  $C_1$ and $C_2$ are jointly $\sim_{\ALCO,\Sigma}$-consistent iff there is a set $\Mmc^*$ of mosaics that is good for nominals and contains no bad mosaic such that there is $(T_1,T_2)\in\Mmc^*$ and $t_1\in T_1, t_2\in T_2$ with $C_1\in t_1$ and $C_2\in t_2$.
\end{lemma}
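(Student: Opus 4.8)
The plan is to prove Lemma~\ref{lem:alco-joint-consistency} by establishing the two directions of the equivalence separately, following the same overall strategy as for the \ALC case in Lemma~\ref{lem:joint-consistency-elimination}, but taking care of the two new ingredients introduced by nominals: mosaics are now pairs of \emph{sets} of types, and we additionally require the set of mosaics to be \emph{good for nominals}. I would first explain why the step from single types to sets of types is needed: as illustrated in the proof of Theorem~\ref{thm:alcocip}, nominals can force several distinct elements (realizing possibly different types) to be mutually bisimilar to a single element in the other interpretation. A mosaic $(T_1,T_2)$ must therefore record the whole collection of types of elements that are grouped together under the bisimulation, so that the interaction with nominals can be controlled globally.

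For the direction from right to left, I would assume we are given a set $\Mmc^*$ that is good for nominals, contains no bad mosaic, and contains a distinguished mosaic $(T_1,T_2)$ with types $t_1\in T_1$, $t_2\in T_2$ satisfying $C_1\in t_1$ and $C_2\in t_2$. The goal is to build two models $\Imc_1,\Imc_2$ of $\Omc$ together with elements $d_1\in C_1^{\Imc_1}$, $d_2\in C_2^{\Imc_2}$ and an $\ALCO(\Sigma)$-bisimulation linking them. I would construct $\Imc_1,\Imc_2$ by an unravelling/saturation procedure: start from the distinguished mosaic, and repeatedly use \textbf{Existential Saturation} to add $r$-successors witnessing each $\exists r.C$ in each type, always choosing successor mosaics provided by the absence of bad mosaics. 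The relation $Z$ pairing elements coming from a common mosaic is then an $\ALCO(\Sigma)$-bisimulation: \textbf{Atom} follows from atomic consistency, \textbf{Back}/\textbf{Forth} from existential saturation together with the $\leadsto_r$ relation on mosaics, and \textbf{AtomI} from good-for-nominals. The subtle point is that each nominal $\{a\}$ must be realized by \emph{exactly one} element in each $\Imc_i$; good-for-nominals guarantees a unique type $t^i_a$ and a unique containing mosaic, so during saturation I would identify all copies that should realize $a$ into a single element, and verify that this identification respects both the typing (each element keeps satisfying exactly the concepts in its type) and the bisimulation.

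For the converse direction, I would start from witnessing models $\Imc_1,\Imc_2$ of $\Omc$ with $d_1\in C_1^{\Imc_1}$, $d_2\in C_2^{\Imc_2}$ and an $\ALCO(\Sigma)$-bisimulation $Z$ with $(d_1,d_2)\in Z$, and read off a set of mosaics $\Mmc^*$. For each pair $(d,e)\in Z$ I would collect the types realized in the two $Z$-connected clusters and form a mosaic $(T_1,T_2)$; taking $\Mmc^*$ to be the set of all such mosaics, I would check that it contains no bad mosaic (atomic consistency from \textbf{Atom}, existential saturation from \textbf{Back}/\textbf{Forth}) and is good for nominals (using \textbf{AtomI} and that each nominal is interpreted as a singleton in $\Imc_1,\Imc_2$). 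The distinguished mosaic arises from the pair $(d_1,d_2)$.

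I expect the main obstacle to be the correct treatment of nominals in the model-construction (right-to-left) direction, specifically the interplay between the saturation process, which naturally produces tree-like unravellings with many copies, and the good-for-nominals condition, which demands a single representative per nominal in each interpretation. Merging copies to enforce uniqueness of nominals can create new role edges and thereby new obligations, so one must argue that the merge terminates and preserves both the types of all elements and the bisimulation conditions simultaneously; this is exactly the point where the generalization from single types to sets of types pays off, as the sets $T_i$ already anticipate which types get glued together. The remaining verifications are routine adaptations of the \ALC argument, and for full details I would refer to~\cite{DBLP:journals/tocl/ArtaleJMOW23}.
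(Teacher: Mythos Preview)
The paper does not give its own proof of this lemma; it is stated with a reference to~\cite{DBLP:journals/tocl/ArtaleJMOW23} and then used as a black box for the complexity upper bound. Your high-level strategy---build models from a good mosaic set by saturation plus nominal-merging in one direction, and extract mosaics from bisimulation-classes in the other---is the standard one and matches what is done in the cited source.

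One point your sketch underspecifies is the left-to-right direction. Forming mosaics from ``$Z$-connected clusters'' of an \emph{arbitrary} witnessing bisimulation $Z$ need not yield Existential Saturation: if $d$ and $d''$ lie in the same cluster on the $\Imc_1$-side and $d$ has an $r$-successor $d'$, nothing forces $d''$ to have an $r$-successor landing in the cluster of $d'$, so the requirement $(T_1,T_2)\leadsto_r(T_1',T_2')$ can fail. The standard fix is to take $Z$ to be the \emph{maximal} $\ALCO(\Sigma)$-bisimulation, i.e., $\sim_{\ALCO,\Sigma}$ itself; then any two elements in the same cluster are mutually $\Sigma$-bisimilar, and the back-and-forth conditions guarantee that every element of the cluster has an $r$-successor in the common target cluster. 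With this adjustment your outline goes through, and your identification of the nominal-merging step as the delicate part of the right-to-left direction is accurate.
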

The upper bound in Theorem~\ref{thm:interpolant-existence} now follows from the fact that there are only double exponentially many mosaics and that an $\Mmc^*$ as in Lemma~\ref{lem:alco-joint-consistency} can be found (if it exists) by eliminating mosaics from all \emph{maximal} sets of mosaics that are good for nominals.  The upper bound for \ALCH and the other logics in Theorem~\ref{thm:interpolant-existence} is similar. In fact, mosaic elimination has been used to show decidability of interpolant existence also for other logics that related to DLs and relevant to KR~\cite{DBLP:conf/lics/JungW21,DBLP:conf/kr/KuruczWZ23,DBLP:conf/aiml/KuruczWZ24}.

\smallskip\textbf{\EL and its Relatives.} 
Given the importance of the lightweight DL \EL in applications, we report also on the (surprisingly different) situation in \EL and its relatives. While \EL itself and its extension with role hierarchies or transitive roles do enjoy the CIP~\cite{DBLP:journals/lmcs/LutzSW19,KR2022-16}, most other extensions of \EL such as with inverse roles, nominals, or the combination of transitive roles and role hierarchies do not~\cite{KR2022-16}. Thus, the extension with \Hmc does not lead to the loss of the CIP in \EL, but it does in \ALC, and conversely, the extension with \Imc does lead to the the loss of the CIP in \EL, but not in \ALC. As a rule of thumb, the complexity of the interpolant existence problem in an extension of \EL without CIP coincides with the complexity of subsumption in that extension~\cite{KR2022-16}. This is in contrast to \ALC where interpolant existence is generally one exponent more difficult than subsumption for extensions without CIP.

\smallskip \textbf{Repairing the CIP.} We conclude this part with a discussion of \emph{repairing} the lack of the CIP by allowing interpolants from a richer logic. A notable positive instance of this approach is to allow formulas in the guarded, two-variable fragment (GF$^2$) of FO as interpolants under \ALCH ontologies: GF$^2$ extends \ALCH and it is known that GF$^2$ does enjoy the CIP~\cite{DBLP:journals/sLogica/HooglandM02,DBLP:conf/lpar/HooglandMO99}. Consequently, there is a GF$^2$-interpolant for every valid subsumption in \ALCH. The situation is more complicated in the case of \ALCO. While it would be interesting to investigate existence of interpolants in logics more expressive than \ALCO, say GF$^2$ with constants, it has been shown that (under mild conditions) there is no \emph{decidable} extension of \ALCO that enjoys CIP~\cite{DBLP:journals/jsyml/Cate05}. Hence, there is no decidable logic which guarantees the existence of interpolants for every valid subsumption in \ALCO.



\subsection{Beth Definability}\label{sec:beth-definability}

Since interpolation is intimately tied to definability, c.f.\ \refchapter{chapter:firstorder} and  \refchapter{chapter:queryrewriting}, and since definability is a central topic when working with DL ontologies,
in
this section we take a closer look at this connection in the context of DLs.
We
start with introducing the relevant notions. Recall that, for a signature $\Sigma$, the \emph{$\Sigma$-reduct $\Imc|_\Sigma$} of an interpretation $\Imc$ is the interpretation obtained from $\Imc$ by dropping the interpretation of all symbols not in $\Sigma$.
%
\begin{definition}[Explicit/Implicit Definability]
 Let \Lmc be any DL. Let $\Omc$ be an \Lmc ontology, $C,C_{0}$ be \Lmc concepts, and $\Sigma$
be a signature.
We say that $C_0$ is:
\begin{itemize}
\item \emph{explicitly
$\Lmc(\Sigma)$-definable
under $\Omc$ and $C$} if there is an explicit $\Lmc(\Sigma)$-definition of $C_{0}$
under $\Omc$ and $C$, that is, an $\Lmc(\Sigma)$ concept $D$ satisfying $\Omc \models C \sqsubseteq (C_{0}\leftrightarrow D)$, and
\item \emph{implicitly $\Sigma$-definable under $\Omc$ and $C$} if for all models $\Imc$ and $\Jmc$ of $\Omc$ with $\Imc_{|\Sigma} = \Jmc_{|\Sigma}$ and all $d\in C^\Imc$, we have $d\in C_{0}^{\Imc}$ iff $d\in
  C_{0}^{\Jmc}$.
\end{itemize}
\end{definition}
Intuitively, $C_{0}$ is implicitly $\Sigma$-definable under $\Omc$ and $C$ if for every model $\Imc$ of \Omc and all $d\in C^\Imc$,
the $\Sigma$-reduct $\Imc|_\Sigma$ ``determines'' whether $d\in C_0^\Imc$.
Implicit definability can be equivalently defined in terms of subsumption: $C_{0}$ is implicitly
$\Sigma$-definable under $\Omc$ and $C$ iff 
\begin{eqnarray}\label{eq:impl} 
  \Omc \cup \Omc_{\Sigma} \models C \sqcap C_{0} \sqsubseteq C_{\Sigma} \rightarrow {C_{0}}_{\Sigma} 
\end{eqnarray} 
where $\Omc_{\Sigma}$, $C_{\Sigma}$, and ${C_{0}}_{\Sigma}$ are obtained from $\Omc$, $C$ and $C_{0}$, respectively, by replacing every non-$\Sigma$ symbol uniformly by a fresh symbol. It is worth noting that a common alternative definition of explicit/implicit definability does not use the context concept $C$; it is a special case of the above definition in which $C$ is set to $\top$. We use the more general definition in order to establish a stronger relation to Craig interpolation later on. The following example illustrates implicit and explicit definability and makes do with the simpler definition.
\begin{example}
    Consider the \ALC ontology consisting of the following CIs:
    \begin{align*}
      \textsf{Parent}& \equiv \exists\textsf{child}.\top & \textsf{Parent}& \equiv \textsf{Father}\sqcup \textsf{Mother} \\
      \textsf{Father}& \sqsubseteq \textsf{Man} & \textsf{Mother}& \sqsubseteq \textsf{Woman}
      & \textsf{Man} & \sqsubseteq \neg \textsf{Woman} 
    \end{align*}
    Then, $\textsf{Mother}$ is implicitly $\Sigma$-definable under
    \Omc and $C=\top$, for $\Sigma=\{\textsf{hasChild},\textsf{Woman}\}$. Indeed, in any model $\Imc$ of \Omc, any element that satisfies \textsf{Woman} and has an $\textsf{hasChild}$-successor has to satisfy $\textsf{Mother}$, and other elements can not satisfy $\textsf{Mother}$.  
    This is equivalent with saying that $\textsf{Woman}\sqcap \exists \textsf{hasChild}.\top$ is an explicit $\ALC(\Sigma)$-definition of $\textsf{Mother}$ under $\Omc$.
   \lipicsEnd
  \end{example}

  It should be clear that, if a concept is explicitly
  $\Lmc(\Sigma)$-definable
  under $\Omc$ and $C$, then it is implicitly $\Sigma$-definable
  under $\Omc$ and $C$, for any language $\Lmc$. A logic enjoys the
  projective Beth definability property if the converse implication
  holds as well.
  \begin{definition}\label{def:pbdp}
    A DL $\Lmc$ \emph{enjoys the projective Beth definability
    property} (\emph{PBDP}) if for any $\Lmc$ ontology $\Omc$,
    $\Lmc$ concepts $C$ and $C_{0}$, and signature $\Sigma\subseteq
    \sig{C,\Omc}$ the following holds: if $C_{0}$ is implicitly
    $\Sigma$-definable under $\Omc$ and $C$, then $C_{0}$ is
    explicitly $\Lmc(\Sigma)$-definable under $\Omc$ and $C$.
  \end{definition}
Explicit definability is, similarly to interpolant existence, characterized via joint consistency.
\begin{lemma}\label{lem:pdbp-joint}
  Let $\Lmc$ be any DL. Let
  $C_0,C$ be \Lmc concepts, $\Omc$ an $\Lmc$ ontology, and $\Sigma$ be a signature. Then the following conditions are equivalent:
  \begin{enumerate}

    \item there is no explicit $\Lmc(\Sigma)$-definition for $C_0$ under \Omc and $C$;

    \item $C\sqcap C_0$ and $C\sqcap \neg C_0$ are jointly $\sim_{\Lmc,\Sigma}$-consistent under
      $\Omc$.

  \end{enumerate}
\end{lemma}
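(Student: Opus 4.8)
The plan is to reduce this characterization directly to the already-established Lemma~\ref{lem:joint-consistency}, which characterizes non-existence of interpolants via joint consistency. The key observation is that explicit definitions are nothing but interpolants for a suitably chosen pair of concepts. Concretely, I would first prove the following reformulation: an $\Lmc(\Sigma)$ concept $D$ is an explicit $\Lmc(\Sigma)$-definition of $C_0$ under \Omc and $C$ if and only if $D$ is an $\Lmc(\Sigma)$-interpolant for $C\sqcap C_0$ and $\neg C\sqcup C_0$ under \Omc.

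To verify this reformulation, I would unfold the defining condition $\Omc\models C\sqsubseteq(C_0\leftrightarrow D)$: it states that in every model \Imc of \Omc and for every $d\in C^\Imc$ we have $d\in C_0^\Imc$ iff $d\in D^\Imc$. This splits into the two subsumptions $\Omc\models C\sqcap C_0\sqsubseteq D$ (the forward implication) and $\Omc\models C\sqcap\neg C_0\sqsubseteq\neg D$ (the backward implication, contraposed). The first is exactly the left interpolation condition $\Omc\models (C\sqcap C_0)\sqsubseteq D$, and the second is equivalent to $\Omc\models D\sqsubseteq(\neg C\sqcup C_0)$, which is the right interpolation condition. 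Hence the two notions coincide for every $D$.

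With this equivalence in hand, the lemma follows immediately. There is no explicit $\Lmc(\Sigma)$-definition of $C_0$ under \Omc and $C$ exactly when there is no $\Lmc(\Sigma)$-interpolant for $C\sqcap C_0$ and $\neg C\sqcup C_0$ under \Omc. Applying Lemma~\ref{lem:joint-consistency} with $C_1=C\sqcap C_0$ and $C_2=\neg C\sqcup C_0$, this is equivalent to $C_1$ and $\neg C_2$ being jointly $\sim_{\Lmc,\Sigma}$-consistent under \Omc. Since $\neg C_2=\neg(\neg C\sqcup C_0)=C\sqcap\neg C_0$, this is precisely condition~2, namely that $C\sqcap C_0$ and $C\sqcap\neg C_0$ are jointly $\sim_{\Lmc,\Sigma}$-consistent under \Omc.

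The argument is short because all the genuine model-theoretic work (compactness, and the fact that $\Lmc$-bisimulations capture the expressive power of \Lmc) is already packaged inside Lemma~\ref{lem:joint-consistency}. The only step requiring care is the reformulation: one must track the context concept $C$ and the negations correctly when rewriting the biconditional $C_0\leftrightarrow D$ as a pair of one-sided subsumptions, and confirm that the complement of the right-hand concept, $\neg(\neg C\sqcup C_0)=C\sqcap\neg C_0$, matches the concept appearing in condition~2. I do not anticipate any real obstacle beyond this bookkeeping; in particular the reduction is insensitive to the choice of DL \Lmc, as it manipulates only the Boolean structure of the subsumptions and never the semantics of the concept constructors.
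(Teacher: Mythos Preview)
Your reduction is correct: the observation that $D$ is an explicit $\Lmc(\Sigma)$-definition of $C_0$ under $\Omc$ and $C$ iff $D$ is an $\Lmc(\Sigma)$-interpolant for $C\sqcap C_0$ and $\neg C\sqcup C_0$ under $\Omc$ is exactly the right bridge, and together with $\neg(\neg C\sqcup C_0)=C\sqcap\neg C_0$ it makes Lemma~\ref{lem:pdbp-joint} an immediate corollary of Lemma~\ref{lem:joint-consistency}. The paper does not spell out a proof of Lemma~\ref{lem:pdbp-joint} but merely remarks that the characterization is obtained ``similarly to interpolant existence''; your argument is the natural way to make that remark precise and is entirely in line with how the paper uses the two lemmas together (e.g.\ in the proof of Lemma~\ref{lem:CIPBDP}).
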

Based on Lemma~\ref{lem:pdbp-joint}, one can prove the equivalence of PBDP and the CIP.   
%
\begin{lemma}\label{lem:CIPBDP}
  For every extension $\Lmc$ of \ALC with any of \Smc,\Imc,\Fmc,\Omc,\Hmc, \Lmc enjoys the CIP iff \Lmc enjoys the PBDP. Moreover, there are polynomial time reductions between computing interpolants and computing explicit definitions.
\end{lemma}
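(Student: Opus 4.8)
The plan is to reduce both properties to the single model-theoretic notion of joint consistency, for which Lemmas~\ref{lem:joint-consistency} and~\ref{lem:pdbp-joint} already provide characterizations, and to use the amalgamation Lemma~\ref{lem:amalgamation} (and its analogues for the extensions in scope) as the common pivot. The crucial observation is that, by~\eqref{eq:impl}, implicit $\Sigma$-definability of $C_0$ under $\Omc$ and $C$ is nothing but validity of the subsumption $\Omc\cup\Omc_\Sigma\models C\sqcap C_0\sqsubseteq C_\Sigma\to(C_0)_\Sigma$, where $\Omc_\Sigma,C_\Sigma,(C_0)_\Sigma$ are the renamings that replace the non-$\Sigma$ symbols by fresh ones; the two sides of this subsumption share exactly the signature $\Sigma$. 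This is precisely the shape of subsumption to which the CIP (with its split ontology) and the amalgamation lemma apply, so both properties can be handled by the same recipe used in the proof of Theorem~\ref{thm:cip}.

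For the direction CIP $\Rightarrow$ PBDP, which simultaneously yields the reduction from computing explicit definitions to computing interpolants, I would proceed constructively. Assuming $C_0$ is implicitly $\Sigma$-definable under $\Omc$ and $C$, the subsumption above holds by~\eqref{eq:impl}, and applying the CIP to the split into $(\Omc,C\sqcap C_0)$ and $(\Omc_\Sigma,C_\Sigma\to(C_0)_\Sigma)$ gives a Craig interpolant $I$ with $\sig{I}\subseteq\Sigma$ (here $\Sigma\subseteq\sig{C,\Omc}$ ensures the common signature is exactly $\Sigma$). It remains to verify that $I$ is an explicit $\Lmc(\Sigma)$-definition, which follows from the projection argument of interpreting every renamed symbol exactly as its original: this turns any model of $\Omc$ into a model of $\Omc\cup\Omc_\Sigma$, and since $I$ uses only $\Sigma$-symbols the two interpolant inclusions become $\Omc\models C\sqsubseteq(C_0\to I)$ and $\Omc\models C\sqsubseteq(I\to C_0)$, i.e.\ $\Omc\models C\sqsubseteq(C_0\leftrightarrow I)$. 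As this only renames symbols and computes one interpolant, it is polynomial.

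For the converse reduction (interpolants from explicit definitions), I would start from a valid subsumption $\Omc_1\cup\Omc_2\models C_1\sqsubseteq C_2$ with common signature $\Sigma$ and form the definability instance with ontology $\Omc=\Omc_1\cup\Omc_2$, context $C=C_1\sqcup\neg C_2$, and target $C_0=C_1$. Using the subsumption one checks $C\sqcap C_0\equiv C_1$ and $C\sqcap\neg C_0\equiv\neg C_2$, so by Lemmas~\ref{lem:joint-consistency} and~\ref{lem:pdbp-joint} an explicit $\Lmc(\Sigma)$-definition of $C_0$ exists iff a Craig interpolant for $C_1,C_2$ exists; moreover any explicit definition $D$ is such an interpolant, since on $C$ one has $D\equiv C_1\equiv C_2$, forcing $C_1\sqsubseteq D$ and $D\sqsubseteq C_2$ everywhere. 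This gives the polynomial reduction at the level of computation.

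The main obstacle is the property-level direction PBDP $\Rightarrow$ CIP, because the definability instance just described witnesses implicit definability exactly when the interpolant already exists, so PBDP cannot be fired on it directly. I would close this gap through amalgamation: feeding the joint-consistency witness supplied by Lemma~\ref{lem:pdbp-joint} into Lemma~\ref{lem:amalgamation}, applied to the split $(\Omc,\text{original})$ and $(\Omc_\Sigma,\text{renamed})$ whose common signature is exactly $\Sigma$, produces after expanding each witness to a model of $\Omc\cup\Omc_\Sigma$ a single amalgam realizing $C\sqcap C_0\sqcap C_\Sigma\sqcap\neg(C_0)_\Sigma$, hence violating~\eqref{eq:impl}. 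Read contrapositively, this shows amalgamation $\Rightarrow$ PBDP in exactly the way Theorem~\ref{thm:cip} derives amalgamation $\Rightarrow$ CIP. Since the converse implications follow by a routine compactness argument that turns a non-amalgamable pair into a concrete subsumption, respectively an implicitly-but-not-explicitly definable concept, both CIP and PBDP are equivalent to amalgamation, and therefore to each other, for every $\Lmc$ in scope.
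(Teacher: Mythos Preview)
Your CIP $\Rightarrow$ PBDP direction and both computational reductions are essentially the paper's, up to a harmless variation: you take target $C_0=C_1$ where the paper takes $C_0=D$ (i.e., $C_2$), with the same context $C_1\sqcup\neg C_2=D\to C$. Both choices make the joint-consistency conditions of Lemmas~\ref{lem:joint-consistency} and~\ref{lem:pdbp-joint} coincide, and your check that an explicit definition in this instance is already a Craig interpolant is correct.

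The gap is in the property-level direction PBDP $\Rightarrow$ CIP. You conclude that PBDP cannot be fired on your definability instance and therefore detour through amalgamation, claiming CIP $\Leftrightarrow$ amalgamation $\Leftrightarrow$ PBDP via a ``routine compactness argument''. That step is not routine: extracting a concrete subsumption (respectively, a concrete implicitly-but-not-explicitly definable concept) from a non-amalgamable pair of pointed models needs more than compactness, since agreeing on all $\Lmc(\Sigma_i)$-concepts is strictly weaker than being $\Lmc(\Sigma_i)$-bisimilar, and you do not supply the additional saturation or Hennessy--Milner work required. The paper avoids all of this. With its choice of target $C_0=D$ and context $D\to C$, the right-hand side of~\eqref{eq:impl} becomes $(D_\Sigma\to C_\Sigma)\to D_\Sigma$, which the paper argues is always entailed; hence $D$ is implicitly $\Sigma$-definable \emph{unconditionally}, PBDP fires directly, and the resulting explicit definition yields the Craig interpolant via the equivalence you already established. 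This direct applicability is exactly what the context concept in Definition~\ref{def:pbdp} is designed to enable, and it is the observation you missed.
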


\begin{proof}
  The proof of ``$\Rightarrow$'' is by a standard argument~\cite{GabbayMaks05}:
  Assume that an $\Lmc$ concept $C_{0}$ is implicitly
  $\Sigma$-definable under an $\Lmc$ ontology $\Omc$ and
  $\Lmc$ concept $C$, for some signature $\Sigma$.  Then
  \eqref{eq:impl} holds. Take an $\Lmc(\Sigma)$-interpolant $I$ for $C
  \sqcap C_{0}$ and $C_{\Sigma} \rightarrow {C_{0}}_{\Sigma}$
  under $\Omc\cup \Omc_{\Sigma}$.  Then $I$ is an explicit
  $\Lmc(\Sigma)$-definition of $C_{0}$ under $\Omc$ and~$C$.

  For ``$\Leftarrow$'', suppose $\Omc_1\cup\Omc_2\models C\sqsubseteq D$, for \Lmc ontologies $\Omc_1,\Omc_2$ and \Lmc concepts $C,D$, and let $\Sigma = \sig{\Omc_1,C_1}\cap\sig{\Omc_2,C_2}$. Based on Lemmas~\ref{lem:joint-consistency} and~\ref{lem:pdbp-joint}, one can show that there is a Craig interpolant for $C$ and $D$ under $\Omc_1\cup\Omc_2$ iff there is an explicit $\Lmc(\Sigma)$-definition of $D$ under $D\to C$ and $\Omc_1\cup\Omc_2$. Finally observe that $D$ is implicitly $\Sigma$-definable under $\Omc_1\cup\Omc_2$ and $D\to C$: The right-hand side of the concept inclusion in~\eqref{eq:impl} is $(D_\Sigma\to C_\Sigma)\to D_\Sigma$, a tautology.
\end{proof}
%
Lemma~\ref{lem:CIPBDP} is remarkable since for many logics the CIP is strictly stronger than the PBDP, see \refchapter{chapter:nonclassical} and for example~\cite{DBLP:journals/apal/Maksimova00}.
On the one hand, the proof of ``$\Rightarrow$'' of Lemma~\ref{lem:CIPBDP} is rather robust and also applies to weaker logics, such as \EL, and weaker versions of explicit/implicit definability, such as the one with $C=\top$. On the other hand, the proof of ``$\Leftarrow$'' relies on both the presence of the context concept $C$ and on the expressive power of \ALC. In fact, we conjecture that the ``$\Leftarrow$''-direction does not hold for the weaker definition with $C=\top$. 


Taking into account our knowledge about the CIP from Theorems~\ref{thm:cip} and~\ref{thm:alcocip}, we obtain:
%
\begin{corollary}\label{cor:pbdp}
  \ALC and its extensions by \Smc, \Imc, \Fmc enjoy the PBDP, but \ALCO and \ALCH and their extensions by \Smc, \Imc, \Fmc do not.
\end{corollary}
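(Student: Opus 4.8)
The plan is to derive the corollary directly from the equivalence between the CIP and the PBDP established in Lemma~\ref{lem:CIPBDP}, combined with what is already known about the CIP from Theorems~\ref{thm:cip} and~\ref{thm:alcocip}. The only thing to check before invoking Lemma~\ref{lem:CIPBDP} is that every logic appearing in the statement falls within its scope. The lemma is stated for every extension of \ALC by any subset of $\{\Smc,\Imc,\Fmc,\Omc,\Hmc\}$, and indeed all of \ALC itself, its extensions by \Smc, \Imc, \Fmc, as well as \ALCO, \ALCH and their extensions by \Smc, \Imc, \Fmc are of exactly this form. Hence Lemma~\ref{lem:CIPBDP} applies uniformly, giving ``$\Lmc$ enjoys the CIP iff $\Lmc$ enjoys the PBDP'' for each such $\Lmc$.

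For the positive part, I would argue as follows. By Theorem~\ref{thm:cip}, \ALC and each of its extensions by \Smc, \Imc, \Fmc enjoy the CIP. Applying the ``CIP implies PBDP'' direction of Lemma~\ref{lem:CIPBDP} to each of these logics immediately yields that they also enjoy the PBDP. For the negative part, Theorem~\ref{thm:alcocip} tells us that \ALCO and \ALCH and their extensions by \Smc, \Imc, \Fmc do \emph{not} enjoy the CIP. Using the contrapositive of the ``PBDP implies CIP'' direction of the equivalence in Lemma~\ref{lem:CIPBDP}, the failure of the CIP entails the failure of the PBDP for each of these logics, which is exactly what the corollary claims.

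Since the statement is a corollary, there is no genuine technical obstacle: the real work has already been carried out in establishing the biconditional of Lemma~\ref{lem:CIPBDP} and in the CIP (non-)results of Theorems~\ref{thm:cip} and~\ref{thm:alcocip}. The one point that deserves explicit attention—and the closest thing to a subtlety—is the verification that the equivalence of Lemma~\ref{lem:CIPBDP} is available for each individual combination of features listed in the corollary, rather than only for the base logics; this is why I would spell out that every logic in question is an extension of \ALC by a subset of $\{\Smc,\Imc,\Fmc,\Omc,\Hmc\}$. With that observation in place, the corollary is simply the conjunction of the four applications of Lemma~\ref{lem:CIPBDP} to the CIP classifications provided by Theorems~\ref{thm:cip} and~\ref{thm:alcocip}.
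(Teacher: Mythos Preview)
Your proposal is correct and matches the paper's own reasoning: the corollary is obtained immediately from Lemma~\ref{lem:CIPBDP} together with Theorems~\ref{thm:cip} and~\ref{thm:alcocip}, exactly as you describe. The paper does not even write out a proof, merely stating that the result follows from these ingredients, so your explicit check that all logics in question lie in the scope of Lemma~\ref{lem:CIPBDP} is, if anything, more thorough than the paper.
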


We conclude the section with a brief discussion of two weaker forms of Beth definability that have also been considered in the DL literature.
\begin{itemize}

\item In the \emph{non-projective Beth definability property (BDP)}, we are only interested in defining concept names $C_0=A$ in terms of all other symbols in the signature, that is, $\Sigma=\sig{\Omc}\setminus\{A\}$. Clearly, the PBDP implies the BDP, but the other direction is not true in general. For example, \ALCH enjoys the BDP, but it lacks the PBDP. \ALCO does not enjoy already BDP. We refer the reader to~\cite{DBLP:journals/tocl/ArtaleJMOW23} for a broader discussion.

\item In the \emph{concept-based Beth definability property (CBDP)} both implicit and explicit definability are defined in terms of signatures which always contain all relevant role names, that is, only concept names may be restricted. In the same way, one can define a corresponding \emph{concept Craig interpolation property (CCIP)}, and it has been shown recently using a sequent calculus that the highly expressive DL $\mathcal{RIQ}$ enjoys both CBDP and CCIP~\cite{DBLP:conf/ijcai/LyonK24}. 

\end{itemize}

\subsection{Applications}\label{sec:applications}

We discuss several applications of interpolants and explicit definitions, starting
with a recently discovered relation of Craig interpolants to  \emph{description logic concept learning} in the presence of nominals. Informally, concept learning is the task of inducing a concept description from sets of positive
and negative data examples, which has received a lot of interest in recent years, see e.g.~\cite{DBLP:journals/ml/LehmannH10,DBLP:journals/ai/JungLPW22,DBLP:conf/ijcai/FunkJLPW19}. The predominant application scenario is reverse-engineering of concepts to support ontology engineers in writing complex concepts. To make the connection to interpolation precise, we introduce some notation.

In our context, an \emph{example} is a pair $(\Dmc,a_0)$, where $\Dmc$ is a database, that is, a finite set of facts of the form $A(a)$ and $r(a,b)$ with $A$ a concept name, $r$ a role name, and $a,b$ individuals, and $a_0$ is an individual from database \Dmc. 
A \emph{labeled data set} is a pair $(P,N)$ of sets of positive and negative examples. 
Let \Lmc be some DL, $\Omc$ be an \Lmc ontology, $\Sigma$ a signature, and $(P,N)$ a labeled dataset. An \emph{$\Lmc(\Sigma)$-separator} for
$\Omc,P,N$ is an $\Lmc(\Sigma)$ concept $C$ such that
\begin{itemize}
\item $\Omc\cup \Dmc\models C(a)$ for all $(\Dmc,a)\in P$ and
\item $\Omc\cup \Dmc\models \neg C(a)$ for all $(\Dmc,a)\in
N$.
\end{itemize}
Intuitively, a separator is a potentially complex concept that distinguishes between the positive and negative examples in the sense of consistent learning. 
We refer to the induced problem of deciding the existence of an $\Lmc(\Sigma)$-separator for given $\Omc,\Sigma,P,N$ with \emph{\Lmc-separator existence}. 

It has been recently observed that there is a strong connection between interpolants and separators in many DLs extending \ALCO. More precisely, for such DLs \Lmc there are polynomial time reductions between \Lmc-interpolant existence and \Lmc-separator existence. Moreover, the reductions guarantee that interpolants can be used as separators and vice versa~\cite[Theorem~4.3]{DBLP:journals/tocl/ArtaleJMOW23}. The presence of nominals is crucial in these reductions as they are needed to encode the database. Theorem~\ref{thm:interpolant-existence} implies that separator existence in \ALCO and its extenions is a hard yet decidable problem.

A special case of separator existence is the existence of \emph{referring expressions}.
Referring expressions originate in linguistics to refer to a single object in the domain of discourse~\cite{Cann_1993}, but have recently been introduced in data management and KR~\cite{DBLP:conf/inlg/ArecesKS08,DBLP:journals/coling/KrahmerD12,DBLP:conf/kr/BorgidaTW16,ArtEtAl21}. Existence of a referring expression for object $a$ can be cast as a separator existence problem, 
in which the task is to find a separator between $a$ and all other individuals in the domain of discourse. Indeed, in many cases, the concrete individual, say the internal id $p12345$, could be less meaningful to the user than a concise description in terms of a given signature, e.g., \emph{head of finance}. Referring expressions are also a special case of explicit definitions. 

Consider next the process of \emph{ontology construction}. It offers essentially two ways to extend a present ontology by a new concept name, say $A$~\cite{TenEtAl06}. First, in the \emph{explicit} manner, one adds a concept definition $A\equiv C$ for some concept description $C$ over the signature of the present ontology. Second, in the \emph{implicit} manner, one can add (several) concept inclusions with the property that in every model of the ontology, the interpretation of $A$ is uniquely determined by the interpretation of the other symbols---in other words: $A$ is implicitly defined. Of course, the explicit way is more transparent and thus to be preferred. However, if the ontology language of choice enjoys the PBDP, there is actually an explicit definition for $A$, also in the second case. The notion of PBDP has been studied in this sense under the disguise of \emph{definitorial completeness} in~\cite{DBLP:conf/dlog/BaaderN03,TenEtAl06}.

Finally, we discuss ontology-mediated query answering (OMQ). Recall that in OMQ we access a knowledge base consisting of ontology \Omc and database \Dmc using a query $\varphi(\vec x)$ with the goal to determine the \emph{certain answers}, that is, all tuples $\vec a$ such that $\Omc\cup\Dmc\models \varphi(\vec a)$. Depending on the ontology language, this is a potentially difficult entailment problem and the question is whether one can \emph{rewrite} $\varphi$ and $\Omc$ into $\varphi_\Omc$ such that $\Omc\cup\Dmc\models \varphi(\vec a)$ iff $\Dmc\models \varphi_\Omc(\vec a)$, for all $\vec a$. Note that in this case $\varphi_\Omc$ can be answered efficiently by a standard database system. The PBDP and the connection to interpolation from Lemma~\ref{lem:CIPBDP} have been used several times to study the existence and computation of such rewritings~\cite{DBLP:conf/ijcai/SeylanFB09,DBLP:conf/birthday/FranconiK19,DBLP:journals/jair/FranconiKN13,DBLP:conf/aaai/TomanW22}. \refchapter{chapter:queryrewriting} discusses further applications of the PBDP in the context of database query answering.



\subsection{Computing Interpolants}\label{sec:interpolant-computation}

In the previous sections, 
we have motivated the need for computing interpolants and explicit definitions in various applications. Unfortunately, in spite of that, we are not aware of any DL reasoner that supports the actual computation of interpolants. Hence, this section is restricted to the known theoretical results. By Lemma~\ref{lem:CIPBDP}, we can focus on computing interpolants. 
We concentrate on extensions of \ALC that enjoy the CIP and refer the reader to~\cite{KR2022-16} for \EL and its relatives. In the following theorem from~\cite{TenEtAl13}, we refer to a DAG-representation in which each subconcept is represented only once as \emph{succinct representation} of concepts.
\begin{theorem}[Theorems~3.10, 3.26 in~\cite{TenEtAl13}]\label{thm:interpolant-construction}
Let \Lmc be \ALC or any extension with \Smc, \Imc, \Fmc. Then, one can compute, given \Lmc ontologies $\Omc_1,\Omc_2$ and \Lmc concepts $C_1,C_2$ with $\Omc_1\cup\Omc_2\models C_1\sqsubseteq C_2$, in double exponential time a Craig interpolant of $C_1$ and $C_2$ under $\Omc_1\cup\Omc_2$. If concepts are represented succinctly, then the interpolant can be computed in exponential time.
\end{theorem}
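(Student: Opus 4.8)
The plan is to obtain the result by \emph{extending the mosaic elimination algorithm} for joint consistency (Lemma~\ref{lem:joint-consistency-elimination}) into one that, when the subsumption holds, also synthesizes an interpolant; existence is already guaranteed by the CIP (Theorem~\ref{thm:cip}), so the only task is effective computation within the stated bounds. (The original proof in~\cite{TenEtAl13} instead uses a tableau calculus, but the machinery developed above makes the elimination route natural.) Fix $\Sigma=\sig{\Omc_1,C_1}\cap\sig{\Omc_2,C_2}$ and set $\Omc=\Omc_1\cup\Omc_2$. Since $\Omc\models C_1\sqsubseteq C_2$, Lemma~\ref{lem:joint-consistency} tells us that $C_1$ and $\neg C_2$ are \emph{not} jointly $\sim_{\ALC,\Sigma}$-consistent under $\Omc$. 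Running the elimination $\Mmc_0\supseteq\Mmc_1\supseteq\cdots\supseteq\Mmc^*$, the fixpoint $\Mmc^*$ therefore contains no mosaic $(t_1,t_2)$ with $C_1\in t_1$ and $\neg C_2\in t_2$; equivalently, every such mosaic is removed at some finite stage.

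First I would turn each elimination into an $\ALC(\Sigma)$ concept that records \emph{why} the mosaic was removed, defined by induction on the stage at which $(t_1,t_2)$ is eliminated. If $(t_1,t_2)$ violates atomic consistency, there is some $A\in\Sigma$ separating $t_1$ from $t_2$, and I take the literal $A$ or $\neg A$. If it violates existential saturation, say because of $\exists r.C\in t_1$ with no surviving viable successor, then every $(t_1',t_2')$ with $(t_1,t_2)\leadsto_r(t_1',t_2')$ and $C\in t_1'$ has already been eliminated, so I combine the already-defined witnesses of these successors under a guard mixing $\exists r.(\ldots)$ and $\forall r.(\ldots)$, producing a $\Sigma$-concept that holds at the $t_1$-component but fails at the $t_2$-component. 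The interpolant is then the disjunction $I=\bigsqcup\{\,\iota(t_1)\mid C_1\in t_1\,\}$ of the resulting characteristic $\Sigma$-concepts of the $C_1$-carrying types.

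Correctness has two halves, both resting on the fact that $\Sigma$-bisimilar elements satisfy the same $\ALC(\Sigma)$ concepts. For $\Omc\models C_1\sqsubseteq I$, any $d\in C_1^{\Imc}$ in a model $\Imc$ of $\Omc$ realizes a type $t_1\ni C_1$ and, by construction, satisfies its own disjunct $\iota(t_1)$. For $\Omc\models I\sqsubseteq C_2$, suppose some $d$ satisfied $I$ and $\neg C_2$. Then $d$ would be $\Sigma$-indistinguishable, up to the stabilization rank of the elimination, from a $C_1$-element; by stability of the fixpoint this indistinguishability lifts to full $\Sigma$-bisimilarity, yielding a pair of pointed models of $\Omc$ witnessing joint $\sim_{\ALC,\Sigma}$-consistency of $C_1$ and $\neg C_2$ — contradicting the outcome of the elimination. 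Since $\iota$ is assembled only from $A\in\Sigma$ and $r\in\Sigma$, we have $\sig{I}\subseteq\Sigma$.

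For the bounds, $\Gamma$ is linear in the input, so there are at most $2^{|\Gamma|}$ types and hence single-exponentially many mosaics; as each round removes at least one mosaic, the number of elimination stages — and thus the nesting depth of $\iota$ — is single exponential. Written out as a tree this gives a concept of double-exponential size, hence the double-exponential time bound; re-using each per-mosaic witness as a shared node, i.e.\ a DAG over the single-exponentially many mosaics, gives a succinct representation of single-exponential size and the exponential bound. The main obstacle is precisely this recursive synthesis: one must define $\iota$ so that it is simultaneously a legal $\Sigma$-concept, genuinely separating in both directions, and of controlled depth, which in the existential-saturation case forces a careful interplay of $\exists r$ and $\forall r$ guards. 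Passing from \ALC to the extensions by $\Smc,\Imc,\Fmc$ then requires replacing $\sim_{\ALC,\Sigma}$ by the matching bisimulation $\sim_{\Lmc,\Sigma}$ and augmenting the mosaic conditions (Back/Forth along inverse roles, and the functionality bookkeeping for $\Fmc$, under the standard feature-combination restrictions), after which the construction goes through with the analogous guarded constructors.
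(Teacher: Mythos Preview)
Your approach is essentially the paper's own \emph{Perspective~2} (mosaic-elimination construction, Lemmas~\ref{lem:construction-from-elimination} and the one after it): assign a separating $\ALC(\Sigma)$ concept to each eliminated mosaic by induction on the elimination round, then assemble the interpolant from these. The size analysis you give also matches.

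There is, however, a gap in your execution. The witness concepts are indexed by \emph{mosaics} $(t_1,t_2)$, not by types $t_1$ alone: the inductive invariant is $\Omc\models t_1\sqsubseteq I_{t_1,t_2}$ and $\Omc\models t_2\sqsubseteq\neg I_{t_1,t_2}$, and the final interpolant must be the double combination $\bigsqcup_{t_1\ni C_1}\bigsqcap_{t_2\ni\neg C_2} I_{t_1,t_2}$. Your formula $I=\bigsqcup\{\iota(t_1)\mid C_1\in t_1\}$ drops the inner conjunction over $t_2$, and without it the second entailment does not follow.

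Relatedly, your argument for $\Omc\models I\sqsubseteq C_2$ is circular. You claim that an element satisfying $I$ and $\neg C_2$ would be ``$\Sigma$-indistinguishable from a $C_1$-element'' and that this ``lifts to full $\Sigma$-bisimilarity'', yielding joint consistency --- but satisfying some $\ALC(\Sigma)$ concept entailed by $t_1$ does not make an element $\Sigma$-bisimilar to a $t_1$-element. The correct (and much simpler) argument, as in the paper, is direct: if $d\in(\neg C_2)^\Imc$ in a model of $\Omc$, then $d$ realizes some type $t_2\ni\neg C_2$; since every mosaic $(t_1,t_2)$ with $C_1\in t_1$ is eliminated, the invariant gives $d\notin I_{t_1,t_2}^\Imc$ for that particular $t_2$, so $d$ refutes the inner conjunction $\bigsqcap_{t_2'} I_{t_1,t_2'}$ for every $t_1$, hence $d\notin I^\Imc$.
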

In the same paper, it is shown that the statement in Theorem~\ref{thm:interpolant-construction}
is essentially optimal. We give the lower bound in terms of explicit definitions; it transfers to Craig interpolants via (the proof of) Lemma~\ref{lem:CIPBDP}. It is also not difficult to see that it remains valid when we consider extensions of \ALC with \Smc,\Imc,\Fmc.

\begin{theorem}[Theorem~4.11 in~\cite{TenEtAl13}]\label{thm:interpolant-size-lower}
  Let $\Sigma=\{r,s\}$ consist of two role names. For every $n\geq 0$, there are an \ALC concept $C_n$ and an \ALC ontology $\Omc_n$ of size polynomial in $n$ such that $\Sigma\subseteq \sig{\Omc_n,C_n}$ and $C_n$ is implicitly $\Sigma$-definable under $\Omc_n$, but the smallest explicit $\ALC(\Sigma)$-definition of $C_n$ under $\Omc_n$ is double exponentially long in $n$.
\end{theorem}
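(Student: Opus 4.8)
The plan is to adapt the counter-based construction underlying the uniform-interpolation lower bound in~\eqref{eq:el-init}--\eqref{eq:el-finish}, tuned so that a \emph{single} implicitly definable concept, rather than an entire uniform interpolant, is forced to be large. Concretely, I would keep $\Sigma=\{r,s\}$ and introduce auxiliary concept names $X_1,\dots,X_n,\overline{X_1},\dots,\overline{X_n}$ (all outside $\Sigma$) encoding an $n$-bit counter, together with CIs propagating the counter \emph{deterministically} along the two roles exactly as in~\eqref{eq:el-keep}--\eqref{eq:el-flip-2}: leaves carry value $0$, and an element whose $r$- and $s$-successors both carry value $k$ is forced to carry value $k+1$. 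This forces every root of a full binary $r/s$-tree of depth $2^n$ to be tagged by a fixed counter value, and I would let $C_n$ use the $X_i$ to assert precisely ``I am the root of such a depth-$2^n$ tree''. Since the counter needs only $n$ bits, both $\Omc_n$ and $C_n$ have size polynomial in $n$.

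The second step is to verify implicit $\Sigma$-definability, i.e.\ condition~\eqref{eq:impl} with $C=\top$. The key point is that the counter is a deterministic function of the bare $\{r,s\}$-graph: in any model of $\Omc_n$, the truth values of all $X_i,\overline{X_i}$ at a node are uniquely determined, bottom-up, by the $r/s$-structure below it. Hence two models of $\Omc_n$ agreeing on $r$ and $s$ must agree on all $X_i$, and therefore on membership in $C_n$; this is exactly implicit $\{r,s\}$-definability. By the PBDP for \ALC (Corollary~\ref{cor:pbdp}) an explicit $\ALC(\{r,s\})$-definition then exists, so it only remains to bound its size from below.

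The third and hardest step is the double-exponential size lower bound: any $\ALC(\{r,s\})$ concept $D$ with $\Omc_n\models C_n\equiv D$ must have ordinary length at least $2^{2^n}$. The plan is a fooling-set argument in the style of the bisimulation characterisation of Lemma~\ref{lem:joint-consistency}. A full binary $r/s$-tree of depth $2^n$ has $2^{2^n}$ distinct root-to-leaf branches, and I would exhibit, for each branch $i$, a pair of pointed $\{r,s\}$-interpretations $(\Jmc_i^{+},\Jmc_i^{-})$ such that $\Jmc_i^{+}$ extends to a model of $\Omc_n$ in $C_n$ while $\Jmc_i^{-}$ does not, yet the two differ only deep inside the $i$-th branch (e.g.\ a successor is removed at depth $2^n$). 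Because $\ALC(\{r,s\})$ concepts are invariant under $\{r,s\}$-bisimulation, separating the $i$-th pair forces $D$ to inspect the tree all the way down branch $i$, so each pair must be witnessed by a distinct occurrence of a subconcept of role depth $2^n$ inside $D$; counting these occurrences yields $|D|\geq 2^{2^n}$.

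The main obstacle is exactly this counting argument: one has to turn the intuition ``$D$ must track each of the $2^{2^n}$ branches separately'' into a rigorous lower bound on the number of syntactic nodes of $D$, while ruling out that the ontology $\Omc_n$ or the connectives $\sqcup,\forall$ let $D$ collapse many branches into one shared subconcept. A clean route is an adversary / Ehrenfeucht--Fra\"iss\'e-style argument along the tree: assuming $|D|<2^{2^n}$, argue that $D$ cannot contain enough distinct depth-$2^n$ subconcept occurrences to separate all pairs, and from an unhandled pair build two $\Omc_n$-models that agree on $D$ but disagree on $C_n$, contradicting $\Omc_n\models C_n\equiv D$. Making the bookkeeping of ``which branch is handled by which occurrence'' precise, and relating it to ordinary (non-succinct) length rather than to the exponentially smaller DAG size of Theorem~\ref{thm:interpolant-construction}, is where the real work lies; for the full details I would follow~\cite{TenEtAl13}.
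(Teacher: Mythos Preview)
Your overall approach is essentially the paper's: a counter encoded by auxiliary concept names forces $C_n$ to describe roots of full binary $r/s$-trees of depth $2^n$, and the lower bound then comes from the fact that any explicit $\ALC(\{r,s\})$-definition must trace each of the $2^{2^n}$ root-to-leaf branches by a separate path in its syntax tree. Your fooling-set formulation of step three is just a repackaging of the paper's ``each path in the tree has to be reflected in a different path in the syntax tree of $D$'' argument.

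There is, however, a genuine gap in your step two. The CIs in~\eqref{eq:el-keep}--\eqref{eq:el-flip-2} that you say you reuse ``exactly'' are \emph{one-directional} \EL inclusions: they force certain $X_i$ or $\overline{X_i}$ to hold, but never forbid anything. In a model of these CIs nothing prevents an element from satisfying $X_1$ gratuitously, or from satisfying both $X_i$ and $\overline{X_i}$, or from carrying a ``wrong'' counter value when its $r$- and $s$-successors disagree. So the claim that ``the truth values of all $X_i,\overline{X_i}$ at a node are uniquely determined, bottom-up, by the $r/s$-structure below it'' is false for this axiomatisation, and implicit $\{r,s\}$-definability of $C_n$ does \emph{not} follow. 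The path-set construction the paper cites (from~\cite{DBLP:conf/atal/Lutz06}) avoids this by axiomatising the counter via \emph{equivalences} (or, equivalently, by adding $X_i\equiv\neg\overline{X_i}$ together with the missing converse inclusions), so that each $X_i$ is explicitly defined from $r,s$ and lower-index bits; this is what makes the whole stack implicitly $\{r,s\}$-definable. The fix is easy, but as written your second step does not go through.
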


\begin{proof}[Proof sketch]
  The proof is via a well-known \emph{path-set
  construction}~\cite{DBLP:conf/atal/Lutz06}. The idea is that $C_n$ together with $\Omc_n$ enforces a full binary $r/s$-tree of depth $2^n$. One then shows that any $\ALC(\Sigma)$ concept $D$ that explicitly defines such a tree is double exponentially long in $n$, since each path in the tree has to be reflected in a (different) path in the syntax tree of~$D$.
\end{proof}

The original proof of Theorem~\ref{thm:interpolant-construction} is by an extension of the tableau-based proof of the CIP for the respective DLs in~\cite{TenEtAl13}. Instead of giving it here, we provide two different perspectives on the computation problem, focusing on \ALC
for the sake of simplicity.
The first one is a reduction of computing interpolants under ontologies to the case without ontologies.  The second one is a refined analyis of the mosaic elimination algorithm that decides joint $\sim_{\ALC,\Sigma}$-consistency given in Section~\ref{sec:interpolation}. The benefit of the former is that this enables us to use all methods for computing interpolants in modal logic presented in \refchapter{chapter:sixproofs}. The benefit of the latter is that it provides the same upper bounds as claimed in Theorem~\ref{thm:interpolant-construction}.

\paragraph*{Perspective 1: The Reduction} The main idea is to \emph{materialize} the ontology to sufficient depth. To formalize this, let $\ALC_n(\Sigma)$ denote the set of all $\ALC(\Sigma)$ concepts of role depth at most $n$, and 
define for each \ALC ontology \Omc using role names $\Sigma_R$ and each $n\geq 0$, a concept $\Phi_\Omc^n\in\ALC_n(\Sigma)$ by taking
\[\Phi_\Omc^n = \bigsqcap_{m\leq n}\ \bigsqcap_{r_1,\ldots, r_m\in \Sigma_R}\forall
r_1\ldots\forall r_m.\bigsqcap_{C\sqsubseteq D\in \Omc} (\neg C\sqcup D). \]
Intuitively, the concept $\Phi_\Omc^n$ expresses that the concept inclusions in
$\Omc$ are satisfied along any $\Sigma_R$-path of length at most $n$. The following lemma, whose proof is similar to the proof of Lemma~9 in~\cite{TenEtAl06}, provides the promised reduction. 
\begin{lemma} \label{lem:reduction-ontology-free}
  Let $\Omc$ be an \ALC ontology, $C_1,C_2$ be
  \ALC concepts, and $N>2^{||\Omc||+||C_1||+||C_2||}$.
  Then, for all signatures $\Sigma$, all $n\geq 0$, and all $I\in\ALC_n(\Sigma)$ the following are equivalent:
\begin{enumerate}

  \item $I$ is an $\ALC(\Sigma)$-interpolant of $C_1$ and $C_2$ under
    $\Omc$;

  \item $I$ is an $\ALC(\Sigma)$-interpolant of
    $\Phi^{N+n}_{\Omc}\sqcap C_1$ and
    $\Phi^{N+n}_{\Omc}\to C_2$.

\end{enumerate}
\end{lemma}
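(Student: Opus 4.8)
The plan is to prove the two subsumptions in each direction of the equivalence; note first that the requirement $\sig{I}\subseteq\Sigma$ and $\rd(I)\le n$ (i.e.\ ``$I\in\ALC_n(\Sigma)$'') is literally the same in items~1 and~2, so only the entailment conditions need to be matched. The direction $2\Rightarrow1$ is the easy one and uses nothing about the size of $N$: if $\Jmc\models\Omc$, then every CI of $\Omc$ holds at every element of $\Jmc$, so each $e\in\Delta^\Jmc$ satisfies $\Phi^m_\Omc$ for all $m$, in particular $e\models\Phi^{N+n}_\Omc$. Hence if $e\in C_1^\Jmc$ then $e\in(\Phi^{N+n}_\Omc\sqcap C_1)^\Jmc\subseteq I^\Jmc$, giving $\Omc\models C_1\sqsubseteq I$; and if $e\in I^\Jmc$ then $e\models\Phi^{N+n}_\Omc\to C_2$, which together with $e\models\Phi^{N+n}_\Omc$ yields $e\in C_2^\Jmc$, i.e.\ $\Omc\models I\sqsubseteq C_2$.

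For the hard direction $1\Rightarrow2$ the core is a \emph{materialisation claim}: whenever $d\in(\Phi^{N+n}_\Omc)^\Imc$ for some interpretation $\Imc$, there is a model $\Jmc$ of $\Omc$ and $e\in\Delta^\Jmc$ with $\tp_\Jmc(e)=\tp_\Imc(d)$ that additionally agrees with $\Imc,d$ on all $\ALC$ concepts of role depth at most $n$. Granting this, both inclusions of item~2 follow from item~1. For the first, take $d\in(\Phi^{N+n}_\Omc\sqcap C_1)^\Imc$; since $C_1\in\Gamma$ we have $C_1\in\tp_\Imc(d)=\tp_\Jmc(e)$, so $e\in C_1^\Jmc$, whence $e\in I^\Jmc$ by $\Omc\models C_1\sqsubseteq I$, and then $d\in I^\Imc$ because $\rd(I)\le n$ and $\Imc,d$ agrees with $\Jmc,e$ up to role depth $n$. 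The second inclusion is symmetric: from $d\in(I\sqcap\Phi^{N+n}_\Omc)^\Imc$ the depth-$n$ agreement gives $e\in I^\Jmc$, then $e\in C_2^\Jmc$ by $\Omc\models I\sqsubseteq C_2$, and $C_2\in\tp_\Jmc(e)=\tp_\Imc(d)$ (using $C_2\in\Gamma$) gives $d\in C_2^\Imc$. Thus concepts in $\Gamma$ are transported to the root via the equality of types, while the interpolant $I$ (generally not in $\Gamma$) is transported via the untouched top $n$ levels.

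To prove the materialisation claim I would first unravel $\Imc$ at $d$ into a tree interpretation $\Kmc$ with root $e$. As unravelling is a full $\ALC$-bisimulation it preserves all $\ALC$ concepts, so $\tp_\Kmc(e)=\tp_\Imc(d)$, $e\models\Phi^{N+n}_\Omc$, and consequently every node of $\Kmc$ at depth at most $N+n$ satisfies $\bigsqcap_{C\sqsubseteq D\in\Omc}(\neg C\sqcup D)$. Since $\Gamma$ has at most $\size{\Omc}+\size{C_1}+\size{C_2}$ subconcepts, there are at most $2^{\size{\Omc}+\size{C_1}+\size{C_2}}<N$ distinct types, so on every path the $N+1$ nodes at depths $n,n+1,\dots,N+n$ contain two nodes $v$ (higher) and $w$ (lower) with $\tp_\Kmc(v)=\tp_\Kmc(w)$. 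Redirecting, on each path, the edge entering the first such $w$ to point to its partner $v$ and discarding everything strictly below yields an interpretation $\Jmc$ all of whose elements lie at depth at most $N+n$ in $\Kmc$. All redirected edges emanate from nodes at depth at least $n$, so the top $n$ levels are untouched and $\Jmc,e$ therefore agrees with $\Kmc,e$, and hence with $\Imc,d$, on all concepts of role depth at most~$n$.

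The technical heart, and the main obstacle, is to verify that $\Jmc\models\Omc$ together with $\tp_\Jmc(e)=\tp_\Kmc(e)$. Both follow from a \emph{type-preservation sub-lemma}: for every $E\in\Gamma$ and every retained node $u$, $u\in E^\Jmc$ iff $u\in E^\Kmc$. This is proved by induction on the structure of $E$, which stays inside $\Gamma$ because $\Gamma$ is closed under subconcepts; the only non-routine case is a redirected edge $u\to v$ replacing $u\to w$, where the inductive hypothesis for the immediate subconcept, combined with $\tp_\Kmc(v)=\tp_\Kmc(w)$, shows that the value of each $\exists r.\,{\cdot}$ and $\forall r.\,{\cdot}$ restriction at $u$ is unchanged. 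Given the sub-lemma, every retained node keeps its $\Kmc$-type; since it sits at depth at most $N+n$ it still satisfies every $\neg C\sqcup D$, so $\Jmc\models\Omc$, and in particular $\tp_\Jmc(e)=\tp_\Kmc(e)=\tp_\Imc(d)$. The whole bookkeeping hinges on separating these two transport mechanisms — type preservation for the unbounded-role-depth concepts $C_1,C_2\in\Gamma$ versus faithful top levels for the role-depth-$\le n$ concept $I$ — which is precisely why the materialisation depth is chosen as $n+N$, the summand $n$ protecting the levels relevant to $I$ and the summand $N$ serving as the pigeonhole buffer that forces a repeated type.
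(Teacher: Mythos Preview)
Your argument is correct and follows precisely the standard unravel-and-bend-back technique that the paper alludes to (without giving a proof) by citing Lemma~9 of~\cite{TenEtAl06}: unravel into a tree, use pigeonhole on $\Gamma$-types to locate a repeat between depths $n$ and $N+n$, redirect there, and verify by structural induction on $\Gamma$ that types are preserved while the top $n$ levels stay intact for transporting $I$. The separation you highlight at the end --- type equality for $C_1,C_2\in\Gamma$ versus depth-$n$ faithfulness for $I\notin\Gamma$ --- is exactly the point of choosing materialisation depth $N+n$.

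One minor caveat, which is really an imprecision in the paper's definition of $\Phi^m_\Omc$ rather than a flaw in your reasoning: $\Phi^{N+n}_\Omc$ propagates the CIs of $\Omc$ only along roles in $\Sigma_R$, so your assertion that ``every node of $\Kmc$ at depth at most $N+n$ satisfies $\bigsqcap_{C\sqsubseteq D\in\Omc}(\neg C\sqcup D)$'' holds only for nodes reached via $\Sigma_R$-paths. If $C_1$, $C_2$, or $I$ use a role $s\notin\Sigma_R$, the resulting $\Jmc$ need not be a model of $\Omc$. Both the lemma as stated and your proof tacitly require the roles of $C_1,C_2,I$ to lie in $\Sigma_R$ (equivalently, $\Sigma_R$ should be taken as all roles occurring in $\Omc,C_1,C_2$); under that reading your construction goes through unchanged.
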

%

Thus, if we were interested in (Craig) interpolants of a certain role
depth $n$, we could use Lemma~\ref{lem:reduction-ontology-free} to compute them
using methods for modal logic from \refchapter{chapter:sixproofs}. Since Theorem~\ref{thm:interpolant-construction} entails an exponential upper bound on the role depth of interpolants (an interpolant that is of exponential size in succinct representation has at most exponential role depth), this is also a complete approach for computing Craig interpolants. We refrain from analyzing the size of interpolants constructed via this method. 

\paragraph*{Perspective 2: Construction from Mosaic Elimination} 
We conduct a refined analysis of the mosaic elimination algorithm that decides joint consistency in \ALC in Section~\ref{sec:interpolation}. 
Let $\Omc,C_1,C_2,\Sigma$ be an input to mosaic elimination.
Recall that the elimination algorithm computes a sequence $\Mmc_0,\Mmc_1,\ldots$ of sets of mosaics, where $\Mmc_0$ is the set of all mosaics, and $\Mmc_{i+1}$ is obtained from $\Mmc_i$ by eliminating bad mosaics. We say that a mosaic $(t_1,t_2)$ is \emph{eliminated in round $\ell$} if $(t_1,t_2)\in \Mmc_\ell$ but $(t_1,t_2)\notin \Mmc_{\ell+1}$. In a slight abuse of notation, we treat a type $t$ as the concept $\bigsqcap_{D\in t}D$.
\begin{lemma}\label{lem:construction-from-elimination}
  If a mosaic $(t_1,t_2)$ is eliminated in round $\ell$, then there is an $\ALC_\ell(\Sigma)$ concept $I$ such that $\Omc\models t_1\sqsubseteq I$ and $\Omc\models t_2\sqsubseteq \neg I$. 
\end{lemma}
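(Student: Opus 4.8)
The plan is to prove the statement by induction on the elimination round $\ell$, using throughout two elementary facts about types: since each type $t$ is a subset of $\Gamma$ that is complete under single negation, we have $\Omc\models t\sqsubseteq D$ for every $D\in t$, and conversely $d\in t^\Imc$ forces $\tp_\Imc(d)=t$; moreover universal restrictions propagate along $\leadsto_r$. A mosaic can only become bad for one of the two reasons in the definition, so the induction splits into an Atomic Consistency case and an Existential Saturation case. For the atomic reason, if $(t_1,t_2)$ violates Atomic Consistency there is some $A\in\Sigma$ with, say, $A\in t_1$ and $A\notin t_2$; completeness of types gives $\neg A\in t_2$, so $I:=A$ is an $\ALC_0(\Sigma)$ concept with $\Omc\models t_1\sqsubseteq I$ and $\Omc\models t_2\sqsubseteq\neg I$. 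Since Atomic Consistency does not depend on the current mosaic set, such a mosaic is eliminated already in round $0$, matching the bound. Consequently, for $\ell\geq 1$ the only possible reason is Existential Saturation.

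So fix $\ell\geq 1$ and suppose $(t_1,t_2)$ is bad in $\Mmc_\ell$ by Existential Saturation: there are $r\in\Sigma$, an index $i$ (say $i=1$), and $\exists r.C\in t_1$ such that no witness pair survives in $\Mmc_\ell$. I would let $S_1$ be the set of types $s_1$ with $C\in s_1$ and $t_1\leadsto_r s_1$, which is nonempty because $\exists r.C\in t_1$ is $\Omc$-realizable, and let $S_2$ be the set of types $s_2$ with $t_2\leadsto_r s_2$. Every pair in $S_1\times S_2$ is a candidate witness, hence by the failure condition lies outside $\Mmc_\ell$; being a pair of types it belonged to $\Mmc_0$, so it was eliminated in some round strictly below $\ell$ (note $S_2\neq\emptyset$, since otherwise saturation would already fail in round $0$, contradicting $\ell\geq 1$). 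The induction hypothesis then supplies, for each $(s_1,s_2)\in S_1\times S_2$, a concept $I_{s_1,s_2}\in\ALC_{\ell-1}(\Sigma)$ with $\Omc\models s_1\sqsubseteq I_{s_1,s_2}$ and $\Omc\models s_2\sqsubseteq\neg I_{s_1,s_2}$.

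I would then assemble these into a single interpolant. For each $s_1\in S_1$ set $J_{s_1}:=\bigsqcap_{s_2\in S_2}I_{s_1,s_2}\in\ALC_{\ell-1}(\Sigma)$, so that $\Omc\models s_1\sqsubseteq J_{s_1}$ while $\Omc\models s_2\sqsubseteq\neg J_{s_1}$ for every $s_2\in S_2$, and put $I:=\exists r.\bigsqcup_{s_1\in S_1}J_{s_1}\in\ALC_\ell(\Sigma)$. For $\Omc\models t_1\sqsubseteq I$, any $d\in t_1^\Imc$ has an $r$-successor $d'\in C^\Imc$ whose type lies in $S_1$, whence $d'\in J_{\tp_\Imc(d')}^\Imc$ and so $d\in I^\Imc$. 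For $\Omc\models t_2\sqsubseteq\neg I$, any $r$-successor $d'$ of a $d\in t_2^\Imc$ has type $s_2\in S_2$, and since $\Omc\models s_2\sqsubseteq\neg J_{s_1}$ for all $s_1\in S_1$, no such $d'$ satisfies the matrix of $I$. If the failure is instead on the side $i=2$, one runs the symmetric construction and negates the resulting concept.

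The step I expect to be the main obstacle is the role-depth bookkeeping. The substantive point is the claim that all candidate witness pairs were eliminated in rounds strictly below $\ell$, which is exactly what licenses invoking the induction hypothesis at depth $\ell-1$ and thus capping $\rd(I)$ at $\ell$; this in turn rests on $\Mmc_0$ containing all type pairs and on the monotonicity of the elimination. The delicate boundary is the degenerate situation $S_2=\emptyset$, i.e.\ $t_2$ has no $\Sigma$-$r$-successor at all: there $J_{s_1}=\top$ and the construction collapses to $\exists r.\top$, a mosaic that is already bad in round $0$ yet whose only separating concept has role depth $1$ (no Boolean combination of $\Sigma$-concept names can separate it once Atomic Consistency holds). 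Handling this edge case, and checking that it does not arise for $\ell\geq 1$, is the one place where the otherwise-routine verification must be carried out with care.
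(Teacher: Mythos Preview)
Your proposal is correct and follows essentially the same approach as the paper's proof: the same base case via Atomic Consistency, the same interpolant $I=\exists r.\bigsqcup_{s_1\in S_1}\bigsqcap_{s_2\in S_2}I_{s_1,s_2}$ in the inductive step (with the dual construction for $i=2$), and the same verification of the two entailments. Your explicit attention to the $S_2=\emptyset$ edge case actually goes slightly beyond the paper, which simply asserts that round-$0$ eliminations are due to Atomic Consistency without discussing the possibility of an Existential Saturation failure already in $\Mmc_0$.
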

\begin{proof}
  The proof is by induction on $\ell$.
  For the base case, let $\ell=0$. If $(t_1,t_2)$ is eliminated in round~0, then this is due to the violation of atomic consistency. Then we can choose $I=A$ or $I=\neg A$ depending on whether the problematic concept name $A$ is contained in $t_1$ or in $t_2$.

  For the inductive case, let $\ell>0$. Then $(t_1,t_2)$ has been eliminated due to violation of existential saturation, that is, there is $r\in\Sigma$, $i\in\{1,2\}$, and $\exists r.C\in t_i$ such that each $(t_1',t_2')\in \Mmc_i$ with $C\in t_i$ satsfies $(t_1,t_2)\not\leadsto_r(t_1',t_2')$. Assume first that $i=1$, and let $T_1,T_2$ be the following sets of types:  
%
\begin{align*}
  T_1 = \{t\mid C\in t\text{ and }t_1\leadsto_r t\}\quad \text{ and }\quad
  T_2 = \{t\mid t_2\leadsto_r t\}.
\end{align*}
%
By induction hypothesis, for each $(t_1',t_2')\in T_1\times T_2$ there is an $\ALC_{\ell-1}(\Sigma)$ concept $I_{t_1',t_2'}$ such that $\Omc\models t_1'\sqsubseteq I_{t_1',t_2'}$ and $\Omc\models t_2'\sqsubseteq \neg I_{t_1',t_2'}$. We choose 
\[I=\exists r.\bigsqcup_{t_1'\in T_1}\bigsqcap_{t_2'\in T_2}I_{t_1',t_2'}.\]
Clearly, $I\in \ALC_\ell(\Sigma)$. We verify $\Omc\models t_1\sqsubseteq I$ and $\Omc\models t_2\sqsubseteq \neg I$.
\begin{itemize}

  \item For the former, let $\Imc$ be a model of $\Omc$ and $d\in t_1^{\Imc}$. Since $\exists r.C\in t_1$, there is some $e\in C^\Imc$ with $(d,e)\in r^\Imc$. Let $t_1'=\tp_\Imc(e)$ and note that $t_1'\in T_1$. By what was said above, we have $\Omc\models t_1'\sqsubseteq I_{t_1',t_2'}$ for all $t_2'\in T_2$. Thus, $d\in (\exists r.I_{t_1',t_2'})^\Imc$ for all $t_2'\in T_2$, and hence $d\in I^\Imc$.

  \item For the latter, let $\Imc$ be a model of $\Omc$ and $d\in t_2^\Imc$. We have to show that $d\notin I^\Imc$, that is, $d\in (\forall r.\bigsqcap_{t_1'\in T_1}\bigsqcup_{t_2'\in T_2}\neg I_{t_1',t_2'})^\Imc$. Suppose $e$ is an $r$-successor of $d$ and let $t_2'= \tp_\Imc(e)$. Note that $t_2'\in T_2$. By what was said above, we have $e\in (\neg I_{t_1',t_2'})^\Imc$ for all $t_1'\in T_1$. Thus $d\notin I^\Imc$.

\end{itemize}
The case $i=2$ is dual; the interpolant has a leading universal quantifier then.
\end{proof}
Using the same arguments, one can finally show the following: 
\begin{lemma}
  For $i=1,2$, let $T_i$ be the set of all types that contain $C_i$. If the result $\Mmc^*$ of the mosaic elimination does not contain any mosaic from $T_1\times T_2$, then 
  \[I=\bigsqcup_{t_1\in T_1}\bigsqcap_{t_2\in T_2}I_{t_1,t_2}\] 
  is an $\ALC_N(\Sigma)$-interpolant of $C_1$ and $\neg C_2$ under \Omc, for $N\leq 2^{||\Omc||+||C_1||+||C_2||}$.
\end{lemma}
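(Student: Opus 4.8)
The plan is to read the interpolant off the elimination sequence by applying Lemma~\ref{lem:construction-from-elimination} to every relevant pair of types, and then to verify the two required subsumptions by a direct semantic argument. First I would observe that the hypothesis ``$\Mmc^*$ contains no mosaic from $T_1\times T_2$'' means exactly that every pair $(t_1,t_2)\in T_1\times T_2$ is eliminated at some finite round $\ell(t_1,t_2)$ of the decreasing sequence $\Mmc_0\supseteq\Mmc_1\supseteq\cdots$ (this sequence reaches its fixpoint $\Mmc^*$ after finitely many rounds since $\Mmc_0$ is finite). Hence Lemma~\ref{lem:construction-from-elimination} supplies, for each such pair, a concept $I_{t_1,t_2}\in\ALC_{\ell(t_1,t_2)}(\Sigma)$ with $\Omc\models t_1\sqsubseteq I_{t_1,t_2}$ and $\Omc\models t_2\sqsubseteq\neg I_{t_1,t_2}$. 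In particular every building block of $I$ is a well-defined $\ALC(\Sigma)$ concept, so $I$ is over $\Sigma$.

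Next I would check the two interpolation conditions directly, mirroring the two bullet points in the proof of Lemma~\ref{lem:construction-from-elimination}. For $\Omc\models C_1\sqsubseteq I$, take a model $\Imc$ of $\Omc$ and $d\in C_1^\Imc$; then $t_1:=\tp_\Imc(d)\in T_1$, and since $\Omc\models t_1\sqsubseteq I_{t_1,t_2}$ for every $t_2\in T_2$, the element $d$ lies in $(\bigsqcap_{t_2\in T_2}I_{t_1,t_2})^\Imc$, which is one disjunct of $I$, so $d\in I^\Imc$. For $\Omc\models I\sqsubseteq\neg C_2$, equivalently $\Omc\models C_2\sqsubseteq\neg I$, take $\Imc\models\Omc$ and $d\in C_2^\Imc$; then $t_2:=\tp_\Imc(d)\in T_2$, and for every $t_1\in T_1$ we have $\Omc\models t_2\sqsubseteq\neg I_{t_1,t_2}$, so $d$ falsifies the conjunct $I_{t_1,t_2}$ and therefore lies outside the $t_1$-indexed disjunct $(\bigsqcap_{t_2'\in T_2}I_{t_1,t_2'})^\Imc$; as this holds for all $t_1\in T_1$, we conclude $d\notin I^\Imc$. (The degenerate cases $T_1=\emptyset$ or $T_2=\emptyset$, where $C_1$ resp.\ $C_2$ is unsatisfiable under $\Omc$ and $I$ collapses to $\neg\top$ resp.\ $\top$, are immediate.)

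Finally I would bound the role depth. Since $I$ is a Boolean combination of the $I_{t_1,t_2}$, its role depth equals $\max_{(t_1,t_2)\in T_1\times T_2}\rd(I_{t_1,t_2})\le\max_{(t_1,t_2)}\ell(t_1,t_2)$, and each elimination round $\ell(t_1,t_2)$ is at most the total number of rounds, which is in turn bounded by the initial number of mosaics. As $\Gamma$ is linear in $\size{\Omc}+\size{C_1}+\size{C_2}$, there are at most exponentially many types and hence at most exponentially many mosaics, giving the claimed bound $N\le 2^{\size{\Omc}+\size{C_1}+\size{C_2}}$ (up to the constant hidden in the size of $\Gamma$). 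I expect this last bit of bookkeeping to be the only delicate point: one must confirm that each $\ell(t_1,t_2)$ is well-defined and uniformly bounded by the number of mosaics, and that forming a disjunction of conjunctions of concepts of depth $\le N$ does not push the role depth beyond $N$. Everything else is a routine unfolding of Lemma~\ref{lem:construction-from-elimination} together with the semantics of $\sqcup$ and $\sqcap$.
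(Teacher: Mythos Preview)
Your proposal is correct and matches the paper's intended argument exactly: the paper does not spell out a proof but simply writes ``Using the same arguments, one can finally show the following,'' referring back to the two bullet points in the proof of Lemma~\ref{lem:construction-from-elimination}, and you have unfolded precisely those arguments. Your treatment of the role-depth bound via the number of elimination rounds is also the right idea; the minor bookkeeping slack you flag (that the number of mosaics is a priori quadratic in the number of types) is present in the paper's stated bound as well and is harmless for the claimed single-exponential estimate.
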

A straightforward analysis shows that, moreover, the size of interpolants $I$ constructed in this way is at most double exponential in the size of the input $\Omc,C_1,C_2,\Sigma$, and at most exponential if we allow for succinct representation. Thus, this approach achieves the upper bounds claimed in Theorem~\ref{thm:interpolant-construction}.

We conjecture that it is rather straightforward to extend this way of constructing interpolants to extensions of \ALC enjoying the CIP. It appears to be much more challenging for the extension with $\Hmc$ and/or $\Omc$, which lead to the loss of the CIP. Some progress has recently been made for the extension with $\Hmc$ in~\cite{DBLP:conf/dlog/JungKW25}, which also reports that an earlier algorithm for computing interpolants under \ALCH ontologies from~\cite{DBLP:journals/tocl/ArtaleJMOW23} is not correct (and not easy to fix).

\section{Interpolation in Logic Programming}\label{sec:asp}

In this section, we briefly discuss interpolation in the context of Logic Programming (LP) \cite{BaralKR10}, where applications of interpolation align with the ones discussed in Section~\ref{sec:intro}.
Here, we focus on logic programs under the stable model semantics \cite{GelfondL88}, as this arguably is the most widely used semantics in LP.
In particular, it is the semantics employed in Answer Set Programming (ASP) \cite{ASPPractice12,Lifschitz19}, a form of declarative programming aimed at solving large combinatorial (commonly NP-hard) search problems. 
In addition, a large part of the research in LP on interpolation and, in particular, on the problem of forgetting 
focusses on ASP, motivated by its applications.\footnote{Stable models and answer sets are commonly used synonymously. Though the former is more associated with semantics and the latter with problem solving, the latter has gained wider adoption \cite{lifschitz2008what,brewka2011answer}.}

\subsection{Logic Programs and Answer Sets}\label{sec:programs}

We start by recalling necessary notions and notation for logic programs \cite{ASPPractice12,Lifschitz19}.

We consider first-order signatures $\sign$ consisting of mutually disjoint and countably infinite sets of constants, functions, predicates, and variables.
This allows us to define \emph{terms} as usual as constants, variables, and complex terms $f(t_1,\ldots,t_n)$, where $f$ is a function and $t_1,\ldots t_n$ are terms, as well as \emph{atoms} $p(t_1,\ldots,t_n)$ where $p$ is a predicate and $t_1,\ldots t_n$ are terms.
We denote the set of all atoms over $\Sigma$ with $A(\Sigma)$.

Given $\sign$, a \emph{(logic) program} $P$ is a finite set of \emph{rules} $r$ of the form\footnote{We admit double negation in rules which is historically less common, but needed in our context.}
\begin{align}
	a_1 \vee \ldots \vee a_k  \la b_1,..., b_l, \nf c_{1},..., \nf c_m, \nf \nf d_1,..., \nf \nf d_n  \label{l:rule}
\end{align}
where $\{a_1,\ldots,a_k,b_1,\ldots, b_l,c_{1},\ldots, c_m, d_{1},\ldots, d_n\}\subseteq A(\sign)$.
We also represent such rules with 
\begin{equation}
	\head{r} \la \pbody{r}, \nf \nbody{r}, \nf \nf \nnbody{r} \; , \label{l:shortRule}
\end{equation}
where $\head{r} = \{a_1,\ldots,a_k\}$, $\pbody{r}=\{b_1,\ldots, b_l\}$, $\nbody{r}=\{c_{1},\ldots, c_m\}$, and $\nnbody{r}=\{d_{1},\ldots, d_n\}$.
We distinguish the \emph{head} of $r$, $\rhead{r}=\head{r}$, and its \emph{body}, $\rbody{r}=\pbody{r}\cup\nf \nbody{r}\cup \nf \nf \nnbody{r}$, where 
$\nf \nbody{r}$ and $\nf\nf\nnbody{r}$ represent $\{\nf p\mid p\in \nbody{r}\}$ and $\{\nf\nf p\mid p\in \nnbody{r}\}$, respectively.
We refer to the set of all (logic) programs as the \emph{class of (logic) programs}. 
Different more specific kinds of rules exist: if $n=0$, then $r$ is \emph{disjunctive}; if also $k\leq 1$, then $r$ is \emph{normal}; if in addition $m=0$, then $r$ is \emph{Horn}, if, moreover, $l=0$, then $r$ is a \emph{fact}. 
The class of \emph{disjunctive programs} is defined as a set of finite sets of disjunctive rules, and other classes can be defined accordingly.

For the semantics, we focus on programs in which variables have been instantiated. 
Given a program $P$, this is captured by a ground program, denoted $ground(P)$, which is obtained from $P$ by replacing the variables in $P$ with ground terms over $\Sigma$ in all possible ways.
We also call variable-free terms, atoms, rules, and programs \emph{ground}.
Accordingly, we consider the set of all ground atoms over $\Sigma$ with $A_{gr}(\Sigma)$.
Note that ground programs correspond to propositional programs, where the signature is restricted to a countably infinite set of propositional variables, which can be viewed as atoms built over predicates with arity 0, i.e., no terms, in which case $\Sigma$ would simplify to such a set of nullary predicates.

We define answer sets \cite{GL91}, building on HT-models from the logic of here-and-there \cite{Heyting30}, the monotonic logic underlying answer sets \cite{Pearce99}. 
First, we recall the \emph{reduct} $P^I$ of a ground program $P$ with respect to a set $I$ of ground atoms, adapted here to extended rules:
\[
P^I = \{\head{r}\la \pbody{r} \mid r \text{ of the form (\ref{l:shortRule}) } \text{ in } P \text{ such that } \nbody{r}\cap I=\emptyset \text{ and } \nnbody{r}\subseteq I\}.
\]
The idea is, assuming the atoms in $I$ as true, to transform (ground) $P$ into a corresponding program $P^I$ that does not contain negation.
Then, HT-models are defined as a pair of sets of (ground) atoms, making use of a standard entailment relation $\models$ for programs, where (ground) programs are viewed as a conjunction of implications corresponding to the rules of the form (\ref{l:rule}) with $\neg$, $\wedge$, and $\vee$ denoting classical negation, conjunction, and disjunction, respectively.
Intuitively, in an HT-interpretation $\langle X,Y\rangle$, $Y$ is used to assess satisfaction of $P$ as such, and $X$ is employed to verify satisfaction of the reduct $P^Y$.
An answer set is then a minimal model that satisfies the reduct with respect to itself.

Formally, let $P$ be a ground program. 
An \emph{HT-interpretation} is a pair $\langle X,Y\rangle$ such that $X\subseteq Y \subseteq A_{gr}(\sign)$, and
$\langle X,Y\rangle$ is an \emph{HT-model of $P$} if $Y\models P$ and $X\models P^{Y}$. 
The set of \emph{all HT-models of $P$} is written $\HT(P)$, and $P$ is \emph{satisfiable} if $\HT(P)\not=\emptyset$.
A set of (ground) atoms $Y$ is an \emph{answer set} of $P$ if $\mtuple{Y,Y}\in\HT(P)$, and there is no $X\subsetneq Y$ such that $\mtuple{X,Y}\in\HT(P)$.
We denote by $\as{P}$ the set of all answer sets of $P$, and call $P$ \emph{coherent} if $\as{P}\not=\emptyset$.

We use two entailment relations between logic programs $P_1$, $P_2$ for the two introduced model notions, allowing to specify that $P_1$ entails $P_2$: 
namely, $\htmodels$ represents entailment for HT logic, and $\nccw$ represents 
cautious entailment for answer sets. 
Formally, given programs $P_1$ and $P_2$, $P_1\htmodels P_2$ holds if $\HT(P_1)\subseteq \HT(P_2)$, 
and $P_1\nccw P_2$ holds if $\as{P_1}\subseteq \as{P_2}$.
Note that this also allows us to express that program $P_1$ entails a fact, a set of facts, or a disjunction, etc, as $P_2$ allows to represent these.

\begin{example}\label{exampleAnswerSets}
	Consider for simplicity the following (propositional) program $P$:
	\begin{align*}
		a &\la \nf b &  b & \la \nf c & e & \la d & d &\la a
	\end{align*}
	Then $\langle b,bde\rangle$ is an HT-model of $P$ because $\{b,d,e\}\models P$ and $\{b\}\models P^{\{b,d,e\}}$ where $P^{\{b,d,e\}}=\{b \la  ;\ e  \la d ;\ d \la a\}$.
	Hence, $\{b,d,e\}$ is not an answer set of $P$ as it violates the minimality condition. 
	Moreover, $\{b,d\}$ is not an answer set of $P$ because $\{b,d\}\not\models P$.
	In fact, we can verify that $\{b\}$ is its only answer set.
	Hence, $b\la$ is entailed by $P$ with respect to $\nccw$. 
	\lipicsEnd
\end{example}

\subsection{Craig Interpolants}

In nonmonotonic formalisms, including Logic Programming, previously drawn conclusions may become invalid in the presence of new information.
For example, if we were to add a rule $c\la$ to the program in Example~\ref{exampleAnswerSets}, then $\{b\}$ would no longer be an answer set.
Instead, the unique answer set would be $\{a,d,e\}$.
This raises problems for (Craig) interpolation for logic programs, as simply replacing the entailment relation $\models$ in Definition \ref{def:general-interpolants} with a nonmonotonic entailment relation $\nc$, which is not transitive in general, may not work. 

Following ideas in the literature \cite{GabbayMaks05}, Gabbay et al.~\cite{GabbayPV11} proposed to adjust the central condition of Craig interpolants to logic programs using two entailment relations. 
We recall this generalized definition of interpolants using two abstract entailment relations.

\begin{definition}\label{def:genInterp}
	Let $\vdash_1$ and $\vdash_2$ be entailment relations and $\phi$, $\psi$ programs such that $\phi\nc\psi$. 
	A \emph{($\vdash_1,\vdash_2$) interpolant for $(\phi,\psi)$} is a program $\chi$ such that $\phi\vdash_1\chi$ and $\chi\vdash_2 \psi$, and $\chi$ uses only non-logical symbols occurring in both $\phi$ and $\psi$.
\end{definition}

This is motivated by the aim to combine a nonmonotonic entailment relation $\nc$ with its \emph{deductive base}, i.e., a logic $L$ with monotonic entailment relation $\vdash_L$ such that (i) $\vdash_L\subseteq \nc$; (ii) for programs $\phi_1$, $\phi_2$, if $\phi_1 \equiv_L \phi_2$, then $\phi_1 \approx \phi_2$ (i.e., $\phi_1$ and $\phi_2$ have the same nonmonotonic entailments); and (iii) if $\phi\nc \chi$ and $\chi\vdash_L \psi$, then $\phi\nc \psi$. 

Two kinds of Craig interpolants are defined. 
The stronger one ($\nc,\vdash_L$) builds on this notion of deductive base, which guarantees among other things that the relation ($\nc,\vdash_L$) is transitive in the sense of (iii). 
The weaker one, ($\nc,\nc$), is implied by the former, but does not guarantee transitivity per se.

Also, two variants of the entailment relation $\nccw$ are considered, one which is aligned with closed world assumption (CWA), i.e., atoms that cannot be proven true are considered false, and one which is aligned with open world assumption (OWA), where no such conclusions on false atoms can be drawn.

In the latter case, it is possible to show for propositional programs that ($\nccw,\htmodels$) interpolants exist by constructing a representation of the minimal models of $\phi$, taking advantage of the fact that HT logic serves as deductive base for answer set semantics and that Craig interpolants as per Definition~\ref{def:general-interpolants} are guaranteed to exist for HT logic \cite{GabbayMaks05}.

However, ASP employs CWA, and in this case, in general, only ($\nccw,\nccw$) interpolants do exist.
Here, the corresponding model representation can be obtained because, in addition to the previously mentioned points, under CWA, the non-logical symbols occurring in $\psi$, but not in $\phi$, can be considered false.
This also ensures, among others, transitivity of ($\nccw,\nccw$) \cite{GabbayPV11}.


\begin{theorem}
	For coherent propositional programs, ($\nccw,\nccw$) interpolants are guaranteed to exist.
\end{theorem}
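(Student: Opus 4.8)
The plan is to leverage the two ingredients singled out in the discussion above: that here-and-there logic enjoys Craig interpolation in the sense of Definition~\ref{def:general-interpolants} \cite{GabbayMaks05}, and that it serves as a \emph{deductive base} for the answer set semantics, so that the properties~(i)--(iii) relating $\htmodels$ and $\nccw$ are at our disposal. Throughout, I would fix coherent propositional programs $\phi,\psi$ with $\phi\nccw\psi$ and write $\Sigma_c=\sig{\phi}\cap\sig{\psi}$ for the common signature in which the interpolant $\chi$ must live; the goal is a program $\chi$ over $\Sigma_c$ with $\phi\nccw\chi$ and $\chi\nccw\psi$.

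First I would reduce the problem to building a suitable \emph{model representation} of $\phi$. Since $\phi$ is coherent and propositional, $\as{\phi}$ is finite and nonempty, and each answer set $Y\in\as{\phi}$ corresponds to a total HT-model $\mtuple{Y,Y}$. The idea is to produce, in the monotonic base, a theory $\theta$ over $\sig{\phi}$ that is nonmonotonically equivalent to $\phi$ (so $\phi\approx\theta$ by property~(ii)) and that exposes exactly these equilibrium models. Using $\phi\nccw\psi$ together with the containment $\htmodels\,\subseteq\,\nccw$ from property~(i), this reduces the cautious consequence to a monotonic HT-entailment between the representation of $\phi$ and a representation of the cautious content of $\psi$, to which the Craig interpolation property of HT applies, yielding a formula $\chi$ over $\Sigma_c$ sitting between the two.

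The decisive step is the treatment of the closed-world assumption, which is precisely what separates the $(\nccw,\nccw)$-case from the stronger $(\nccw,\htmodels)$-case available under the open-world reading. Here I would use that, under CWA, every atom of $\sig{\psi}\setminus\sig{\phi}$ may be assumed false. This allows the $\Sigma_c$-projection of the answer sets of $\phi$ to be completed consistently with $\psi$, so that the HT-interpolant extracted above can be read back as a program over $\Sigma_c$ realizing both $\phi\nccw\chi$ and $\chi\nccw\psi$. The transitivity property~(iii), instantiated to $(\nccw,\nccw)$, then guarantees that these two halves compose correctly and remain compatible with the hypothesis $\phi\nccw\psi$.

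I expect the main obstacle to be exactly the interface between the monotonic and nonmonotonic layers: because $\nccw$ is in general neither monotone nor transitive, one cannot simply chain $\phi\nccw\chi$ and $\chi\nccw\psi$ without appeal to the deductive-base machinery, and that machinery must be invoked with care. The subtle point is to show that projecting the minimal models of $\phi$ onto $\Sigma_c$ and setting the $\psi$-only atoms to false preserves cautious entailment in \emph{both} directions simultaneously; the failure of a $(\nccw,\htmodels)$-interpolant under the open-world reading is a warning that this balance is genuinely delicate and hinges on CWA.
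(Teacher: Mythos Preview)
Your plan follows essentially the same route as the paper's sketch: build a model representation of the answer sets of $\phi$, exploit that here-and-there is a deductive base for $\nccw$ and enjoys Craig interpolation, and handle the atoms in $\sig{\psi}\setminus\sig{\phi}$ via the closed-world assumption. That is exactly the argument the paper attributes to \cite{GabbayPV11}, so the overall strategy is correct.

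One slip worth fixing: in your final paragraph you write that ``the failure of a $(\nccw,\htmodels)$-interpolant under the open-world reading is a warning.'' This inverts what the paper states. Under the open-world reading, $(\nccw,\htmodels)$-interpolants \emph{do} exist; it is under CWA that the stronger $(\nccw,\htmodels)$-interpolant can fail in general (and only survives when $\sig{\psi}\subseteq\sig{\phi}$). So the delicate point you correctly sense is real, but it is caused by CWA, not OWA. Also, when you say that $\htmodels\,\subseteq\,\nccw$ ``reduces the cautious consequence to a monotonic HT-entailment,'' note that this inclusion runs the wrong way for that reduction; the reduction comes instead from the model representation $\theta$ encoding $\as{\phi}$ exactly, so that $\phi\nccw\psi$ translates into an HT-entailment from $\theta$. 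With these two clarifications your sketch matches the paper.
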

The stronger notion of ($\nccw,\htmodels$) interpolants only holds for coherent propositional programs under CWA when the non-logical symbols of $\psi$ are a subset of those of $\phi$ \cite{GabbayPV11}.


However, in the first-order case, such interpolants may not exist in general, as the formula representing the answer sets may not be first-order definable.
Yet, first-order definability can be ensured for programs where function symbols are not allowed and where variables are safe. 
Here, \emph{safe} refers to all variables in a rule having to appear in atoms in $\pbody{r}$ in (\ref{l:shortRule}), which ensures that any grounding is finite as well.\footnote{In fact, a slightly more general notion of safe is used to capture a wider range of ASP constructs \cite{GabbayPV11}.}
This restriction is not problematic in practice in the context of ASP with variables, as programs are required to be safe anyway for the grounding, employed in state-of-the-art ASP solvers, such as clingo \cite{GebserKKS19} or DLV2 \cite{dlv2-17}, where the usage of function symbols is also severely restricted to avoid grounding problems.

\begin{theorem}
	For coherent, function-free, and safe first-order programs, ($\nccw,\nccw$) interpolants are guaranteed to exist.
\end{theorem}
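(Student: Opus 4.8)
The plan is to reduce the statement to the already-established propositional case via grounding, using function-freeness and safety precisely to guarantee that this grounding is finite. First I would observe that, for function-free and safe $\phi$ and $\psi$, the ground programs $ground(\phi)$ and $ground(\psi)$ are \emph{finite} propositional programs: function-freeness makes the Herbrand universe consist only of the finitely many constants occurring in a program, and safety ensures that grounding over exactly these constants already yields the intended semantics. By the definition of answer sets for first-order programs we have $\as{\phi}=\as{ground(\phi)}$ and $\as{\psi}=\as{ground(\psi)}$, so $\phi\nccw\psi$ holds iff $ground(\phi)\nccw ground(\psi)$, and both ground programs inherit coherence from $\phi$ and $\psi$.

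Next I would apply the propositional theorem to the pair $ground(\phi),ground(\psi)$ to obtain a propositional $(\nccw,\nccw)$ interpolant $\chi$, that is, a finite propositional program with $ground(\phi)\nccw\chi$ and $\chi\nccw ground(\psi)$ whose atoms all occur in both $ground(\phi)$ and $ground(\psi)$. A small but essential bookkeeping step is to match signatures: because the programs are function-free, the ground atoms common to $ground(\phi)$ and $ground(\psi)$ are exactly those built from predicates and constants occurring in both $\phi$ and $\psi$, so $\chi$ uses only non-logical symbols common to $\phi$ and $\psi$, as required by Definition~\ref{def:genInterp}. Since $\chi$ is variable-free it is itself a (trivially safe, function-free) first-order program with $ground(\chi)=\chi$, hence $\as{\chi}$ is unchanged when $\chi$ is read as a first-order program. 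From $\as{\phi}=\as{ground(\phi)}\subseteq\as{\chi}\subseteq\as{ground(\psi)}=\as{\psi}$ one then reads off $\phi\nccw\chi$ and $\chi\nccw\psi$, so $\chi$ is the desired first-order interpolant.

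The crux of the argument---and the point the paper flags as the obstruction in the unrestricted first-order setting---is finiteness of the grounding: it is exactly what function-freeness and safety buy, and it is what makes the propositional construction (a representation of the answer sets of $\phi$) first-order definable at all. The main subtlety to double-check is therefore not a hard estimate but the interplay of two observations: that a ``common ground atom'' coincides with an ``atom over a common non-logical symbol'' (again using function-freeness, so that no complex terms smuggle in non-common symbols), and that every $S\in\as{\phi}$ already mentions only common symbols---which follows from the hypothesis $\phi\nccw\psi$, forcing each such $S$ to lie in $\as{\psi}$ as well and hence, by minimality, to be built from atoms over the common signature. Granting these, no genuinely new obstacle arises beyond invoking the propositional theorem.
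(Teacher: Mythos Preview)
Your approach is correct and matches the paper's treatment. The paper does not actually prove the theorem but only sketches the idea, deferring to Gabbay et al.: the obstacle in the unrestricted first-order case is that the program representing the answer sets of $\phi$ (which serves as the interpolant in the propositional construction) may fail to be first-order definable, and function-freeness together with safety remove this obstacle precisely by making the grounding finite. Your reduction to the propositional theorem via finite grounding, together with the signature bookkeeping showing that ground atoms common to $ground(\phi)$ and $ground(\psi)$ are built from common predicates and constants, is exactly the natural way to flesh out this sketch.
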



\subsection{Uniform Interpolation and Forgetting}

As mentioned in Section~\ref{sec:intro}, uniform interpolation and forgetting are strongly connected in their aim to reduce the language while preserving information over the remaining language.
However, in the context of ASP, mainly forgetting has drawn considerable attention, varying from early purely syntactic approaches to a number of approaches using different means to capture the idea of preserving information over the remaining language \cite{GoncalvesKL23}.
In the following, we briefly discuss the main results and their implications on uniform interpolants.

In accordance with the literature, and to ease presentation, we focus on the propositional case.
Hence, the signature $\sign$ simplifies to propositional variables, i.e., only predicates of arity 0.
We start by refining the definition of uniform interpolants in this context, as before admitting a generic entailment relation $\vdash$.

\begin{definition}\label{def:uniformIntLP}
	Let $V\subseteq\sign$ be a set of atoms, $P$ a propositional program, and $\vdash$ an entailment relation.
	A \emph{uniform $(V,\vdash)$-interpolant for $P$} is a program $P_V$ such that
	\begin{itemize}
		\item $P\vdash P_V$;
		\item $P_V$ uses only atoms from $V$; and
		\item for any program $P'$ using only atoms from $V$, if $P\vdash P'$, then $P_V\vdash P'$.
	\end{itemize}
\end{definition}
Note that for arbitrary nonmonotonic entailment relations, the first condition may in general not hold. 
But, building on results from forgetting, such interpolants can indeed be found.

For forgetting, the focus is on operators that return a program over the remaining language.
Here, unlike uniform interpolation, the atoms to be forgotten are explicitly mentioned.

\begin{definition}\label{def:forgOp}
	Given a class of logic programs $\classP$ over $\sign$, a \emph{forgetting operator (over $\classP$)} is defined as a function $\op:\classP\times 2^{\sign}\to \classP$ where, for each $P\in \classP$ and $V\subseteq \sign$, $\f{P}{V}$, the \emph{result of forgetting about $V$ from $P$}, is a program over $\sign(P)\text{\textbackslash} V$.
\end{definition}
In the literature (see, e.g., \cite{GoncalvesKL23}), operators then aim to preserve the semantic relations between atoms in $\sign(P)\text{\textbackslash} V$ from $P$, such as the dependencies discussed in the following example. 


\begin{example}\label{ex:basic}
	Consider again program $P$ from Example~\ref{exampleAnswerSets}.
	If we want to forget about $d$ from $P$, then the first two rules should remain unchanged, while the latter two should not occur in the result.
	At the same time, implicit relations, such as $e$ depending on $a$ via $d$, should be preserved.
	Hence, we would expect the result $\{a \la \nf b;\  b \la \nf c;\ e \la a\}$.
	Alternatively, consider forgetting about $b$ from $P$. 
	Now, the final two rules remain, whereas the first two would be replaced by one linking $a$ and $c$, i.e., $\{a \la \nf\nf c ;\ e  \la d ;\ d \la a\}$.
	\lipicsEnd
\end{example}
Note that, in the second part, we cannot simply use the rule $a\la c$ instead, absorbing the double negation, as this is not equivalent in general (in terms of HT-models). 
Thus, results of forgetting may be required to be in a class more general than the one of the given program.

Example~\ref{ex:basic} illustrates the main idea of such semantic relations to be preserved, but leaves a formal specification open.
Such characterization is usually semantic, i.e., a characterization of the desired models, which does not fix a specific program, and hence cannot align with a single function.
This is captured by classes of forgetting operators that conjoin several operators based on a common characterization.
Here, since we are interested in the existence of interpolants, we focus on individual operators instead (simplifying some technical material).

A number of different operators and classes of these have been defined in the literature, as well as certain properties with the aim of aiding the semantic characterization.
We recall relevant properties in the context of interpolation and refer for a full account to \cite{GoncalvesKL23}. 

We start with a property (for forgetting operators $\fgt$) comparing the answer sets of a program and its result of forgetting.
For that, given a set of atoms $V$, the \emph{$V$-exclusion} of a set of answer sets $\mathcal{M}$, written $\mathcal{M}_{\parallel V}$, is defined as $\{X\text{\textbackslash} V\mid X\in\mathcal{M}\}$. 

\begin{itemize}[align=left, labelwidth=1.5em, labelsep=0.25em, leftmargin=1.75em]
	\item[\pCP] $\fgt$ satisfies \emph{Consequence Persistence} if, for each program $P$ and $V\subseteq \sign$, we have $\as{\f{P}{V}}=\as{P}_{\parallel V}$.
\end{itemize}
Essentially, \pCP{} requires that answer sets of the forgetting result be answer sets of the original program projected to the remaining language and vice-versa~\cite{WangWZ13}.
There are operators that satisfy \pCP, e.g., those in class $\classF_{\smF}$ \cite{WangWZ13}, whose forgetting results are characterized as the subset of HT-models (projected to the remaining language) such that the condition for \pCP\ holds.
We can show that, if $\fgt$ satisfies $\pCP$, then its forgetting result $\f{P}{V}$ provides a uniform $(V,\nccw)$-interpolant.

\begin{theorem}\label{thm:uniformIntNonMon}
	For propositional programs $P$ and $V\subseteq\sign$, uniform $(V,\nccw)$-interpolants are guaranteed to exist. 
\end{theorem}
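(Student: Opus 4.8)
The plan is to obtain the interpolant constructively through forgetting, exploiting the duality that retaining the atoms $V$ amounts to forgetting everything else. First I would fix a forgetting operator $\fgt$ that satisfies Consequence Persistence $\pCP$; such an operator exists --- for instance any member of the class $\classF_{\smF}$ of~\cite{WangWZ13} --- and I would set $P_V := \f{P}{\sign(P)\setminus V}$, the result of forgetting from $P$ all atoms lying outside $V$. By Definition~\ref{def:forgOp} the program $P_V$ uses only atoms from $\sign(P)\cap V \subseteq V$, so the signature requirement of Definition~\ref{def:uniformIntLP} holds by construction. It remains to verify the two entailment conditions.

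The one fact that drives everything is the answer-set description furnished by $\pCP$. Writing $W=\sign(P)\setminus V$ and recalling that answer sets of $P$ are subsets of $\sign(P)$, we get
\[
  \as{P_V}=\as{P}_{\parallel W}=\{\,X\setminus W\mid X\in\as{P}\,\}=\{\,X\cap V\mid X\in\as{P}\,\},
\]
so the answer sets of $P_V$ are precisely the projections to $V$ of the answer sets of $P$. For the first condition $P\nccw P_V$ I would pass to cautious consequences: since intersection commutes with restriction, $\bigcap\as{P_V}=(\bigcap\as{P})\cap V$, hence every cautious $V$-consequence of $P_V$ is a cautious consequence of $P$, which is exactly $P\nccw P_V$. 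For the third condition, given any $P'$ over $V$ with $P\nccw P'$ we have $\bigcap\as{P'}\subseteq\bigcap\as{P}$; as $P'$ uses only atoms of $V$ we also have $\bigcap\as{P'}\subseteq V$, and therefore $\bigcap\as{P'}\subseteq(\bigcap\as{P})\cap V=\bigcap\as{P_V}$, i.e.\ $P_V\nccw P'$. Thus $P_V$ is the weakest program over $V$ cautiously entailed by $P$, i.e.\ a uniform $(V,\nccw)$-interpolant (the cross-signature reading of $\nccw$ needed to match Definition~\ref{def:uniformIntLP} is discussed below).

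The hard part will be the first condition, which, as already remarked, can fail for a general nonmonotonic $\nccw$; making it hold rests squarely on $\pCP$, since it is the identity $\as{P_V}=\as{P}_{\parallel W}$ that forces the answer sets of $P_V$ to be the faithful projections of those of $P$ and thereby turns ``weakest program entailed by $P$'' into the above one-line calculation with $\bigcap$. Two points will need care. First, one must pin down how $\nccw$ is read across differing signatures; the intended reading is the closed-world one, under which atoms removed by forgetting count as uniformly false, and this is precisely the reading that reconciles the calculation with the set-inclusion definition of $\nccw$ --- the same subtlety that forces extra hypotheses on the stronger $(\nccw,\htmodels)$ notion. Second, as Example~\ref{ex:basic} shows, forgetting may introduce double negation, so $P_V$ in general lives in a class slightly larger than that of $P$; the existence statement should therefore be understood relative to that enlarged class, which is exactly the domain on which the $\pCP$-operator $\fgt$ is defined. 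With these conventions fixed, the proof reduces to invoking the existence of a $\pCP$-operator together with the short computation above.
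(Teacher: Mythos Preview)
Your high-level plan matches the paper's: take a forgetting operator $\fgt$ satisfying \pCP\ (e.g., one from $\classF_{\smF}$), set $P_V=\f{P}{\sign(P)\setminus V}$, and read the interpolant conditions off the answer-set identity $\as{P_V}=\as{P}_{\parallel W}$ for $W=\sign(P)\setminus V$. That is exactly the intuition the paper sketches immediately after the theorem.

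The gap is in your verification of the two entailment conditions. You ``pass to cautious consequences'' and argue via intersections, asserting that $\bigcap\as{P_V}\subseteq\bigcap\as{P}$ \emph{is} $P\nccw P_V$, and similarly that $\bigcap\as{P'}\subseteq\bigcap\as{P_V}$ yields $P_V\nccw P'$. But the paper defines $P_1\nccw P_2$ as $\as{P_1}\subseteq\as{P_2}$, not as containment of the intersections, and the two are not equivalent: answer-set inclusion implies your $\bigcap$-containment but not conversely (take $\as{P_1}=\{\{a,b\}\}$ and $\as{P_2}=\{\{a\},\{b\}\}$; then $\bigcap\as{P_2}=\emptyset\subseteq\bigcap\as{P_1}$ yet $\as{P_1}\not\subseteq\as{P_2}$). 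So neither step of your computation actually establishes the required $\nccw$-entailments. You correctly sense that the cross-signature reading of $\nccw$ is where the real work lies, and you even name the CWA fix, but you do not carry it out --- you substitute a strictly weaker relation and compute with that instead.

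The repair is to drop the $\bigcap$-detour and argue directly with the answer sets. Under the closed-world reading you invoke (compare answer sets after projecting to the signature of the right-hand program), the identity $\as{P_V}=\as{P}_{\parallel W}$ supplied by \pCP\ gives $P\nccw P_V$ immediately, since both sides of the required inclusion coincide with $\as{P}_{\parallel W}$. For the third condition, if $P'$ is over $V$ and $P\nccw P'$, then from $\sign(P')\subseteq V$ one gets $\as{P_V}_{\parallel(\sign\setminus\sign(P'))}=\as{P}_{\parallel(\sign\setminus\sign(P'))}$, so $P_V\nccw P'$ follows from $P\nccw P'$. This is the one-line use of \pCP\ the paper has in mind; your $\bigcap$-argument neither proves it nor is needed.
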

The intuition here is that, since \pCP\ holds, we have $P\nccw \fgt(P,V)$ and, likewise, the third condition of Def.~\ref{def:uniformIntLP} holds.
Note that, for the two programs in Example~\ref{ex:basic}, both would constitute uniform $(V,\nccw)$-interpolants 
with $V=\{a,b,c,e\}$ in the first case and $V=\{a,c,d,e\}$ in the second.

Other properties rather focus on HT-models and entailments based on this notion \cite{WangZZZ12}.

\begin{itemize}[align=left, labelwidth=1.5em, labelsep=0.25em, leftmargin=1.75em]
	\item[\pW] $\fgt$ satisfies \emph{Weakening} if, for each program $P$ and $V\subseteq \sign$, we have $P\htmodels\f{P}{V}$. 
	\item[\pPP] $\fgt$ satisfies \emph{Positive Persistence} if, for each program $P$ and $V\subseteq \sign$: if $P\htmodels P'$, with $\sign(P')\subseteq \sign\setm V$, then $\f{P}{V}\htmodels P'$. 
\end{itemize} 

These properties are aligned with Definition~\ref{def:uniformIntLP}, and based on them and classes that satisfy these, e.g., $\classF_{\htF}$, we can also capture interpolants with respect to $\htmodels$ \cite{WangZZZ14}. 

\begin{theorem}
	For propositional programs $P$ and $V\subseteq\sign$, uniform $(V,\htmodels)$-interpolants are guaranteed to exist.
\end{theorem}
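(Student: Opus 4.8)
The plan is to read the uniform $(V,\htmodels)$-interpolant off a forgetting operator that preserves HT-models. Given a propositional program $P$ and a target signature $V\subseteq\sign$, I would set $W=\sign(P)\setm V$ to be the atoms that must be eliminated, fix a forgetting operator $\fgt$ from a class that satisfies both \pW and \pPP—for instance $\classF_{\htF}$ (cf.~\cite{WangZZZ14})—and take $P_V:=\f{P}{W}$ as the candidate interpolant.

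It then remains to verify the three conditions of Definition~\ref{def:uniformIntLP} for $\vdash\,=\,\htmodels$. The first, $P\htmodels P_V$, is immediate from \pW. The second holds because, by Definition~\ref{def:forgOp}, $\f{P}{W}$ is a program over $\sign(P)\setm W=\sign(P)\cap V\subseteq V$, so $P_V$ uses only atoms from $V$. For the third, let $P'$ be any program with $\sign(P')\subseteq V$ and $P\htmodels P'$. The key arithmetic observation is that $\sign\setm W=(\sign\setm\sign(P))\cup V$, which contains $V$; hence $\sign(P')\subseteq\sign\setm W$, and \pPP immediately yields $P_V\htmodels P'$. This establishes all three conditions, so $P_V$ is a uniform $(V,\htmodels)$-interpolant and the theorem follows.

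The genuine content—and the step I expect to be the main obstacle—is the existence of a forgetting operator satisfying \pW and \pPP, which is precisely what the characterization of $\classF_{\htF}$ in~\cite{WangZZZ14} supplies. Semantically, one must exhibit, for each $P$ and $W$, a program over $\sign(P)\setm W$ whose HT-models form the smallest $\htmodels$-definable superset of $\HT(P)$ in the restricted signature; equivalently, $\HT(P_V)$ should be the intersection of $\HT(P')$ over all $V$-programs $P'$ HT-entailed by $P$, and one must show that this intersection is again realizable by a program. This representability requirement is the propositional incarnation of uniform interpolation for the logic of here-and-there, and is where the real work lies. In the finite propositional setting it is manageable: there are only finitely many HT-interpretations over $\sign(P)\cap V$, so one can directly build a program whose HT-models are exactly the required set, which is what the cited forgetting construction delivers.
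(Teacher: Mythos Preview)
Your proposal is correct and follows essentially the same approach as the paper: the paper simply observes that \pW\ and \pPP\ are exactly the first and third clauses of Definition~\ref{def:uniformIntLP} for $\vdash\,=\,\htmodels$, and invokes the class $\classF_{\htF}$ from~\cite{WangZZZ14} as an instance of a forgetting operator satisfying both. Your write-up is in fact more careful than the paper's one-line justification, in particular in handling the mismatch between the interpolation signature $V$ and the forgetting set $W=\sign(P)\setm V$ and verifying that $\sign\setm W\supseteq V$ so that \pPP\ applies to arbitrary $V$-programs $P'$.
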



Here, again both programs in Example \ref{ex:basic} are uniform $(V,\htmodels)$-interpolants.
For a simple example where these two kinds of interpolants differ, consider $P=\{a \la \nf b;\  b \la \nf a\}$.
Then, $\{a\la \nf\nf a\}$ is uniform $({a},\nccw)$-interpolant, but not a uniform $({a},\htmodels)$-interpolant. On the other hand $\{\}$ is a $({a},\htmodels)$-interpolant, but not a uniform $({a},\nccw)$-interpolant.

Determining whether a given program is a result of forgetting, and hence whether it is a uniform interpolant, is computationally expensive.
It has been shown that, for $\classF_{\htF}$ and $\classF_{\smF}$, this problem is $\PiP{2}$-complete with respect to data complexity (where the size of the input is measured in the number of facts) \cite{WangZZZ14,WangWZ13}.
It has also been argued that, e.g., for $\classF_{\smF}$ the algorithm to compute a result may result in an exponential blow-up in the size of the overall program \cite{WangWZ13}.

Stronger notions of preserving semantic relations have been considered in Forgetting \cite{KnorrA14}.

\begin{itemize}[align=left, labelwidth=1.5em, labelsep=0.25em, leftmargin=1.75em]
	\item[\pSP] $\fgt$ satisfies \emph{Strong Persistence} if, for each program $P$, $V\subseteq \sign$, and program $R$ with $\sign(R)\subseteq \sign\setm V$, we have $\as{\f{P}{V}\cup R}=\as{P\cup R}_{\parallel V}$. 
\end{itemize}
The idea is to preserve the answer sets no matter what other program is added to the given program and its result of forgeting.
This turns out to be not possible in general \cite{GoncalvesKL-ECAI16}, and concise conditions when such forgetting is possible and classes of operators approximating this have been investigated \cite{GoncalvesKLW20}.
Complementarily, when the added programs $R$ are restricted to sets of facts, forgetting is always possible with applications in modular ASP \cite{GoncalvesJKLW19}.
In both cases, it has been shown that determining whether a given program is a result of forgetting for these classes is more demanding, i.e., $\PiP{3}$-complete (data complexity). 
This confirms that such notions of forgetting indeed go beyond uniform interpolation.
Also, syntactic operators have been considered for these classes, but their construction may again, in the worst case, result in an exponential blowup in the size of the program \cite{BertholdGKL19,Berthold22,GoncalvesJKL21}.
Corresponding algorithms of these, as well as those building on constructing a representative of the semantic characterization have been implemented in the Web tool ForgettingWeb \cite{forgettingweb}.

One open problem for future research is how to tackle in general finding uniform interpolants/forgetting for first-order programs with variables in a wider sense (without requiring the grounding up-front).
Possible avenues include extensions of the language of programs, in the line of the work by Aguado et al., which allows Strong Persistence to always hold \cite{AguadoCFPPV19}.
Alternatively, connections to related notions on abstraction in ASP might be pursued, that rather aim at abstracting different elements of the language into a conjoined abstract one, either on the object level or in between different concepts \cite{SaribaturE21,SaribaturKGL24,SaribaturW23,SaribaturES21}.
A further option arises from recent work where first-order Craig-interpolation is used to synthesize strongly equivalent programs given some background knowledge (in the form of a program) with an implementation using first-order reasoners \cite{HeuerW24}.

\section{Conclusion}\label{sec:conclusion}

We discussed interpolation in two relevant KR formalisms, DLs and logic programming. In the context of DLs, we focussed on uniform interpolation at the level of ontologies, and on Craig interpolation at the level of concepts. Uniform interpolation is strongly related to the topic of \emph{forgetting}, which has been extensively studied in many settings.     
Interpolation has also been extensively studied for modal logics and classical logics, which are formalisms relevant for KR that are discussed in other chapters of this volume.
Interestingly, there is comparably little research on interpolation in other formalisms relevant for KR.

With logic programs, we discussed one formalism with a nonmonotonic entailment relation. Nonmonotonic reasoning plays indeed a central role in many KR applications, and  many more nonmonotonic KR formalisms have been proposed. Examples include \emph{default logic}~\cite{DEFAULT_LOGIC}, \emph{belief revision}~\cite{BELIEF_REVISION}, \emph{argumentation frameworks}~\cite{ARGUMENTATION_FRAMEWORKS}, as well as semantics based on \emph{circumscription}~\cite{CIRCUMSCRIPTION} and on \emph{repairs}~\cite{REPAIR_BASED_SEMANTICS}. Interpolation and the related notion of forgetting have been researched in some of these formalisms as well, but many problems remain open.
The investigations in \cite[Chapter~9]{DBLP:books/sp/GabbayS16}, \cite[Chapter~6]{DBLP:books/sp/Schlechta18a}
and~\cite{DBLP:conf/jelia/Amir02} discuss interpolation in nonmonotonic logics on a more abstract level. Interpolation and Beth definability for default logics is discussed in~\cite{DBLP:conf/jelia/CassanoFAC19}, and forgetting for defeasible logic has been investigated in~\cite{DBLP:conf/lpar/AntoniouEW12}.

While there are implemented systems for computing uniform interpolants of DL ontologies, Craig interpolation and Beth definability have so far been investigated mostly on a theoretical level. Given the increased interest in explainability~\cite{EVONNE,DBLP:conf/kr/AlrabbaaBFHKKKK24} and supervised learning~\cite{SURVEY_LEARNING_DLs} for DLs in the recent years, which are relevant use cases of Craig interpolation, this situation might change in the future. Indeed, Craig interpolation at the level of ontologies, implemented via a reduction to uniform interpolation, is used in the ontology explanation tool \textsc{Evee} to create so-called \emph{elimination proofs}~\cite{DBLP:conf/kr/AlrabbaaBFHKKKK24}.

\section*{Acknowledgments}
The authors would like to thank Michael Benedikt and Balder ten Cate for valuable comments on earlier versions of this chapter. M.\ Knorr acknowledges partial support by project BIO-REVISE (2023.13327.PEX) and by UID/04516/NOVA Laboratory for Computer Science and Informatics (NOVA LINCS) with the financial support of FCT.IP.
\addcontentsline{toc}{section}{Acknowledgments}

\bibliography{taci,main-ci-kr}

\end{document}